\documentclass{article}
\PassOptionsToPackage{numbers,compress}{natbib}

\usepackage[preprint]{neurips_2026}

\usepackage[utf8]{inputenc} 
\usepackage[T1]{fontenc}    
\usepackage{hyperref}       
\usepackage{url}            
\usepackage{booktabs}       
\usepackage{amsfonts}       
\usepackage{nicefrac}       
\usepackage{microtype}      
\usepackage{xcolor}         
\usepackage{graphicx}
\usepackage{amsthm}
\usepackage{amsmath}
\usepackage{subcaption}
\usepackage{multirow} 
\usepackage{diagbox}
\usepackage{wrapfig}
\usepackage[table]{xcolor}
\usepackage{algorithm}
\usepackage{algorithmic} 
\newtheorem{definition}{Definition}
\newtheorem{theorem}{Theorem}
\usepackage{wrapfig}

\newtheorem{lemma}{Lemma}
\newtheorem{proposition}{Proposition}

\definecolor{rowgray}{RGB}{245,245,245}       
\definecolor{arrowgray}{RGB}{128,128,128}     
\definecolor{spiderblue}{RGB}{204, 247, 255}  
\definecolor{highlightyellow}{RGB}{247, 230, 169} 

\definecolor{finalrowgray}{RGB}{230, 230, 230} 

\newcommand{\BestData}[2]{\textbf{#1$_{\pm #2}$}}

\newcommand{\UnderlineDataFull}[2]{\underline{#1$_{\pm #2}$}}

\newcommand{\Data}[2]{#1$_{\pm #2}$}

\title{HeterSEED: Semantics–Structure Decoupling for Heterogeneous Graph Learning under Heterophily}

\author{%
  David S.~Hippocampus\thanks{Use footnote for providing further information
    about author (webpage, alternative address)---\emph{not} for acknowledging
    funding agencies.} \\
  Department of Computer Science\\
  Cranberry-Lemon University\\
  Pittsburgh, PA 15213 \\
  \texttt{hippo@cs.cranberry-lemon.edu} \\
}

\author{%
Xinyi Li$^{1}$, Ming Li$^{1}$, Lu Bai$^{2}$, Lixin Cui$^{3}$, Feilong Cao$^{1}$, Ke Lv$^{4}$, Yunliang Jiang$^{1}$, Pietro Liò$^{5}$\\[0.3em]
$^{1}$Zhejiang Normal University\\
$^{2}$Beijing Normal University\\
$^{3}$Central University of Finance and Economics\\
$^{4}$University of Chinese Academy of Sciences\\
$^{5}$University of Cambridge\\
}

\begin{document}

\maketitle

\begin{abstract}
Many real-world heterogeneous graphs exhibit pronounced heterophily, where connected nodes often have dissimilar labels or play different semantic roles. In such settings, standard heterogeneous graph neural networks that aggregate messages along metapaths or meta-relations primarily based on feature similarity can propagate misleading information, since feature similarity may be misaligned with underlying relational semantics. In this paper, we propose \textbf{HeterSEED}, a semantics–structure decoupling framework for heterogeneous graph learning under heterophily. HeterSEED decouples representation learning into a heterogeneous semantic channel that captures type- and relation-aware local semantics and a structure-aware heterophily channel that separates homophilic and heterophilic neighborhoods via pseudo-label-guided partitioning and aggregates them using metapath-based structural weights. A node-level adaptive fusion mechanism then combines the two channels to produce context-dependent node representations. Theoretically, we establish that, on heterogeneous graphs under heterophily, HeterSEED is strictly more expressive than standard heterogeneous graph neural networks that rely primarily on feature similarity and provably reduces the prediction bias introduced by heterophilic neighbors. Experiments on five real-world heterogeneous graphs, including two large-scale networks at the million-node and hundred-million-edge scale, demonstrate that HeterSEED consistently outperforms representative heterogeneous graph neural networks and recent heterophily-aware baselines, especially in strongly heterophilic regimes.
\end{abstract}

\section{Introduction}
\label{Section1}
\textbf{Background.} Learning on heterogeneous graphs has become an important paradigm for modeling complex systems with multiple node and relation types, such as academic networks, recommender systems, and knowledge graphs \cite{shi2022heterogeneous}. By encoding type-specific semantics and relation patterns, heterogeneous graph neural networks (HGNNs)~\cite{HAN,zhang2019heterogeneous,survey} have achieved promising results on tasks including node classification, link prediction, and recommendation.

\textbf{Motivation.}  A common design in existing HGNNs is to aggregate messages along metapaths or meta-relations, weighting neighbors mainly by feature similarity. 
This design implicitly assumes that similar features indicate similar labels or semantic roles. 
However, many real-world heterogeneous graphs exhibit pronounced heterophily, where connected nodes often have dissimilar labels or play different roles~\cite{zheng2022graph,zhu2023heterophily,latgrl,gong2024towards}. Under such conditions, feature similarity along metapaths becomes an unreliable proxy for relational semantics, and naive feature-similarity-driven aggregation can propagate misleading information and blur decision boundaries. This challenge is further exacerbated by the coexistence of multiple node types and complex inter-type relations, where homophilic and heterophilic interactions are intertwined within local neighborhoods. 
{
\begin{wrapfigure}{r}{0.46\textwidth}
    \centering     \includegraphics[width=\linewidth]{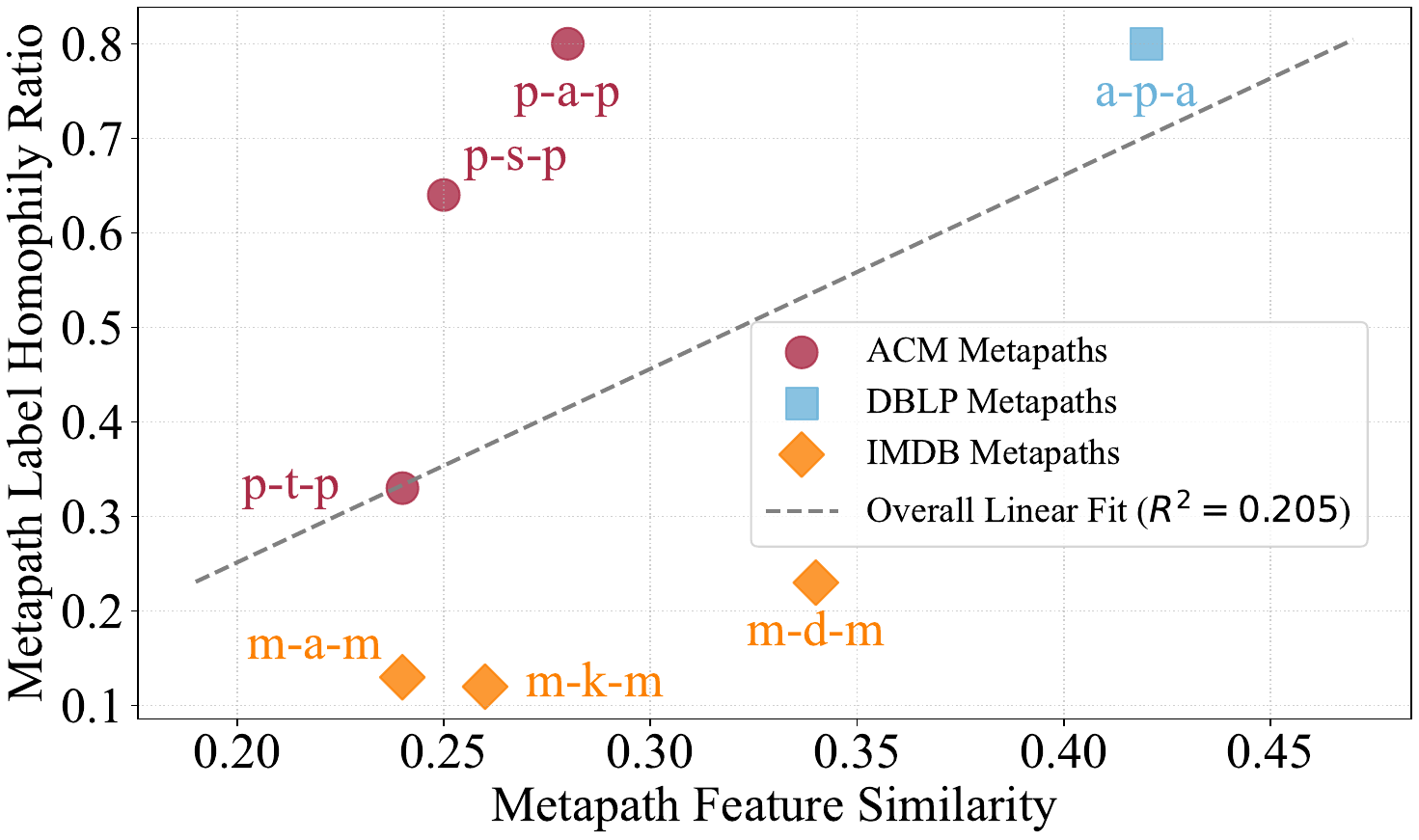}
    \vspace{-6.0mm}
    \caption{Metapath feature similarity vs.\ label homophily ratio across datasets. Each point denotes one metapath, and the dashed line shows the global linear fit ($R^2 = 0.205$).}
    \label{fig1}
\end{wrapfigure}
Figure~\ref{fig1} shows a \textit{\textbf{weak}} correlation between metapath-based feature similarity and label homophily, suggesting that feature similarity explains little variance in label homophily. This indicates that feature-based aggregation alone is insufficient to capture relational semantics in heterogeneous graphs under heterophily. Recent heterophily-aware methods for heterogeneous graphs \cite{Hetero2Net,latgrl}, such as metapath-level heterophily metrics and auxiliary prediction tasks, mark important progress. However, they still rely on feature-based message passing over metapath views and do not explicitly decouple heterogeneous semantics from heterophilic structures or learn node-wise structure-aware fusion of homophilic and heterophilic signals.

}
\noindent\textbf{Method.} In this paper, we propose HeterSEED, a semantics–structure decoupling framework with explicit heterophily-awareness for heterogeneous graphs. Specifically, HeterSEED decomposes representation learning into two complementary channels: (i) a \emph{Heterogeneous Semantic Aggregation Channel} that captures type-specific semantic information across multiple node and relation types via metapath- or relation-based aggregation, and (ii) a \emph{Structure-Aware Heterophily Channel} that leverages pseudo labels and metapath-based structural weights to separate homophilic and heterophilic neighborhoods and aggregate them through structure-guided operators. A node-level structure-aware fusion mechanism then combines the two channels by by learning fusion weights from the concatenated semantic and structural embeddings, enabling adaptive balancing of the two channels and robust node representations under heterophily. From a theoretical perspective, we show that this semantics–structure decoupling strictly improves the expressive power over standard HGNNs on heterophilic heterogeneous graphs (see \underline{Theorem \ref{thm:expressiveness_main}}), reduces the prediction bias introduced by heterophilic neighbors (see \underline{Proposition \ref{prop:bias_main}}).

\textbf{Summary of Contributions.} The main contributions of this work are three-fold:
\begin{itemize}
    \item We propose \textbf{HeterSEED}, a heterophily-aware semantics–structure decoupling framework for learning on heterogeneous graphs. Unlike existing HGNNs that primarily rely on feature-similarity-driven aggregation along metapaths, HeterSEED explicitly separates heterogeneous semantic modeling from heterophilic structural modeling.

    \item We provide rigorous \textbf{theoretical analysis} of HeterSEED from two complementary perspectives. The hypothesis class of HeterSEED is shown to be strictly more expressive than that of standard feature-similarity-driven HGNNs on heterophilic heterogeneous graphs, the separation of homophilic and heterophilic neighborhoods provably reduces the prediction bias introduced by heterophilic edges.

    \item We conduct extensive experiments on five real-world heterogeneous graphs, including two \textbf{large-scale} networks at the \textit{million} node and \textit{hundred-million} edge scale, and show that HeterSEED generally outperforms representative HGNNs and recent heterophily-aware baselines on node classification.

\end{itemize}

\section{Preliminaries and Notation}
\label{sec:pre}
\begin{definition}[\underline{Heterogeneous Graph}~\cite{Sun2012MiningHI}]
A \textit{heterogeneous graph} is a directed graph 
$\mathcal{G} = (V, E, T, R, \phi, \psi)$, 
where $V$ and $E$ are the sets of nodes and edges, and 
$T$ and $R$ are the sets of node types and edge types. 
Each node $v_i \in V$ is associated with a type 
$\phi(v_i) = t_i \in T$, and each edge $e_{ij} \in E$ is associated with a type 
$\psi(e_{ij}) = r_{ij} \in R$. 
The graph is \textit{heterogeneous} if $|T| + |R| > 2$. 
Each node $v_i$ is equipped with a feature vector $x_i \in \mathbb{R}^{d_i}$. 
Let $A^{t_i,t_j}$ denote the adjacency matrix between node types $t_i$ and $t_j$, where 
$A^{t_i,t_j}_{uv} = 1$ indicates that $\phi(u) = t_i$, $\phi(v) = t_j$, and $u$ and $v$ are connected.
\end{definition}

\begin{definition}[\underline{Metapath}~\cite{Sun2012MiningHI}]\label{defi_metapath}
A metapath $p$ is a sequence of node and edge types of the form
$p = t_1 \xrightarrow{r_1} t_2 \xrightarrow{r_2} \cdots \xrightarrow{r_l} t_{l+1}$, where $t_1, \ldots, t_{l+1} \in T$ and $r_1, \ldots, r_l \in R$. A metapath encodes a composite semantic relation pattern between node types and corresponds to multiple meta-path instances in $\mathcal{G}$. In this work, we focus on \textit{symmetric} meta-paths where $t_1 = t_{l+1}$.
\end{definition}
\begin{definition}[\underline{Metapath-based Graph and Neighbor}s~\cite{fu2020magnn}]
Given a meta-path $p$ in a heterogeneous graph $\mathcal{G}$, 
the \textit{meta-path-based neighbors} of a node $v$ are the nodes that can be reached from $v$ through instances of $p$ (including $v$ itself when $p$ is symmetric). 
The \textit{meta-path-based graph} $\mathcal{G}_p$ is constructed by linking all pairs of meta-path-based neighbors. 
The graph $\mathcal{G}_p$ is \textit{homogeneous} if the head and tail node types of $p$ are identical. 
We denote the neighbors of $v$ in $\mathcal{G}_p$ as $\mathcal{N}^{v}_{\mathcal{G}_p}$.
\end{definition}
\begin{definition}[\underline{Homophily Ratio in Heterogeneous Graphs}\cite{latgrl}]
Given a symmetric metapath $p = t_1 \xrightarrow{r_1} t_2 \xrightarrow{r_2} \cdots \xrightarrow{r_l} t_{l+1}$, with $t_1 = t_{l+1}$, the metapath–induced graph $\mathcal{G}_p$ is a homogeneous graph constructed based on $p$. 
Let $E_p$ denote the edge set of $\mathcal{G}_p$, and let $y_i$ be the label of node $v_i$ when supervision is available. 
The \textit{global homophily ratio} of $\mathcal{G}_p$ is defined as
\begin{equation}
\mathcal{H}(\mathcal{G}_p) = 
\frac{
\bigl| \{ (v_i, v_j) \in E_p : y_i = y_j \} \bigr|
}{
|E_p|
}.
\end{equation}
To obtain the overall homophily level of the heterogeneous graph $\mathcal{G}$, 
we take the average homophily ratio across all symmetric meta-path–induced subgraphs:
\vspace{-1mm}
\begin{equation}
\mathcal{H} = 
\frac{1}{|\mathcal{P}|}
\sum_{p \in \mathcal{P}} \mathcal{H}(\mathcal{G}_p),
\label{eq:hetero_homophily}
\end{equation}
where $\mathcal{P}$ denotes the set of all symmetric metapaths considered. 
This formulation reflects the average label consistency of metapath–based homogeneous projections and provides a unified measure of homophily for heterogeneous graphs.
\end{definition}

\section{Method: HeterSEED}
\label{sec:method}
\subsection{Framework Overview}\label{sec:framework}
\begin{figure*}[htbp!]
    \centering
    \includegraphics[width=0.99\textwidth]{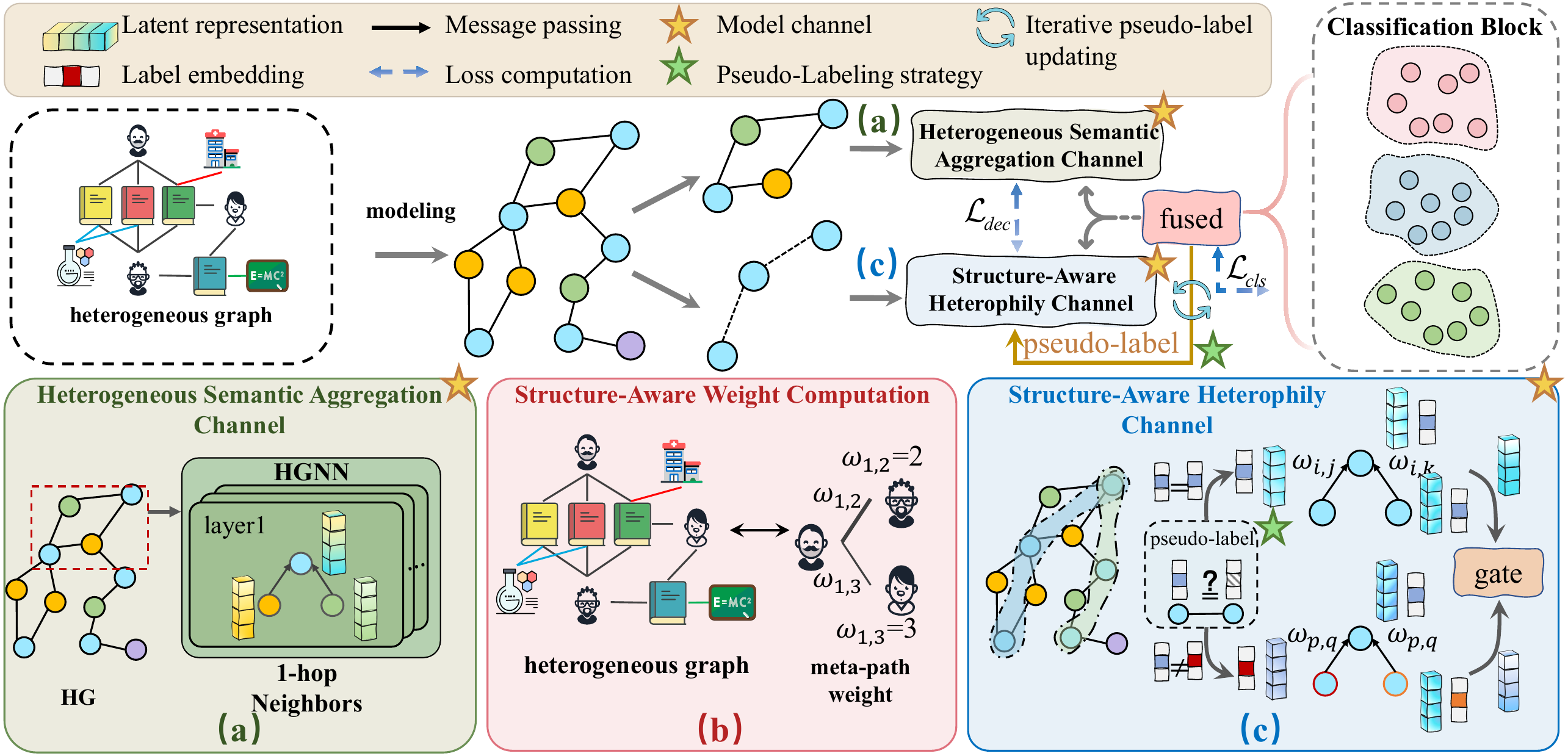}\vspace{-1mm}
    \caption{Schematic of the proposed \textbf{HeterSEED} framework.}\vspace{-4mm} 
    \label{fig:framework}
\end{figure*}
We introduce HeterSEED, a heterophily-aware semantics–structure decoupling framework for learning node representations on heterogeneous graphs. As illustrated in Figure~\ref{fig:framework}, starting from the original heterogeneous graph, a generic HGNN backbone defines a \emph{Heterogeneous Semantic Aggregation Channel} (Figure~\ref{fig:framework}\textbf{(a)}) that aggregates messages from first-order neighbors across different node and relation types to produce type-specific semantic embeddings. 
In parallel, a \emph{Structure-Aware Heterophily Channel} (Figure~\ref{fig:framework}\textbf{(c)}) operates on metapath-based views, where a structure-aware weight computation module (Figure~\ref{fig:framework}\textbf{(b)}) and iteratively refined pseudo-labels are used to separate homophilic and heterophilic neighborhoods and encode their structural patterns into heterophily-aware embeddings. 
A node-level gating module then fuses the semantic and structural embeddings, and the fused representations are fed into a classification block trained with both a classification loss and a decoupling loss to obtain semantics–structure decoupled, heterophily-aware node representations.

\subsection{Heterogeneous Semantic Aggregation Channel}
The heterogeneous semantic aggregation channel in HeterSEED is designed to learn type-aware semantic representations $\mathbf{h}_v^{(s)}$ by aggregating one-hop neighbor information across different node and relation types. This channel follows a standard HGNN-style message-passing paradigm and focuses on capturing local semantic consistency induced by the heterogeneous schema, without relying on handcrafted metapaths or explicit heterophily modeling. 

Formally, let $\mathcal{N}_r(v)$ denote the set of one-hop neighbors of node $v$ connected via relation type $r \in R$, and let $\mathbf{h}_v^{(l)}$ be the representation of $v$ at layer $l$ (with $\mathbf{h}_v^{(0)} = \mathbf{x}_v$). At the $(l+1)$-th layer, we perform neighborhood-wise convolution over heterogeneous neighbors as
\vspace{-1mm}
\begin{equation*}
\mathbf{h}_v^{(l+1)} = \sigma \!\bigg(
\mathbf{W}_{\text{self}}^{(l)} \mathbf{h}_v^{(l)}
+ \sum_{r \in R} \frac{1}{|\mathcal{N}_r(v)|}
\sum_{u \in \mathcal{N}_r(v)} \mathbf{W}_{r}^{(l)} \mathbf{h}_u^{(l)}
\bigg),\vspace{-0.5mm}
\end{equation*}
where $\mathbf{W}_{\text{self}}^{(l)}$ is a learnable weight matrix for self-transformation, $\mathbf{W}_{r}^{(l)}$ is a relation-specific transformation for messages coming from neighbors connected by relation $r$, and $\sigma(\cdot)$ is a non-linear activation function (e.g., ReLU). The inner summation aggregates messages from neighbors of type-$r$ in a permutation-invariant manner (we use mean aggregation by default), and the outer summation combines contributions from all relation types. Stacking $L$ such layers yields the heterogeneous semantic embedding $\mathbf{h}_v^{(s)} = \mathbf{h}_v^{(L)}$ for each node $v$, which encodes type- and relation-aware semantic information and will later be fused with the heterophilic structural embedding produced by the structure-aware heterophily channel.
\subsection{Structure-Aware Heterophily Channel}
The structure-aware heterophily channel explicitly models relational patterns that are driven by structure rather than feature similarity, producing a heterophily-aware structural embedding $\mathbf{h}_v^{(r)}$ for each node. It leverages pseudo-labels, metapath-induced structural weights, and a gated two-branch aggregation over homophilic and heterophilic neighbors, as detailed in the following subsections.

\noindent\textbf{Homophilic and Heterophilic Neighbor Construction.} We first assign a pseudo-label to each node to characterize label consistency among connected nodes. 
For labeled nodes in the training set, we directly use ground-truth labels, while for unlabeled nodes we infer pseudo-labels from the predictions of a linear classifier applied to the current embeddings (updated iteratively during training). 
Formally, let $\mathbf{y}$ denote the ground-truth labels and $\hat{\mathbf{y}}$ the predicted labels. 
The pseudo-label of node $v$ is defined as
\begin{equation}
y^{\text{pseudo}}_v =
\begin{cases}
y_v, & \text{if } v \in \mathcal{V}_{\text{train}},\\
\hat{y}_v, & \text{otherwise},
\end{cases}
\label{eq:pseudo_def}
\end{equation}
where $\mathcal{V}_{\text{train}}$ denotes the set of nodes with available labels.

Given a symmetric metapath $p \in \mathcal{P}$ and its induced homogeneous graph $\mathcal{G}_p = (V_p,E_p)$, we decompose $E_p$ into homophilic and heterophilic edge sets according to pseudo-label consistency:
\vspace{-1.5mm}
\begin{equation}
E_{\text{homo}}^p = \{ (u,v) \in E_p \mid y^{\text{pseudo}}_u = y^{\text{pseudo}}_v \},\qquad
E_{\text{hetero}}^p = \{ (u,v) \in E_p \mid y^{\text{pseudo}}_u \neq y^{\text{pseudo}}_v \}.\vspace{-1.5mm}
\end{equation}
The corresponding node sets contain only nodes incident to these edges,
\vspace{-1.5mm}
\begin{equation}
V_{\text{homo}}^p = \{ v \mid \exists u,\ (u,v)\in E_{\text{homo}}^p \},\qquad
V_{\text{hetero}}^p = \{ v \mid \exists u,\ (u,v)\in E_{\text{hetero}}^p \},
\end{equation}
yielding two structure-induced subgraphs 
$\mathcal{G}_{\text{homo}}^p = (V_{\text{homo}}^p, E_{\text{homo}}^p)$ and
$\mathcal{G}_{\text{hetero}}^p = (V_{\text{hetero}}^p, E_{\text{hetero}}^p)$ that capture homophilic and heterophilic relational patterns under metapath $p$.
Aggregating over all symmetric metapaths leads to
$\mathcal{G}_{\text{homo}} = \bigcup_{p \in \mathcal{P}} \mathcal{G}_{\text{homo}}^p$ and
$\mathcal{G}_{\text{hetero}} = \bigcup_{p \in \mathcal{P}} \mathcal{G}_{\text{hetero}}^p$.
We then define the homophilic and heterophilic neighbor sets of node $v$ with respect to these aggregated graphs as
\vspace{-1.5mm}
\begin{equation}
\mathcal{N}^{v}_{\text{homo}} = \{ u \in \mathcal{N}_{\mathcal{G}_{\text{homo}}}(v) \mid
y^{\text{pseudo}}_u = y^{\text{pseudo}}_v \},\qquad
\mathcal{N}^{v}_{\text{hetero}} = \{ u \in \mathcal{N}_{\mathcal{G}_{\text{hetero}}}(v) \mid
y^{\text{pseudo}}_u \neq y^{\text{pseudo}}_v \},
\end{equation}
where $\mathcal{N}_{\mathcal{G}_{\text{homo}}}(v)$ and $\mathcal{N}_{\mathcal{G}_{\text{hetero}}}(v)$
denote the neighbors of $v$ in the aggregated homophilic and heterophilic graphs, respectively. 
These neighbor sets provide a heterophily-aware structural view that is decoupled from the semantic aggregation in the first channel.


\begin{wrapfigure}{r}{0.45\textwidth}
    \centering
    \includegraphics[width=\linewidth]{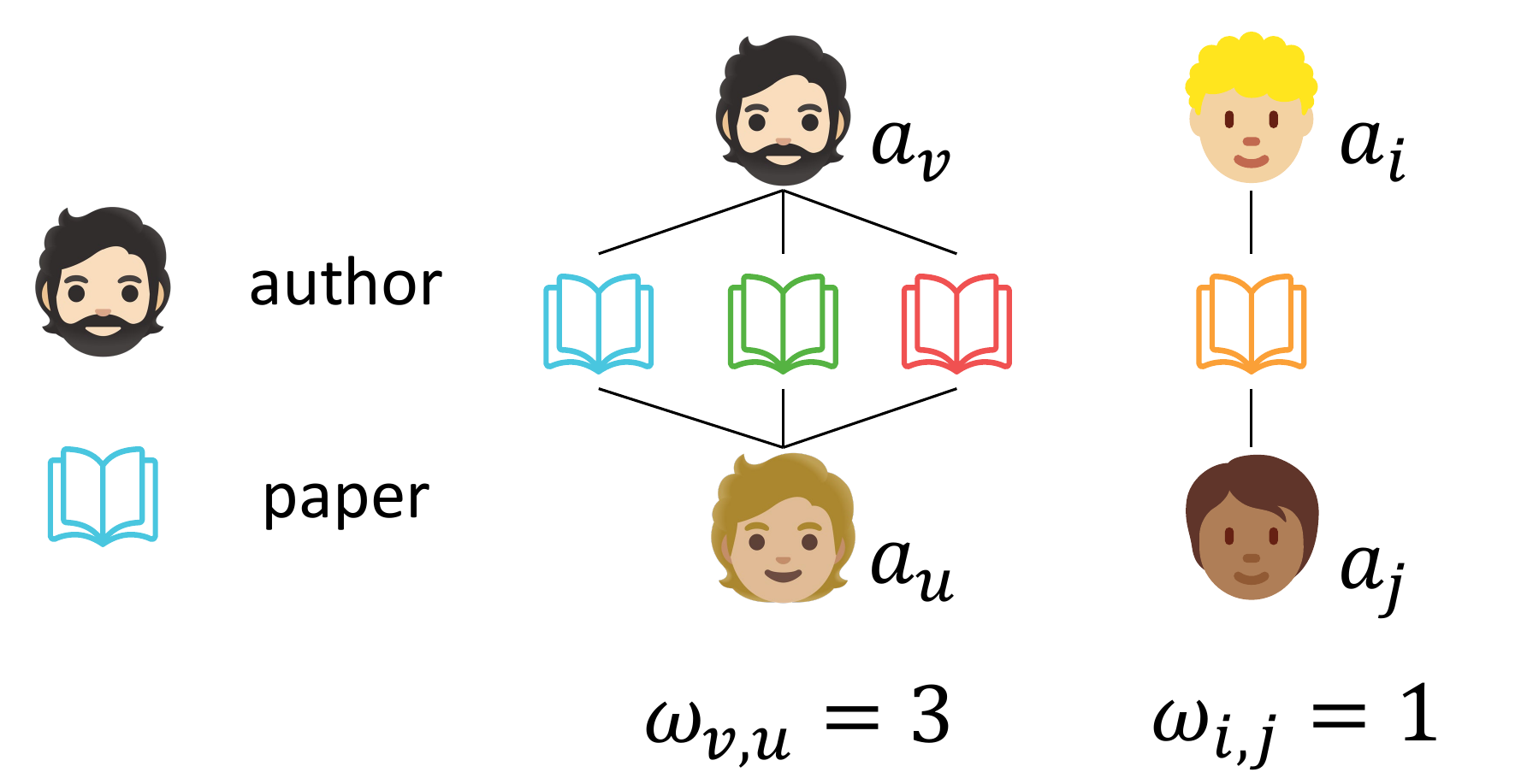}
    \vspace{-3mm}
    \caption{Illustration of metapath-based structural weights.}
    \label{fig:weight_illu}
\end{wrapfigure}

\noindent\textbf{Structure-Aware Weight Computation.} Following Definition~\ref{defi_metapath}, each symmetric metapath $p \in \mathcal{P}$ encodes a type of higher-order semantic and structural correlation.
For any node pair $(u,v)$, let $C_p(u,v)$ denote the number of metapath instances connecting them under $p$.
We define the raw structural weight as
$\omega_{u,v} = \sum_{p \in \mathcal{P}} C_p(u,v)$, which captures the overall strength of higher-order connectivity between $u$ and $v$ across all considered metapaths.
Intuitively, as illustrated in Figure~\ref{fig:weight_illu}, if nodes represent authors and edges encode co-authorship, then node pairs with more co-authored papers have larger $C_p(u,v)$ under the \textsc{APA} metapath, leading to higher $\omega_{u,v}$ and indicating stronger structural relation.
To quantify the relative importance of neighbors for node $v$, we normalize the raw weights via a softmax over its metapath-induced neighbors:
\vspace{-2mm}
\begin{equation}
\label{equ:normalize}
\tilde{\omega}_{u,v} =
\frac{\exp(\omega_{u,v})}
{\sum_{u' \in \mathcal{N}_{\mathcal{G}}(v)} \exp(\omega_{u',v})},
\end{equation}
where $\mathcal{N}_{\mathcal{G}}(v)$ denotes the set of metapath-induced neighbors of $v$. $\tilde{\omega}_{u,v}$ serves as a structure-aware edge weight, measuring the importance of neighbor $u$ relative to $v$. This softmax normalization not only ensures numerical stability, but also performs target-wise calibration by converting raw metapath counts into comparable weights, thereby emphasizing relatively stronger structural connections for each node.

\noindent\textbf{Homophily--Heterophily Structural Fusion Layer.} To explicitly differentiate and integrate homophilic and heterophilic structural information,
we design a homophily--heterophily structural fusion layer.
This layer operates directly on the input node representations $\mathbf{x}_u$
(e.g., raw features or a shared base embedding),
and re-aggregates them over the homophilic and heterophilic neighbor sets
using the structure-aware weights.
First, two branch-specific structural aggregations are performed:
\vspace{-1mm}
\begin{equation*}
\label{eq:homo_hetero_agg}
\mathbf{h}_{\text{homo}}^{(v)} =
\sum_{u \in \mathcal{N}^{v}_{\text{homo}}}
\tilde{\omega}_{u,v} \, \mathbf{x}_u,\quad
\mathbf{h}_{\text{hetero}}^{(v)} =
\sum_{u \in \mathcal{N}^{v}_{\text{hetero}}}
\tilde{\omega}_{u,v} \, \mathbf{x}_u.
\end{equation*}
Next, the two structural signals are fused in a dimension-wise adaptive manner. 
Let $\mathcal{T}_{\text{homo}}(\cdot)$ and $\mathcal{T}_{\text{hetero}}(\cdot)$ denote two learnable transformation operators, and let $\mathbf{g}_v \in \mathbb{R}^{d}$ be a learnable gating vector for node $v$. 
The fused structural embedding is computed as
\vspace{-2mm}
\begin{equation*}
\label{eq:struct}
\mathbf{h}_v^{(r)}
=
\sigma(\mathbf{g}_v) \odot
\mathcal{T}_{\text{homo}}\!\bigl(\mathbf{h}_{\text{homo}}^{(v)}\bigr)
+
\bigl(1 - \sigma(\mathbf{g}_v)\bigr) \odot
\mathcal{T}_{\text{hetero}}\!\bigl(\mathbf{h}_{\text{hetero}}^{(v)}\bigr),\vspace{-2mm}
\end{equation*}
where $\sigma(\cdot)$ is the sigmoid function and $\odot$ denotes element-wise multiplication. 

\noindent\textbf{Implementation of Transformation Operators.} In our implementation, the transformation operators $\mathcal{T}_{\text{homo}}(\cdot)$ and $\mathcal{T}_{\text{hetero}}(\cdot)$ are implemented by two-layer MLPs with ReLU activations and a shared output dimension. More expressive designs, such as attention-based blocks or other message-passing architectures, are in principle possible, but we use MLPs to keep the structural fusion layer simple and efficient while isolating the effect of homophily–heterophily decoupling. These operators project the homophilic and heterophilic representations into a shared latent space before gated fusion. The resulting $\mathbf{h}_v^{(r)}$ serves as the heterophily-aware structural embedding of node $v$, which is later fused with the semantic embedding $\mathbf{h}_v^{(s)}$ from the first channel to form the final representation for downstream prediction.

\subsection{Label-Guided Masking Mechanism}
Let $\mathbf{Z}_{\text{label}} \in \mathbb{R}^{(C+1) \times d}$ be a learnable label embedding matrix, where $C$ is the number of classes, $d$ is the embedding dimension, and the last row $\mathbf{Z}_{\text{label}}[C+1]$ corresponds to a special \texttt{[MASK]} token. 
For each labeled node $i$ with label index $\ell_i \in \{1,\dots,C\}$, we sample a masked label embedding during training:
\vspace*{-1mm}
\begin{equation}
\tilde{\mathbf{z}}_i^{(\text{label})} =
\begin{cases}
\mathbf{Z}_{\text{label}}[\ell_i], & \text{with probability } 1-\beta,\\[2mm]
\mathbf{Z}_{\text{label}}[C+1], & \text{with probability } \beta,
\end{cases}
\end{equation}
where $\beta \in [0,1]$ is a hyperparameter that controls the masking rate of observed labels. 
For unlabeled nodes, no label embedding is used and their features remain unchanged.

The masked label embedding is incorporated into the node feature via a learnable projection 
$g: \mathbb{R}^{d} \rightarrow \mathbb{R}^{d_{\text{feat}}}$: $\tilde{\mathbf{x}}_i = \mathbf{x}_i + g\big(\tilde{\mathbf{z}}_i^{(\text{label})}\big)$, where $d_{\text{feat}}$ is the node feature dimension. 
The resulting $\tilde{\mathbf{x}}_i$ are then fed into the semantic and structural channels of HeterSEED. 
During inference, masking is disabled and the original label embedding is deterministically used for labeled nodes: $\tilde{\mathbf{x}}_i = \mathbf{x}_i + g\big(\mathbf{Z}_{\text{label}}[\ell_i]\big)$.
This design treats label embeddings as feature-level anchors for labeled training nodes, rather than as signals to be propagated across the graph. The stochastic masking mechanism acts as a form of regularization, preventing the model from over-relying on visible labels during training. Disabling masking at inference removes this stochasticity without introducing any additional supervision, ensuring that no extra label information beyond the observed training labels is used at test time.

\subsection{Node-Level Semantics--Structure Fusion}


\noindent\textbf{Gating-Based Fusion.}
To determine the node-dependent importance of the structural signal, we first concatenate the two channel outputs to form a joint feature
$\mathbf{z}_v = \big[\, \mathbf{h}_v^{(s)} \,\Vert\, \mathbf{h}_v^{(r)} \,\big]$.
This vector is fed into a single-layer feed-forward network to produce a scalar gate $\gamma_v = \sigma\!\left(f_{\text{gate}}(\mathbf{z}_v)\right)$, where $f_{\text{gate}}(\cdot)$ is a learnable linear transformation and $\sigma(\cdot)$ is the sigmoid function.
The final fused embedding of node $v$ is then obtained as
\begin{equation}
\label{equ:adaptive}
\mathbf{h}_v =
(1 - \gamma_v)\,\mathbf{h}_v^{(s)}
+
\gamma_v\,\mathbf{h}_v^{(r)}.
\end{equation}
This node-level fusion mechanism endows the model with local adaptivity, allowing it to dynamically adjust the contribution of the semantic and structural channels based on each node’s neighborhood context and feature distribution.

\noindent\textbf{Decoupling Regularization.}
To further encourage semantics--structure decoupling, we introduce a regularization term that penalizes excessive statistical dependency between the two representation spaces. 
Let $\mathbf{H}^{(s)} \in \mathbb{R}^{B \times d}$ and
$\mathbf{H}^{(r)} \in \mathbb{R}^{B \times d}$ denote the semantic and structural embeddings of a mini-batch of $B$ nodes.
We define the decoupling loss as
\vspace{-1mm}
\begin{equation}
\label{eq:dec}
\mathcal{L}_{\text{dec}} =
\operatorname{DecoupleLoss}\!\left(\mathbf{H}^{(s)}, \mathbf{H}^{(r)}\right),
\end{equation}

\vspace{-3mm}

which encourages the two spaces to capture complementary yet weakly correlated information.
In practice, $\operatorname{DecoupleLoss}(\cdot,\cdot)$ can be instantiated by minimizing, for example, the average cosine similarity or the cross-covariance between the two embedding matrices.

\subsection{Training Objective}

For the downstream node classification task, we adopt the standard cross-entropy loss:
\vspace{-1mm}
\begin{equation}
\label{eq:cls}
\mathcal{L}_{\text{cls}} =
\frac{1}{|\mathcal{V}_{\text{train}}|}
\sum_{u \in \mathcal{V}_{\text{train}}}
\ell_{\text{CE}}(\tilde{y}_u, y_u),\vspace{-1mm}
\end{equation}

\vspace{-3mm}

where $\mathcal{V}_{\text{train}}$ denotes the set of labeled training nodes and
$\ell_{\text{CE}}(\cdot,\cdot)$ is the cross-entropy loss (for multi-label classification we use binary cross-entropy instead).
The overall training objective of HeterSEED is
\begin{equation}
\label{eq:loss}
\mathcal{L}
=
\mathcal{L}_{\text{cls}}
+
\alpha \mathcal{L}_{\text{dec}},
\end{equation}
where $\alpha$ is a hyperparameter that controls the strength of the semantics--structure decoupling regularization.

\section{Theoretical Insights}
\label{sec:theory}

\noindent\textbf{Setup and Hypothesis Class.} Let $\mathcal{G} = (V,E)$ be a heterogeneous graph with node features $\mathbf{X} = (\mathbf{x}_v)_{v \in V}$ and labels $y_v \in \{1,\dots,C\}$.
The heterogeneous semantic aggregation channel of HeterSEED produces embeddings
$\mathbf{h}_v^{(s)} = f_s(\mathcal{G}, \mathbf{X}; \theta_s)$,
while the structure-aware heterophily channel produces
$\mathbf{h}_v^{(r)} = f_r(\mathcal{G}_{\text{homo}}, \mathcal{G}_{\text{hetero}}, \mathbf{X}; \theta_r)$,
where $\mathcal{G}_{\text{homo}}$ and $\mathcal{G}_{\text{hetero}}$ are the homophilic and heterophilic graphs constructed in Section~3.
The node-level fusion mechanism yields the final embedding
\begin{equation}
\mathbf{h}_v =
(1-\gamma_v)\,\mathbf{h}_v^{(s)} + \gamma_v\,\mathbf{h}_v^{(r)},
\end{equation}
where $\gamma_v = \sigma(f_{\text{gate}}([\mathbf{h}_v^{(s)} \Vert \mathbf{h}_v^{(r)}]))$ is a node-dependent gate and a linear classifier is applied on $\mathbf{h}_v$ to predict labels.
We denote by $\mathcal{F}_s$ the hypothesis class of models that only use the semantic channel (standard HGNNs on heterogeneous graphs), and by $\mathcal{F}_{\text{HeterSEED}}$ the class induced by the full HeterSEED architecture.

\subsection{Expressive Power of $\mathcal{F}_{\text{HeterSEED}}$}

\begin{theorem}[\underline{Expressiveness Gain}]
\label{thm:expressiveness_main}
There exists a heterogeneous graph $\mathcal{G}$ with node features $\mathbf{X}$ and labels $y$ such that:i ) For any model $f_s \in \mathcal{F}_s$ with bounded depth and width, the classification error satisfies
    $R(f_s) \ge \varepsilon_0$ for some $\varepsilon_0 > 0$; ii) There exists a HeterSEED model $f \in \mathcal{F}_{\text{HeterSEED}}$ with finite depth and width such that $R(f) = 0$.
\end{theorem}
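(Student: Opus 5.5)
We prove the statement by an explicit separating construction: a graph on which every node sees the same multiset of semantic messages, so no standard HGNN can tell the classes apart, while the two neighbor sets $\mathcal{N}^v_{\text{homo}}$ and $\mathcal{N}^v_{\text{hetero}}$, built from training labels, carry exactly the missing label information. Because the rows of the semantic channel can only compute functions invariant to a relation-wise mean-aggregation refinement (a heterogeneous 1-WL colour refinement), it suffices to make two nodes of different labels indistinguishable under that refinement.

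\textbf{Construction.} Take two node types, author ($A$) and paper ($P$), one relation $A\!-\!P$, and the symmetric metapath $APA$. Every node carries the same constant feature, $\mathbf{x}_v=\mathbf{1}$. Build an $A$–$P$ bipartite graph in which every author has degree $2$ and every paper has degree $2$, for example a disjoint union of even cycles. Labels are two classes on authors, assigned so that some author $v_1$ of class $1$ and some author $v_2$ of class $2$ both exist. Designate a set $\mathcal{V}_{\text{train}}$ containing every author except the test nodes. Fix the test nodes so that each test author has at least one training $APA$-neighbor, and so that every $APA$-neighbor $u$ of a test node $v$ has $y_u=y_v$. This is compatible with making different test nodes have different labels, e.g.\ one cycle colored class $1$ and one class $2$. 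Heterophily elsewhere is irrelevant to the argument and can be added on other components.

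\textbf{Step (i).} Show by induction on $l$ that $\mathbf{h}^{(l)}_v$ depends only on $(\phi(v),l)$. The base case holds because features are constant. For the inductive step, every author has the same relation-wise neighbor multiset, and so does every paper, so each update in the semantic layer gives identical outputs within a type. Hence, for any depth and width, any $f_s\in\mathcal{F}_s$ outputs the same prediction on all test authors. Since the test set contains both labels in proportion $q\in(0,1)$, the error is $R(f_s)\ge \min(q,1-q)=:\varepsilon_0>0$.

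\textbf{Step (ii).} Build the HeterSEED model from the label-guided masking of the previous section. At inference, training nodes receive $\tilde{\mathbf{x}}_u=\mathbf{x}_u+g(\mathbf{Z}_{\text{label}}[y_u])$, and we choose $g$ and $\mathbf{Z}_{\text{label}}$ so that class $c$ adds $e_c$. Since the homophilic branch is supervised only through training nodes, we can take the classifier producing pseudo-labels to be any fixed map. The homophilic neighbor sets of test nodes then aggregate $\sum\tilde\omega_{u,v}\tilde{\mathbf{x}}_u$ over training neighbors of the same pseudo-label. Next we argue a simpler route: by the construction, all of $v$'s training neighbors share $y_v$. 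The raw neighborhood sum over both branches,
\[
\mathbf{h}_{\text{homo}}+\mathbf{h}_{\text{hetero}}=\sum_{u\in\mathcal{N}(v)}\tilde\omega_{u,v}\tilde{\mathbf{x}}_u ,
\]
therefore has its mass only on coordinate $e_{y_v}$ among label coordinates. Set $\sigma(\mathbf{g}_v)\approx 1/2$ and take $\mathcal{T}_{\text{homo}}=\mathcal{T}_{\text{hetero}}$ to be the projection onto label coordinates, composed with an MLP that outputs $\arg\max$-separable logits. Set the fusion gate $\gamma_v\to1$ through a large bias in $f_{\text{gate}}$; exact $R=0$ only needs correct argmax, so $\gamma_v<1$ is fine. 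A linear classifier then achieves $R(f)=0$.

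\textbf{Main obstacle.} The delicate point is that the pseudo-label partition depends on $\hat y$ for test nodes. We must ensure that a test node's informative training neighbors land in at least one branch with a nonzero, same-sign contribution whatever $\hat y_v$ is. Summing both branches with equal transforms handles this, since every neighbor lies in exactly one branch. We should also ensure that test nodes adjacent to other test nodes do not inject ambiguous signal; their label coordinates are zero, so they do not. A second subtlety is whether $\mathcal{F}_s$ is allowed to use the label-augmented features. If it is, the construction must instead rely on $APA$ multiplicities $C_p$, which are invisible to one-hop mean aggregation. In that case I would make test nodes have neighbors of both labels but with larger co-authorship counts to the same-label ones, and use the softmax weights $\tilde\omega$ to make the same-label contribution dominate.
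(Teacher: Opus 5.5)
Your Step (i) is the paper's representation-collapse argument (identical features plus type-regular neighbourhoods, then induction), and it is fine. The gap is in Step (ii). All of the class information your HeterSEED model uses comes from the label embeddings $g(\mathbf{Z}_{\text{label}}[y_u])$ added to the training features, not from the structural channel. You even sum the two branches with equal transforms precisely so that the homophilic/heterophilic split stops mattering. So the separation holds only because HeterSEED is given an input the baseline is denied.

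In the paper's formalization, $\mathcal{F}_s$ and $\mathcal{F}_{\text{HeterSEED}}$ are both functions of the same $\mathbf{X}$. If you admit $\tilde{\mathbf{X}}$, the label-guided masking subsection feeds it into the semantic channel too, so a semantic-only model must get it as well. Then (i) fails on your own graph. Two mean-aggregation layers along $A\to P\to A$ can output $\tfrac14\sum_{p\ni v}\ell_{u_p}$ at a test author $v$, where $u_p$ is $v$'s co-author on paper $p$ and $\ell_u$ is the injected label vector (zero for test nodes). You assumed every $APA$-neighbour of $v$ has label $y_v$ and at least one is a training node, so this vector is a positive multiple of $e_{y_v}$. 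Hence $f_s$ classifies every test author correctly.

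Your fallback does not repair this, because $APA$ multiplicities are not invisible to $\mathcal{F}_s$. With degree-$2$ papers, the second layer can compute $\tfrac{1}{2\deg(v)}\sum_{u\neq v}C_{APA}(u,v)\,\tilde{\mathbf{x}}_u$ plus a self term. So ``more co-authored papers with same-label authors'' helps $f_s$ as well. A working fallback would need two test nodes of different classes whose labelled computation trees coincide at every depth. They could differ only in how their papers are grouped by co-author, so that $C$-weighting and $e^{C}$-weighting disagree. You do not construct such nodes.

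The missing idea is the one the paper's proof aims at. Keep the inputs identical for both classes (no label embeddings), and let the pseudo-label partition $\mathcal{G}_{\text{homo}}/\mathcal{G}_{\text{hetero}}$ and the weights $\tilde\omega_{u,v}$ carry the information. The paper does this with $APA$ counts $m_{\text{same}}>m_{\text{diff}}$ and correct pseudo-labels.

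Be careful about what this mechanism can see. With identical author features, $\mathbf{h}_{\text{homo}}^{(v)}$ and $\mathbf{h}_{\text{hetero}}^{(v)}$ are just $v$'s homophilic and heterophilic softmax masses times $\mathbf{x}_0$. So the classes separate only if these masses differ between classes. A class-symmetric construction cannot achieve this. In the paper's $m_{\text{same}}/m_{\text{diff}}$ pattern, swapping $a_i\leftrightarrow b_i$ is a label-flipping automorphism. It gives both classes identical masses, and hence, with node-independent parameters as in the paper's proof, identical $\mathbf{h}_v^{(r)}$. You therefore need built-in asymmetry.

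For example, take an $A$--$P$ cycle whose authors are labelled $+,+,-$ periodically.
\begin{itemize}
\item The graph is still type-regular, so (i) holds with $\varepsilon_0=1/3$.
\item With correct pseudo-labels, every $-$ author has no same-label co-author while every $+$ author has exactly one. All co-authorship counts are equal, so the homophilic mass is strictly larger for $+$ authors.
\item Take $\mathcal{T}_{\text{homo}}$ reading off that mass, $\mathcal{T}_{\text{hetero}}=0$, a constant fusion gate, and a threshold classifier. This gives $R(f)=0$.
\end{itemize}
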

\noindent\textbf{Remark.} Theorem~1 shows that on certain strongly heterophilic heterogeneous graphs, standard HGNNs based on 1-hop feature aggregation cannot separate the classes, whereas HeterSEED can achieve zero error. The key is that HeterSEED leverages \(A\!P\!A\) metapath structural counts and homophily/heterophily separation to recover label information beyond purely feature-based aggregation; the formal construction and proof are given in \textbf{Appendix~\ref{app:proof_expressiveness}}.

\subsection{Bias Reduction Under Heterophily}
We next study how separating homophilic and heterophilic neighbors reduces the bias introduced by heterophilic edges. 
For clarity, we work in a squared-loss regression setting with scalar labels $y_v \in \{-1,+1\}$ and linear aggregation (the classification case is analogous; see \textbf{Appendix~\ref{app:proof_bias}}). 
Let $q_v$ denote the fraction of heterophilic neighbors of node $v$ in the metapath-induced graph, and let $\tilde{q}_v$ denote the effective fraction after applying the structure-aware heterophily channel and node-level fusion in HeterSEED, which down-weights heterophilic neighbors via structure-aware weights and gating.

\begin{proposition}[\underline{Bias Reduction via Homophily/Heterophily Separation}]
\label{prop:bias_main}
Assume that (i) the pseudo-labels used to construct $\mathcal{G}_{\text{homo}}$ and $\mathcal{G}_{\text{hetero}}$ are $\delta$-accurate, i.e., $\mathbb{P}(y_v^{\text{pseudo}} \neq y_v) \le \delta$, and (ii) the gating function $f_{\text{gate}}$ is $L$-Lipschitz.
Let $R(\cdot)$ denote the squared-loss risk.
Then there exist universal constants $c,C>0$ such that, for any purely semantic HGNN $f_s \in \mathcal{F}_s$ and any HeterSEED model $f \in \mathcal{F}_{\text{HeterSEED}}$,
\[
\mathbb{E}[R(f_s)] - \mathbb{E}[R(f)]
\;\ge\;
c\,\mathbb{E}\big[(q_v - \tilde{q}_v)^2\big] - C\delta.
\]
\end{proposition}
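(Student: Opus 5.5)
We will prove Proposition~\ref{prop:bias_main} in a stylized linear model that makes the role of $q_v$ and $\tilde{q}_v$ explicit.

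\textbf{Generative model.} Node features are assumed to be class-conditional: $\mathbf{x}_u = y_u\,\boldsymbol{\mu} + \boldsymbol{\xi}_u$, where $\mathbb{E}[\boldsymbol{\xi}_u]=0$, the noise is bounded, and $\|\boldsymbol{\mu}\|$ is normalized.

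\textbf{Step 1: bias of the semantic channel.} A linear mean aggregation over the metapath-induced neighborhood of $v$ has conditional mean
\[
\mathbb{E}[\mathbf{h}_v^{(s)}\mid y]=(1-2q_v)\,y_v\boldsymbol{\mu}.
\]
For any linear readout $w$, the squared-loss risk therefore decomposes as
\[
\mathbb{E}[R(f_s)]=\mathbb{E}\big[(1-(1-2q_v)a)^2\big]+\mathrm{Var}_s,
\]
where $a=w^\top\boldsymbol{\mu}$. The same computation for the fused HeterSEED embedding $\mathbf{h}_v$ gives an identical expression with $q_v$ replaced by $\tilde{q}_v$. Here we take the definition of $\tilde{q}_v$ to mean exactly the effective heterophilic mass after the softmax weights $\tilde{\omega}_{u,v}$, the branch gate $\sigma(\mathbf{g}_v)$, and the fusion gate $\gamma_v$ have been applied.

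\textbf{Step 2: comparing the bias terms.} We take the difference of the two bias terms, evaluated at the same readout scale; the HeterSEED model is free to use this scale, or a better one. Expanding the squares gives a difference of the form
\[
4a(q_v-\tilde{q}_v)\big(1-a+a(q_v+\tilde{q}_v)\big).
\]
By itself this expression yields a first-order, not a quadratic, lower bound. To obtain the square, we optimize the readout separately for each model.

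Minimizing $\mathbb{E}[(1-(1-2q)a)^2]$ over $a$ gives the residual
\[
1-\frac{\mathbb{E}[1-2q]^2}{\mathbb{E}[(1-2q)^2]}.
\]
Alternatively, one can use the strong convexity of $s\mapsto \min_a (1-sa)^2$ around the region $s\in[0,1]$. Either route gives a gap of at least $c\,\mathbb{E}[(q_v-\tilde{q}_v)^2]$, since $\tilde{q}_v\le q_v$.

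\textbf{Step 3: pseudo-label errors.} We next control the error introduced by pseudo-labels. A neighbor is misrouted into the wrong branch only if the pseudo-label of either endpoint is wrong. This happens with probability at most $2\delta$ by a union bound. Such a neighbor shifts the effective fraction $\tilde{q}_v$ by at most the misrouted mass, and the features are bounded. Combining these facts, the HeterSEED risk with noisy pseudo-labels exceeds its oracle-partition risk by at most $C\delta$.

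The $L$-Lipschitz assumption on $f_{\text{gate}}$ is used here. It ensures that $\gamma_v$ changes by at most $L$ times the embedding perturbation, so the fusion stage does not amplify the error. The variance terms are handled by the same argument: aggregating over disjoint subsets with weights summing to one does not increase noise variance by more than a constant factor. These constants are absorbed into $C$.

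\textbf{Main obstacle.} The delicate step is Step 2, because the bias difference is naturally linear in $(q_v-\tilde{q}_v)$. The quadratic form requires a strong-convexity argument together with an assumption that $q_v$ and $\tilde{q}_v$ lie in a range where the optimal readout scale is bounded away from zero. I would first try restricting to $q_v\le 1/2-\eta$ and deriving $c$ from $\eta$. If that fails, I would instead interpret the claimed inequality as holding for the oracle readout of each model.
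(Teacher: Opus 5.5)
There is a genuine gap in Step 2. The route you take to get the quadratic term is to optimize the readout scale $a$ separately for each model (your ``oracle readout'' fallback). It does not yield the bound, and it can reverse it. With an optimized shared readout, the bias residual is $1-(\mathbb{E}[s_v])^2/\mathbb{E}[s_v^2]$ with $s_v=1-2q_v$. This measures the \emph{dispersion} of $q_v$ across nodes, not its size: any uniform heterophily is removed by the rescaling $a=1/(1-2q)$. Concretely, take noiseless features, $\delta=0$, $q_v\equiv 0.3$, and $\tilde q_v\in\{0,0.2\}$ with equal probability. Then $\tilde q_v\le q_v$, and $q_v\le 1/2-\eta$ holds with $\eta=0.2$. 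The semantic model has zero bias, while HeterSEED has residual $1-0.64/0.68\approx 0.059$. So the left-hand side is negative, although $c\,\mathbb{E}[(q_v-\tilde q_v)^2]=0.05c>0$; restricting $q_v$ away from $1/2$ does not help. The strong-convexity variant fails as well, because per node $\min_a(1-sa)^2=0$ for every $s\neq 0$.

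The missing observation is that no optimization is needed, so your ``main obstacle'' is illusory. The paper fixes the smoother normalization, $f_s(v)=\sum_u a_{v,u}y_u$ with weights summing to one (your $a=1$). Both models are then compared at the same scale, with squared biases $4q_v^2$ and $4\tilde q_v^2$. Your own Step 2 expression at $a=1$ equals $4(q_v-\tilde q_v)(q_v+\tilde q_v)$. Whenever $0\le\tilde q_v\le q_v$ you have $q_v+\tilde q_v\ge q_v-\tilde q_v$, so the gap is at least $4(q_v-\tilde q_v)^2$. (The paper cites $q_v+\tilde q_v\le 2$, which points the wrong way; this is the inequality that actually does the work.) For any common fixed $0<a\le 1$ the same reasoning gives $4a^2(q_v-\tilde q_v)^2$. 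Since $0\le q_v-\tilde q_v\le 1$, even a linear lower bound would already have implied the quadratic one. The $\delta$ term is then not for comparing noisy and oracle-partition risks, since the right-hand side already uses the realized $\tilde q_v$. Its job is to pay for violations of $\tilde q_v\le q_v$, which you invoked without justification, caused by mislabeled endpoints. Such a violation is at most the misrouted mass, whose expectation your union bound controls, so it costs $O(\delta)$. Finally, a constant-factor increase in noise variance cannot be absorbed into $C\delta$, because it does not scale with $\delta$. You need an explicit assumption there, such as $\mathrm{Var}_f\le\mathrm{Var}_s$, which the paper likewise only asserts.
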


\noindent\textbf{Remark.} Proposition~1 formalizes that heterophilic neighbors act as structured noise for feature-similarity-driven HGNNs, so a larger heterophily ratio \(q_v\) implies larger potential bias. By separating homophilic and heterophilic neighbors and reweighting them via structure-aware weights and pseudo-labels, HeterSEED reduces the effective heterophilic mass from \(q_v\) to \(\tilde{q}_v\), thereby lowering bias when pseudo-labels are reasonably accurate; see \textbf{Appendix~\ref{app:proof_bias}}.

\section{Experiments}
We conduct extensive experiments to evaluate the effectiveness of HeterSEED and address the following research questions: i) \textbf{RQ1}: How does HeterSEED perform on node classification compared with state-of-the-art HGNNs and heterophily-aware graph models? ii) \textbf{RQ2}: How do the key components of HeterSEED contribute to its overall performance? 
 iii) \textbf{RQ3}: How does HeterSEED perform across different levels of homophily and heterophily?


\subsection{Datasets}
We evaluate HeterSEED on five real-world heterogeneous graphs, including three widely used benchmarks: DBLP~\cite{SHGN}, IMDB~\cite{SHGN}, and ACM~\cite{HAN}, and two \textbf{large-scale} networks: MAG~\cite{MAG} and RCDD~\cite{AP}. ACM and DBLP are academic citation networks, IMDB is a heterogeneous movie network, while MAG is a large-scale academic graph with 1,939,743 nodes and 21,111,007 edges, and RCDD is a real-world risk commodity detection network with 13,806,619 nodes and 157,814,864 edges.
These two \textbf{large-scale datasets} allow us to assess the scalability of HeterSEED on heterogeneous graphs with millions of nodes and hundreds of millions of edges.
Further details on these datasets are provided in \textbf{Appendix~\ref{app:datasets}}.

\subsection{Baselines and Experimental Settings}
\label{sec:baseline}
We compare HeterSEED with three groups of baselines: (i) state-of-the-art models for heterophilic graphs in the homogeneous setting, including LINKX~\cite{LINKX}, FAGCN~\cite{FAGCN}, ACM-GCN~\cite{ACM-GCN}, and GRAIN~\cite{zhao2025grain}; (ii) representative HGNNs, including RGCN~\cite{RGCN}, RGAT~\cite{RGAT}, HAN~\cite{HAN}, HGT~\cite{HGT}, SHGN~\cite{SHGN}, HINormer~\cite{HINormer}, and DiffGraph~\cite{li2025diffgraph}; and (iii) the heterophily-aware heterogeneous model HETERO$^2$NET~\cite{Hetero2Net}. These baselines cover standard HGNN architectures as well as recent methods tailored to heterophilic graphs. We evaluate all methods on node classification using Macro-F1 and Micro-F1, and additionally report Average Precision (AP) on RCDD in Appendix~\ref{app:rcdd_ap} following~\cite{AP}. All experiments use the standard train/validation/test splits from prior work; each setting is repeated five times with different random seeds, and we report the mean and standard deviation. Additional baseline descriptions, experimental settings, and hyperparameter configurations are provided in \textbf{Appendix~\ref{add_exp}}.



\begin{table*}[htbp!]
\centering
\caption{Performance comparison among HyterSEED and 15 baselines on node classification. The \textbf{bold} and \underline{underline} records indicate the best and the second best results. ``OOM": out of memory.}\vspace{-1mm}
\label{tab:hetero_gnn_comparison}
\resizebox{0.99\textwidth}{!}{
\begin{tabular}{l cc cc cc cc cc}
\toprule
\multirow{2}{*}{\diagbox[width=8em, height=2.4\line]{Method}{Dataset}}& \multicolumn{2}{c}{DBLP} & \multicolumn{2}{c}{IMDB} & \multicolumn{2}{c}{ACM} & \multicolumn{2}{c}{MAG} &\multicolumn{2}{c}{RCDD} \\
\cmidrule(lr){2-3} \cmidrule(lr){4-5} \cmidrule(lr){6-7} \cmidrule(lr){8-9} \cmidrule(lr){10-11}
& Macro-F1 & Micro-F1 & Macro-F1 & Micro-F1 & Macro-F1 & Micro-F1 & Macro-F1 & Micro-F1 & Macro-F1 & Micro-F1 \\

\midrule
\multicolumn{11}{c}{Vanilla Model} \\
\midrule
MLP     & \Data{80.36}{0.31} &\Data{80.92}{0.25} & \Data{57.01}{0.67} & \Data{61.65}{0.93} & \Data{86.50}{0.32} & \Data{86.53}{0.31} & \Data{6.35}{0.38} & \Data{23.60}{0.17} & \Data{86.08}{0.34} & \Data{97.27}{0.43}\\
\rowcolor{rowgray}
GCN     & \Data{90.84}{0.32} & \Data{91.47}{0.34} & \Data{57.88}{1.18} & \Data{64.82}{0.64} & \Data{92.17}{0.24} & \Data{92.12}{0.23} & \Data{25.14}{0.33} & \Data{47.26}{0.36}  & \Data{91.30}{0.38} & \Data{98.29}{0.12}\\
GAT     & \Data{91.05}{0.76} & \Data{91.73}{0.50} & \Data{58.94}{1.35} & \Data{64.86}{0.43} & \Data{92.26}{0.94} & \Data{92.19}{0.93} & \Data{22.94}{0.49} & \Data{43.79}{0.24}  & \Data{89.75}{0.21} & \Data{98.02}{0.04}\\

\midrule
\multicolumn{11}{c}{Homogeneous Model with Heterophily} \\
\midrule

LINKX   & \Data{75.05}{1.45} & \Data{77.78}{1.59} & \Data{58.98}{0.45} & \Data{62.03}{0.41} & \Data{89.91}{1.14} & \Data{89.69}{1.09} & \Data{14.63}{0.56} & \Data{33.43}{0.22} & OOM & OOM\\
\rowcolor{rowgray}
FAGCN   & \Data{82.40}{0.28} & \Data{83.08}{0.28} & \Data{63.68}{0.56} & \Data{67.49}{0.25} & \Data{89.27}{0.62} & \Data{89.34}{0.78} & \Data{16.87}{0.41} & \Data{37.99}{0.95}  & \Data{90.62}{0.32} & \Data{98.09}{0.07} \\

ACM-GCN & \Data{81.80}{0.32} & \Data{82.91}{0.14} & \UnderlineDataFull{65.44}{0.49} & \Data{68.71}{0.40} & \Data{88.97}{0.85} & \Data{89.21}{0.53} & \Data{13.42}{0.79} & \Data{34.47}{0.70} & \Data{84.89}{0.47} & \Data{94.12}{0.11}\\
\rowcolor{rowgray}
GRAIN & \Data{78.47}{0.07} & \Data{79.48}{0.07} & \Data{59.93}{0.18} & \Data{63.28}{0.16} & \Data{92.78}{0.20} & \Data{92.77}{0.20} & OOM & OOM  & OOM & OOM \\

\midrule
\multicolumn{11}{c}{Heterogeneous Model under Homophily} \\
\midrule
RGCN    & \Data{91.52}{0.50} & \Data{92.07}{0.50} & \Data{61.26}{0.33} & \Data{65.21}{0.73} & \Data{91.95}{0.44} & \Data{91.75}{0.35} & \Data{27.01}{0.21} & \Data{48.80}{0.24}  & \Data{92.25}{0.34} & \Data{98.30}{0.07} \\
\rowcolor{rowgray}
HAN     & \Data{91.67}{0.69} & \Data{92.05}{0.62} & \Data{57.74}{0.96} & \Data{64.63}{0.58} & \Data{90.89}{0.43} & \Data{90.79}{0.43} & \Data{8.94}{0.16} & \Data{26.76}{0.36}  & \Data{87.32}{0.34} & \Data{97.46}{0.07}\\
RGAT    & \Data{92.61}{0.48} & \Data{93.15}{0.49} & \Data{57.85}{0.58} & \Data{62.79}{0.70} & \Data{90.03}{0.56} & \Data{90.40}{0.54} & \Data{24.76}{0.47} & \Data{45.29}{0.40}  & \Data{89.19}{1.20} & \Data{97.93}{0.16}\\
\rowcolor{rowgray}
HGT     & \Data{93.01}{0.24} & \Data{93.49}{0.25} & \Data{63.07}{1.19} & \Data{67.20}{1.61} & \Data{91.12}{0.76} & \Data{91.32}{0.89} & \Data{27.87}{0.30} & \Data{49.19}{0.63}  & \Data{91.04}{0.48} & \Data{98.29}{0.08} \\
SHGN    & \UnderlineDataFull{94.01}{0.24} & \Data{94.20}{0.31} & \Data{63.53}{1.26} & \Data{67.36}{0.57} & \Data{93.42}{0.44} & \Data{93.35}{0.45} & \Data{22.61}{0.40} & \Data{43.68}{0.71} & \Data{88.12}{0.52} & \Data{97.68}{0.10} \\
\rowcolor{rowgray}
HINormer & \Data{93.90}{0.37} & \Data{94.25}{0.38} & \Data{64.11}{1.82} & \Data{67.71}{1.05} & \Data{92.66}{0.73} & \Data{93.34}{0.85} & \Data{25.80}{0.41} & \Data{47.57}{0.61}  & OOM & OOM  \\
DiffGraph & \Data{91.45}{0.69} & \Data{91.72}{0.60} & \Data{58.84}{0.59} & \Data{63.61}{0.57} & \Data{93.87}{0.82} & \Data{93.85}{0.81} & OOM & OOM  & OOM & OOM  \\

\midrule
\multicolumn{11}{c}{Heterophily-aware Heterogeneous Model
} \\
\midrule

HETERO$^2$NET & \Data{93.92}{0.45} & \UnderlineDataFull{94.36}{0.57} & \Data{65.33}{0.72} & \UnderlineDataFull{69.03}{0.78} & \UnderlineDataFull{93.96}{0.88} & \UnderlineDataFull{93.87}{0.76} & \UnderlineDataFull{33.28}{0.32} & \UnderlineDataFull{54.66}{0.25} & \UnderlineDataFull{92.36}{0.19} & \UnderlineDataFull{98.43}{0.05}\\
\rowcolor{spiderblue}
HeterSEED (Ours) & \BestData{94.48}{0.81} & \BestData{94.87}{0.73} & \BestData{66.71}{0.73} & \BestData{70.60}{0.82} & \BestData{94.34}{0.81} & \BestData{94.21}{0.77} & \BestData{34.99}{0.22} & \BestData{56.25}{0.25} & \BestData{93.09}{0.15} & \BestData{98.60}{0.03}\\
\bottomrule
\end{tabular}
}
\end{table*}
\subsection{Overall Performance Comparison \textbf{(RQ1)}}
To address \textbf{RQ1}, we compare HeterSEED with 15 baselines on five benchmark datasets, and report Macro-F1 and Micro-F1 in Table~\ref{tab:hetero_gnn_comparison}. Across all datasets and metrics, HeterSEED consistently achieves the best performance, outperforming vanilla GNNs, heterophily-oriented homogeneous models, heterogeneous GNNs, and the heterophily-aware heterogeneous baseline.  Relative to heterophily-oriented homogeneous models, HeterSEED shows clear improvements, suggesting that modeling heterophily purely on a collapsed homogeneous view cannot fully capture the rich type-specific information present in heterogeneous graphs.  When compared with representative HGNNs, HeterSEED consistently produces higher Macro-F1 and Micro-F1 scores on all five datasets. 
These HGNNs are effective on relatively homophilic settings but their performance degrades on low-homophily datasets such as IMDB, MAG, and RCDD, where feature-similarity-driven aggregation tends to propagate noisy signals. 
In contrast, HeterSEED maintains strong performance, which is consistent with our theoretical result in \textbf{Theorem~\ref{thm:expressiveness_main}} that semantics–structure decoupling yields a strictly more expressive hypothesis class than standard HGNNs on heterophilic heterogeneous graphs.

Among all baselines, HETERO$^2$NET is the most related to our work, as it is explicitly designed for heterogeneous graphs with heterophily.
It frequently achieves the second-best results across datasets, yet HeterSEED consistently surpasses it on all metrics, including challenging cases such as IMDB, MAG, and RCDD. 
Moreover, on the two large-scale datasets MAG and RCDD, where several baselines run out of memory or suffer notable performance drops, HeterSEED remains trainable and achieves the best scores. 
The fact that HeterSEED suffers much less degradation as heterophily increases, while other HGNNs and HETERO$^2$NET become biased toward noisy neighbors, empirically echoes \textbf{Proposition~\ref{prop:bias_main}}, which shows that separating homophilic and heterophilic neighborhoods reduces the prediction bias induced by heterophilic edges. 

Overall, these results indicate that the semantics–structure decoupling with structure-aware fusion not only improves accuracy under heterophily at scale, but also behaves in line with the expressiveness and bias-reduction properties predicted by our theoretical analysis in Section~\ref{sec:theory}.
\subsection{Ablation Study (RQ2)}\label{main_abl}
\begin{wraptable}{r}{0.5\textwidth}
\label{tab:ablation}
    \centering
    \caption{Ablation study on HeterSEED.}
    \label{tab:ablation}
    \resizebox{0.99\linewidth}{!}{
        \begin{tabular}{l|cc|cc}
        \toprule
        \textbf{Dataset} & \multicolumn{2}{c|}{\bfseries DBLP} & \multicolumn{2}{c}{\bfseries IMDB} \\
        \cmidrule(lr){2-3} \cmidrule(lr){4-5} 
        Model $\backslash$ Metric & Macro-F1 & Micro-F1 & Macro-F1 & Micro-F1 \\
        \midrule
        w/o SHC & \Data{93.88}{0.41} & \Data{94.30}{0.69} & \Data{65.79}{0.52} & \Data{69.95}{0.85} \\
        w/o Dec          & \Data{94.16}{0.75} & \Data{94.59}{0.68} & \Data{66.28}{0.49} & \Data{70.29}{0.66}\\
        w/o Homo         & \Data{93.87}{0.54} & \Data{94.23}{0.58} & \Data{66.10}{1.17} & \Data{70.00}{0.82}  \\
        w/o Hetero       & \Data{93.66}{0.74} & \Data{94.13}{0.74} & \Data{66.46}{0.74} & \Data{70.25}{0.68} \\
        w/o Mask         & \Data{93.99}{0.61} & \Data{94.35}{0.60} & \Data{66.11}{0.47} & \Data{69.46}{0.69} \\
        \midrule
        \rowcolor{spiderblue} 
        \textbf{HeterSEED (Ours)} & \BestData{94.48}{0.81} & \BestData{94.87}{0.73} & \BestData{66.71}{0.73} & \BestData{70.60}{0.82}  \\
        \bottomrule
        \end{tabular}
    }
\end{wraptable}
To address \textbf{RQ2}, we conduct an ablation study by comparing HeterSEED with several variants that remove or modify a single component: \textbf{w/o SHC} removes the pseudo-label–guided structural channel and keeps only the one-hop semantic HGNN backbone; \textbf{w/o Dec.} drops the semantics–structure decoupling loss $\mathcal{L}_{\text{dec}}$; \textbf{w/o Homo} and \textbf{w/o Hetero} disable aggregation over homophilic and heterophilic neighbors in the structural channel, respectively; and \textbf{w/o Mask} removes the label-masking mechanism from the input features.

Table~\ref{tab:ablation} reports results on DBLP and IMDB. On both datasets, the full HeterSEED model achieves the best Macro-F1 and Micro-F1, while all ablated variants show clear performance drops. The largest degradations occur for \textbf{w/o SHC} and \textbf{w/o Hetero}, especially on the more heterophilic IMDB dataset, highlighting the importance of explicitly modeling heterophilic neighborhoods through the pseudo-label–guided structural channel. Removing the decoupling loss (\textbf{w/o Dec.}) also consistently hurts performance, indicating that enforcing semantics–structure decoupling is beneficial in practice, in line with our theoretical analysis. Ablation results on the other three datasets exhibit the same trend, i.e., the full HeterSEED model consistently outperforms all variants; detailed results are provided in \textbf{Appendix~\ref{app:ablation}}.

\subsection{Performance under Varying Homophily (RQ3)}\label{rq4_exp}

\begin{wrapfigure}{l}{0.45\textwidth}
    \centering
    \vspace{-4mm} 
    \includegraphics[width=\linewidth]{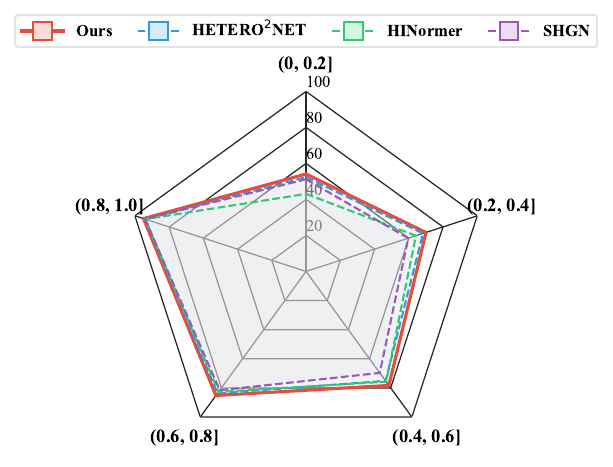}
    \vspace{-4mm} 
    \caption{Performance of HeterSEED and three baselines over node groups with different local homophily ratios on DBLP.} 
    \label{fig:homophily_radar}
    \vspace{-4mm} 
\end{wrapfigure}

To investigate \textbf{RQ3}, we evaluate HeterSEED on node subsets with different homophily levels. For each node, we compute its local homophily ratio (the fraction of neighbors sharing the same label in the metapath-induced graph) and partition nodes into five intervals. We then measure node classification performance within each interval. Figure~\ref{fig:homophily_radar} shows a radar plot comparing HeterSEED with heterophily-oriented baselines HETERO$^2$NET, HINormer, and SHGN. HeterSEED consistently achieves the best performance across all homophily ranges, with the most pronounced gains in the strongly heterophilic regimes $(0,0.2]$ and $(0.2,0.4]$, where other methods suffer from noisy cross-class aggregation. Even in high-homophily intervals, HeterSEED remains at least on par with the strongest baseline, indicating that semantics–structure decoupling does not sacrifice performance when homophily is high. These observations empirically support our bias-reduction analysis in Proposition~\ref{prop:bias_main}, showing that explicitly separating and reweighting homophilic and heterophilic neighbors improves robustness under varying degrees of homophily. Detailed experimental settings are provided in \textbf{Appendix~\ref{app:homophily}}.

\section{Conclusion}
In this paper, we propose HeterSEED, a heterophily-aware semantics–structure decoupling framework for learning on heterogeneous graphs. HeterSEED combines a heterogeneous semantic aggregation channel with a structure-aware heterophily channel that separates homophilic and heterophilic neighborhoods, and fuses them via node-level adaptive gating. We provide theoretical analysis showing that HeterSEED improves expressive power over standard HGNNs and reduces the bias introduced by heterophilic neighbors. Experiments on five heterogeneous graphs, including two large-scale networks, demonstrate that HeterSEED generally achieves leading performance over representative HGNNs and recent heterophily-aware baselines. For future work, it is desirable to extend this semantics–structure decoupling paradigm to heterogeneous hypergraphs under heterophily.


\bibliographystyle{plainnat}
\bibliography{references}

\appendix
\section{Summary of Notation}
Table~\ref{tab:notation} summarizes the main notation used throughout the main text.
\begin{table}[htbp!]
\centering
\caption{Summary of main notation used in this paper.}\label{tab:notation}
\begin{tabular}{ll}
\toprule
Symbol & Description \\
\midrule
$\mathcal{G}=(V,E,T,R,\phi,\psi)$ & Heterogeneous graph with node / edge sets, types and mappings \\
$V, E$ & Sets of nodes and edges \\
$T, R$ & Sets of node types and relation types \\
$\phi(v), \psi(e)$ & Type of node $v$ and edge $e$ \\
$\mathbf{x}_v \in \mathbb{R}^{d_\text{feat}}$ & Input feature of node $v$ \\
$y_v$ & Ground-truth label of node $v$ \\
$\hat{y}_v$ & Predicted label of node $v$ \\
$y_v^{\text{pseudo}}$ & Pseudo-label of node $v$ (Eq.~\eqref{eq:pseudo_def}) \\
$\mathcal{V}_{\text{train}}$ & Set of labeled training nodes \\
\midrule
$p \in \mathcal{P}$ & (Symmetric) metapath, $\mathcal{P}$ the metapath set \\
$\mathcal{G}_p$ & Metapath-induced homogeneous graph for $p$ \\
$\mathcal{N}_{\mathcal{G}_p}(v)$ & Neighbors of $v$ in $\mathcal{G}_p$ \\
$\mathcal{E}_{\text{homo}}^p$, $\mathcal{E}_{\text{hetero}}^p$ & Homophilic / heterophilic edge sets under $p$ \\
$\mathcal{G}_{\text{homo}}$, $\mathcal{G}_{\text{hetero}}$ & Aggregated homophilic / heterophilic subgraphs \\
$\mathcal{N}^{v}_{\text{homo}}$, $\mathcal{N}^{v}_{\text{hetero}}$ & Homophilic / heterophilic neighbors of $v$ \\
\midrule
$\mathbf{h}_v^{(s)}$ & Semantic embedding of node $v$ (semantic channel) \\
$\mathbf{h}_v^{(r)}$ & Structural embedding of node $v$ (heterophily channel) \\
$\mathbf{h}_v$ & Final fused embedding of node $v$ (Eq.~\eqref{equ:adaptive}) \\
$\mathbf{H}^{(s)}, \mathbf{H}^{(r)}$ & Matrices of semantic / structural embeddings for a node batch \\
$\mathbf{W}, \mathbf{W}_r$ & Trainable weights in neighborhood-wise convolution \\
$\omega_{u,v}$, $\tilde{\omega}_{u,v}$ & Raw / normalized structural weight from $u$ to $v$ \\
$\mathbf{h}_{\text{homo}}$, $\mathbf{h}_{\text{hetero}}$ & Aggregated homophilic / heterophilic structural messages \\
$\mathcal{T}_{\text{homo}}(\cdot)$, $\mathcal{T}_{\text{hetero}}(\cdot)$ & two-layer MLPs with a shared output dimension \\
$\gamma_v$ & Node-level fusion gate for combining $\mathbf{h}_v^{(s)}$ and $\mathbf{h}_v^{(r)}$ \\
\midrule
$\mathbf{Z}_{\text{label}} \in \mathbb{R}^{(C+1)\times d}$ & Label embedding matrix with $C$ classes and one \texttt{[MASK]} token \\
$\tilde{\mathbf{z}}_v^{(\text{label})}$ & (Possibly masked) label embedding of node $v$ \\
$\alpha$ & Coefficient of decoupling regularization $\mathcal{L}_{\text{dec}}$ \\
$\beta$ & Masking probability in label-guided masking module \\
$\mathcal{L}_{\text{cls}}$ & Supervised classification loss (cross-entropy) \\
$\mathcal{L}_{\text{dec}}$ & Semantics–structure decoupling regularization loss \\
$\mathcal{L}$ & Overall training objective $\mathcal{L} = \mathcal{L}_{\text{cls}} + \alpha \mathcal{L}_{\text{dec}}$ \\
\bottomrule
\end{tabular}
\end{table}

\section{Related Work}
\label{app:related work}
\subsection{Heterogeneous Graph Neural Networks}
Heterogeneous graphs are ubiquitous in real-world applications, where multiple node and edge types encode rich semantic information~\cite{wang2022survey,shi2022heterogeneous}. To model such structures, numerous heterogeneous graph neural networks (HGNNs) have been proposed, which are broadly categorized into metapath-based and metapath-free approaches~\cite{survey}. Metapath-based methods capture high-order semantic dependencies by propagating and aggregating information along predefined or learned metapaths. Representative examples include MetaPath2Vec~\cite{MetaPath2Vec} for metapath-guided random-walk embeddings, HAN~\cite{HAN} with hierarchical attention over node- and semantic-level contexts, MAGNN~\cite{fu2020magnn} with semantic-specific message passing, and GTN~\cite{GTN}, which learns soft adjacency matrices to discover informative metapaths. SeHGNN~\cite{SeHGNN} further exploits sequential node embeddings to model order-dependent semantics.

Metapath-free HGNNs instead generalize standard message passing to heterogeneous graphs without relying on hand-crafted metapaths. RGCN~\cite{RGCN} and RGAT~\cite{RGAT} introduce relation-specific transformations and attention for different edge types, SHGN~\cite{SHGN} integrates multi-layer attention with learnable edge embeddings, while Transformer-based architectures such as HGT~\cite{HGT} and HINormer~\cite{HINormer} employ self-attention to model heterogeneous interactions. DiffGraph~\cite{li2025diffgraph} adopts latent heterogeneous graph diffusion with cross-view denoising to alleviate noise and capture semantic transitions among heterogeneous relations.

Although these models effectively capture heterogeneous semantics, most are developed under a homophily assumption and mainly rely on feature-similarity-driven aggregation. This design is less reliable on heterogeneous graphs with pronounced heterophily, where neighboring nodes may belong to different semantic categories or play distinct structural roles~\cite{latgrl}. Recent work such as HETERO$^2$NET~\cite{Hetero2Net} begins to explore heterophily-aware representation learning on heterogeneous graphs by introducing multiple representation branches, and other studies consider heterophily-aware message passing across metapaths~\cite{Adaptive} or homophily-guided structure optimization~\cite{Cross-type}. However, these approaches still largely depend on feature-based aggregation or coarse-grained semantic integration and do not explicitly disentangle homophilic and heterophilic structural patterns at a fine granularity, which may limit their ability to exploit discriminative heterophilic structures.

\subsection{Graph Neural Networks with Heterophily}
Graphs with heterophily, where connected nodes often exhibit dissimilar attributes or labels, pose significant challenges for conventional GNNs. A number of methods have been proposed to address heterophily in homogeneous graphs. H2GCN~\cite{H2GCN} separates ego- and neighbor-embeddings, leverages higher-order neighborhoods, and combines intermediate representations to better exploit structural signals. Geom-GCN~\cite{heterophilygeom} introduces a geometric aggregation scheme that preserves neighborhood structure and captures long-range dependencies in disassortative graphs via a latent geometric space. FAGCN~\cite{FAGCN} studies low- and high-frequency components of graph signals and proposes a self-gating mechanism to adaptively integrate them. LINKX~\cite{LINKX} departs from message passing and directly combines node features and adjacency via scalable linear transformations, achieving strong performance on large non-homophilous graphs. ACM-GCN~\cite{ACM-GCN} revisits homophily from a post-aggregation similarity viewpoint and designs an adaptive channel mixing framework that dynamically combines aggregation, diversification, and identity channels. GRAIN~\cite{zhao2025grain} aggregates multi-view information at different granularity levels and introduces an adaptive aggregator to fuse local and global signals, improving robustness across homophily regimes.

These approaches, while effective on homogeneous graphs, do not directly extend to heterogeneous settings, where multiple node types and complex semantic relations substantially increase modeling difficulty. Moreover, they generally lack an explicit mechanism to simultaneously capture heterogeneous semantics and heterophilic structural patterns, often relying on unified aggregation or coarse channel fusion. This gap motivates methods that jointly handle heterogeneity and heterophily in a principled manner.

\subsection{Positioning of HeterSEED}
Our work situates at the intersection of heterogeneous graph learning and heterophilic GNNs. HETERO$^2$NET~\cite{Hetero2Net} is the closest prior work, as it explicitly targets heterophily-aware representation learning on heterogeneous graphs via multiple interaction branches. However, it does not explicitly separate homophilic and heterophilic structural patterns, nor does it perform semantics–structure decoupling with node-wise adaptive fusion. LatGRL~\cite{latgrl} also introduces a heterophily-aware perspective for heterogeneous graphs, but focuses on unsupervised representation learning, which differs from our semi-supervised node classification setting.

In contrast, HeterSEED adopts a dual-channel design that decouples heterogeneous semantic aggregation from structure-aware heterophily modeling. The heterogeneous semantic aggregation channel captures type-specific, relation-aware semantics via first-order heterogeneous neighborhoods, while the structure-aware heterophily channel explicitly distinguishes homophilic and heterophilic neighborhoods, aggregates them with metapath-based structural weights, and encodes discriminative structural patterns. A node-level adaptive fusion module then assigns fusion weights based on the concatenated semantic and structural embeddings, enabling each node to balance semantic consistency and structural discrimination according to its local context. Extensive experiments on multiple heterogeneous benchmarks show that this semantics–structure decoupling leads to consistent gains over state-of-the-art HGNNs and recent heterophily-aware baselines.

\section{Additional Details for Theoretical Analysis}
\label{app:theory}

In this appendix, we provide detailed proofs and supplementary explanations for the theoretical results stated in Section~\ref{sec:theory}.

\subsection{Preliminaries and Function Classes}
\label{app:prelim_theory}

Let $\mathcal{G} = (V,E)$ be a heterogeneous graph with node feature matrix 
$\mathbf{X} = (\mathbf{x}_v)_{v\in V}$, label vector $(y_v)_{v \in V}$, and a schema with node types $T$ and relation types $R$.
For simplicity, we consider node-level prediction and assume that $y_v \in \{1,\dots,C\}$ unless otherwise specified.

\noindent\textbf{Semantic channel.} The heterogeneous semantic aggregation channel of HeterSEED is an HGNN-style message-passing architecture that produces, for each node $v$, an embedding
\[
\mathbf{h}_v^{(s)} 
= f_s(\mathcal{G}, \mathbf{X}; \theta_s)
\in \mathbb{R}^{d_s},
\]
where $\theta_s$ collects all trainable parameters in the semantic channel.
Concretely, $f_s$ is implemented as $L_s$ stacked layers of relation-specific linear transformations, aggregation over type- and relation-specific neighborhoods, followed by pointwise nonlinearities (e.g., ReLU), as described in Section~3.2.

\noindent\textbf{Structure-aware heterophily channel.} The structure-aware heterophily channel constructs metapath-induced homogeneous graphs, separates homophilic and heterophilic edges based on pseudo-labels, and aggregates structure-aware signals with metapath-based structural weights and the homophily--heterophily structural fusion layer.
This channel produces embeddings
\[
\mathbf{h}_v^{(r)} 
= f_r(\mathcal{G}_{\text{homo}}, \mathcal{G}_{\text{hetero}}, \mathbf{X}; \theta_r)
\in \mathbb{R}^{d_r},
\]
where $\mathcal{G}_{\text{homo}}$ and $\mathcal{G}_{\text{hetero}}$ are the homophilic and heterophilic graphs constructed in Section~3.3 and $\theta_r$ collects all trainable parameters of this channel.

\paragraph{Node-level fusion and classifier.}
Given $\mathbf{h}_v^{(s)}$ and $\mathbf{h}_v^{(r)}$, the fusion module computes a node-dependent gate
\[
\gamma_v = \sigma\big(f_{\text{gate}}([\mathbf{h}_v^{(s)} \Vert \mathbf{h}_v^{(r)}])\big)
\in (0,1),
\]
where $f_{\text{gate}}$ is a linear (or shallow MLP) mapping and $\sigma$ is the sigmoid function.
The fused embedding is
\[
\mathbf{h}_v
=
(1-\gamma_v)\,\mathbf{h}_v^{(s)}
+
\gamma_v\,\mathbf{h}_v^{(r)}
\in \mathbb{R}^{d},
\]
and a linear classifier $g$ produces logits
\[
\mathbf{o}_v = W_{\text{cls}} \mathbf{h}_v + \mathbf{b}_{\text{cls}},
\]
followed by softmax to obtain class probabilities.

\noindent\textbf{Function classes.}
We denote by: (i) $\mathcal{F}_s$ the hypothesis class of models that use only the semantic channel and a linear classifier; i.e., models of the form $v \mapsto g_s(f_s(\mathcal{G},\mathbf{X};\theta_s))$, which correspond to standard heterogeneous GNNs; (ii) $\mathcal{F}_{\text{HeterSEED}}$ the hypothesis class of models that use both the semantic channel, the structure-aware heterophily channel, the node-level fusion gate, and a linear classifier; i.e.,$v \mapsto g(f_s(\mathcal{G},\mathbf{X};\theta_s), f_r(\mathcal{G}_{\text{homo}},\mathcal{G}_{\text{hetero}},\mathbf{X};\theta_r))$.
Theorem~\ref{thm:expressiveness_main} in the main text compares the expressive power of $\mathcal{F}_s$ and $\mathcal{F}_{\text{HeterSEED}}$ on a specific class of heterophilic heterogeneous graphs. 

For risk, unless otherwise stated we write
\[
R(f) := \mathbb{E}[\ell(f(v),y_v)],
\]
where $\ell$ is a non-negative loss (e.g., 0-1 loss or squared loss) and the expectation is taken with respect to the data distribution over $(\mathcal{G}, \mathbf{X}, y)$ and a uniformly sampled node $v$.

\subsection{Proof of Theorem~\ref{thm:expressiveness_main}}
\label{app:proof_expressiveness}
\noindent\textbf{Graph construction.} Consider a heterogeneous graph with node types $T = \{A, P\}$ representing authors and papers, respectively, and a relation type $R = \{\text{writes}\}$.
Edges of type $A \to P$ indicate that an author wrote a paper.
Let us focus on the metapath $A \xrightarrow{\text{writes}} P \xrightarrow{\text{writes}^{-1}} A$, abbreviated as $A\!P\!A$. In particular, we construct $2n$ author nodes,
\[
A = \{a_1,\dots,a_n\} \cup \{b_1,\dots,b_n\},
\]
and a set of paper nodes $P$.
Authors $a_i$ are assigned label $+1$ and authors $b_i$ are assigned label $-1$:
\[
y_{a_i} = +1,\quad
y_{b_i} = -1,\quad
i = 1,\dots,n.
\]
All paper nodes are treated as unlabeled (or ignored) in the classification task. We assume that all author nodes share identical features:
\[
\mathbf{x}_{a_i} = \mathbf{x}_{b_j} = \mathbf{x}_0 \in \mathbb{R}^{d_0}, \quad \forall i,j.
\]
Paper nodes can have arbitrary features; they will not break the symmetry we exploit. As for edges and heterophily, we connect each paper node to exactly one positive author and one negative author.
For instance, for each $i=1,\dots,n$ we create a paper node $p_i$ and add edges
\[
(a_i, p_i), \quad (b_i, p_i).
\]
This ensures that for any author, its 1-hop neighbors (via $A\!P$) include both positive and negative authors in a symmetric way.
Thus, at the author level, the graph exhibits strong heterophily: most co-author relationships involve authors of opposite labels.

\noindent\textbf{Limitations of standard HGNNs.} We show that any model $f_s \in \mathcal{F}_s$ with bounded depth and width cannot separate the two author classes on this graph. For the purpose of representing message passing invariance, we consider an HGNN in $\mathcal{F}_s$ implemented as $L$ layers of the form
\[
\mathbf{h}_v^{(l+1)} = \sigma\!\left(
\mathbf{W}_{\text{self}}^{(l)} \mathbf{h}_v^{(l)}
+ \sum_{r \in R} \frac{1}{|\mathcal{N}_r(v)|}
\sum_{u \in \mathcal{N}_r(v)} \mathbf{W}_r^{(l)} \mathbf{h}_u^{(l)}
\right),
\]
with $\mathbf{h}_v^{(0)} = \mathbf{x}_v$.
Here, $\sigma$ is a pointwise nonlinearity (e.g., ReLU), $\mathcal{N}_r(v)$ is the set of neighbors of $v$ under relation $r$, and $\mathbf{W}_{\text{self}}^{(l)}$, $\mathbf{W}_r^{(l)}$ are learnable matrices.

The key observation is that, due to the construction: i) All author nodes start with identical features $\mathbf{x}_0$; ii) The 1-hop neighborhoods (authors and papers) of any two authors are isomorphic as rooted typed graphs, with the same multiset of node types, features and relation types. Because the message-passing update is permutation-invariant with respect to neighbors and depends only on node types, relation types and features, it follows by induction that for any layer $l$,
\[
\mathbf{h}_{a_i}^{(l)} = \mathbf{h}_{a_j}^{(l)} = \mathbf{h}_{b_k}^{(l)},\quad \forall i,j,k.
\]

\begin{lemma}[Representation collapse under symmetry]
\label{lem:collapse}
On the above graph, for any HGNN $f_s \in \mathcal{F}_s$ with bounded depth and width, we have
\[
\mathbf{h}_{a_i}^{(L)} = \mathbf{h}_{b_j}^{(L)},\quad \forall i,j,
\]
where $L$ is the number of layers.
\end{lemma}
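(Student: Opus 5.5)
The plan is a simultaneous induction on the layer index $l$, run over both node types. The claim to carry along is stronger than the lemma itself:
\[
\mathbf{h}_{a_i}^{(l)} = \mathbf{h}_{b_j}^{(l)} =: \mathbf{u}^{(l)} \text{ for all } i,j,
\qquad
\mathbf{h}_{p_k}^{(l)} =: \mathbf{w}^{(l)} \text{ for all } k.
\]
Collapse for the authors alone is not enough, because each author's update reads from paper nodes, so the paper embeddings must be shown uniform too. The base case $l=0$ holds for authors by assumption ($\mathbf{x}_0$). For papers I will fix the construction by taking all paper features equal to a common $\mathbf{x}_P$. Arbitrary per-paper features would break the argument, since $a_i$ and $b_i$ share $p_i$ but $a_i$ and $a_j$ do not.

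The first structural fact to record is the neighborhoods. Each author $a_i$ (resp.\ $b_i$) has exactly one neighbor under relation \textsc{writes}, namely $p_i$. Each paper $p_i$ has exactly two neighbors under $\textsc{writes}^{-1}$, namely $a_i$ and $b_i$. Hence $|\mathcal{N}_r(v)|$ depends only on the type of $v$ and on $r$.

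For the inductive step, I substitute the hypothesis into the layer update. For any author $v$,
\[
\mathbf{h}_v^{(l+1)} = \sigma\big(\mathbf{W}_{\text{self}}^{(l)}\mathbf{u}^{(l)} + \mathbf{W}_{\text{writes}}^{(l)}\mathbf{w}^{(l)}\big),
\]
which is independent of $v$. For any paper $p_k$,
\[
\mathbf{h}_{p_k}^{(l+1)} = \sigma\big(\mathbf{W}_{\text{self}}^{(l)}\mathbf{w}^{(l)} + \tfrac12\mathbf{W}_{\text{writes}^{-1}}^{(l)}(\mathbf{u}^{(l)}+\mathbf{u}^{(l)})\big),
\]
which is independent of $k$. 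If the architecture uses type-specific self weights, the same computation applies, since the weights depend only on type. After $L$ layers this gives $\mathbf{h}_{a_i}^{(L)} = \mathbf{h}_{b_j}^{(L)}$. Depth and width never enter the argument, so the result holds for any finite $L$. The bounded-depth hypothesis is only needed downstream.

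The main obstacle is justifying the symmetry claim beyond this particular mean-aggregation layer. I will phrase the argument via an automorphism or coloring-refinement statement. The map swapping $a_i \leftrightarrow a_j$ together with $p_i \leftrightarrow p_j$ is a type-preserving, feature-preserving graph automorphism. So is the map swapping $a_i \leftrightarrow b_i$ while fixing $p_i$. Any permutation-equivariant message-passing layer commutes with such automorphisms. The generated group acts transitively on the authors, so all author embeddings coincide. I would present the explicit induction above as the main proof and note the automorphism view as the general principle. This covers attention-based HGNNs as well, provided their scores depend only on embeddings and types.
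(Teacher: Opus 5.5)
Your argument is correct and uses the same layerwise induction as the paper, but your strengthened hypothesis is what actually makes it work. The paper inducts only on author embeddings. It justifies uniform paper embeddings by noting that each paper sees one positive and one negative author with identical embeddings. That controls the neighbor term of a paper's update but not its self term. The paper's accompanying remark that paper features may be arbitrary is also incompatible with the stated $\forall i,j$ conclusion. At $l=1$ the author $a_i$ aggregates $\mathbf{x}_{p_i}$ directly, so distinct paper features admit weights with $\mathbf{h}^{(1)}_{a_i}\neq\mathbf{h}^{(1)}_{b_j}$ for $i\neq j$. Your joint induction on $(\mathbf{u}^{(l)},\mathbf{w}^{(l)})$ with a common $\mathbf{x}_P$ closes this gap.

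It is worth adding that the swap $a_i\leftrightarrow b_i$ fixing $p_i$ is an automorphism for arbitrary paper features. Hence $\mathbf{h}^{(L)}_{a_i}=\mathbf{h}^{(L)}_{b_i}$ holds unconditionally. This pairwise collapse already suffices for the downstream error bound, since one node of each pair $a_i,b_i$ is misclassified.

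One correction to your automorphism remark. Swapping $a_i\leftrightarrow a_j$ and $p_i\leftrightarrow p_j$ while fixing $b_i,b_j$ sends the edge $(b_i,p_i)$ to the non-edge $(b_i,p_j)$, so it is not an automorphism; you must also swap $b_i\leftrightarrow b_j$. With that fix the generated group is still transitive on authors, and your general principle stands.
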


\begin{proof}
The base case $l=0$ holds by construction since $\mathbf{h}_{a_i}^{(0)}=\mathbf{h}_{b_j}^{(0)}=\mathbf{x}_0$.
Assume that at some layer $l$ we have
$\mathbf{h}_{a_i}^{(l)}=\mathbf{h}_{b_j}^{(l)}$ for all $i,j$.
Consider the update for authors at layer $l+1$.
Each author $v$ aggregates over a multiset of neighbors of identical type-feature pairs, because each author connects to paper nodes that themselves see exactly one positive and one negative author with identical embeddings.
Thus the aggregated message and the self-transformation are identical for all authors, and the nonlinearity $\sigma$ preserves equality.
By induction, the equality holds for all $l$ up to $L$.
\end{proof}

\noindent\textbf{Implication for classification.} Let $\mathbf{h}^{(L)}$ denote the last-layer embedding of any author (by Lemma~\ref{lem:collapse} they are all equal).
The classifier produces logits $\mathbf{o}_v = W_{\text{cls}} \mathbf{h}^{(L)} + \mathbf{b}_{\text{cls}}$, which are the same for all authors.
Hence, the predicted label $\hat{y}_v$ is identical for all authors.
Since the true labels satisfy $y_{a_i}=+1$ and $y_{b_i}=-1$ with both classes present, the classifier necessarily misclassifies at least one class. Let $\pi_+$ denote the proportion of positive authors and $\pi_- = 1-\pi_+$ that of negative authors.
Then the misclassification error of any constant classifier is at least $\varepsilon_0 = \min(\pi_+,\pi_-)>0$.
Therefore, for any $f_s \in \mathcal{F}_s$ we have $R(f_s) \ge \varepsilon_0$, which proves item (1) of the theorem.

We now show the existence of a HeterSEED model $f \in \mathcal{F}_{\text{HeterSEED}}$ with zero classification error on the constructed graph.

\noindent\textbf{Enriching metapath-based structure.} We keep node features unchanged but adjust the metapath-based structure by controlling the number of $A\!P\!A$ instances between authors.
For any pair of authors $(u,v)$:
\[
C_{APA}(u,v) =
\begin{cases}
m_{\text{same}}, & \text{if } y_u = y_v,\\[2pt]
m_{\text{diff}}, & \text{if } y_u \neq y_v,
\end{cases}
\]
with integers satisfying $m_{\text{same}} > m_{\text{diff}} \ge 1$.
In practice, this corresponds to creating more co-authored papers between same-label authors than between different-label authors.

The structure-aware weights are defined as
\[
\omega_{u,v} = C_{APA}(u,v),\quad
\tilde{\omega}_{u,v}
=
\frac{\exp(\omega_{u,v})}
{\sum_{u' \in \mathcal{N}_{\mathcal{G}}(v)} \exp(\omega_{u',v})}.
\]
Hence, for fixed $v$, neighbors $u$ with the same label as $v$ receive strictly larger normalized weights $\tilde{\omega}_{u,v}$ than neighbors with different labels.

\noindent\textbf{Pseudo-labels and homophily/heterophily separation.} Assume that we have access to a small labeled subset of authors and that the initial classifier trained on them achieves accuracy strictly greater than $1/2$, so that pseudo-labels are correct on a non-trivial fraction of nodes.
We use these pseudo-labels to construct the homophilic and heterophilic subgraphs
$\mathcal{G}_{\text{homo}}$ and $\mathcal{G}_{\text{hetero}}$ as in Section~3.3.
On nodes with correct pseudo-labels, the homophilic and heterophilic neighbors coincide with the true label-based partition.

Within the homophily--heterophily structural fusion layer, we aggregate from homophilic and heterophilic neighbors:
\[
\mathbf{h}_{\text{homo}}(v) =
\sum_{u \in \mathcal{N}^{v}_{\text{homo}}}
\tilde{\omega}_{u,v} \mathbf{x}_u,\quad
\mathbf{h}_{\text{hetero}}(v) =
\sum_{u \in \mathcal{N}^{v}_{\text{hetero}}}
\tilde{\omega}_{u,v} \mathbf{x}_u,
\]
and apply transformations $\mathcal{T}_{\text{homo}}$ and $\mathcal{T}_{\text{hetero}}$ followed by a gate $\mathbf{g}$ to obtain
\[
\mathbf{h}_v^{(r)}
=
\sigma(\mathbf{g}) \odot
\mathcal{T}_{\text{homo}}(\mathbf{h}_{\text{homo}}(v))
+
(1-\sigma(\mathbf{g})) \odot
\mathcal{T}_{\text{hetero}}(\mathbf{h}_{\text{hetero}}(v)).
\]

Because $\mathbf{x}_u$ are identical for all authors, the key difference between $\mathbf{h}_{\text{homo}}(v)$ and $\mathbf{h}_{\text{hetero}}(v)$ arises from the normalized weights $\tilde{\omega}_{u,v}$: same-label neighbors contribute larger weights.
By choosing $\mathcal{T}_{\text{homo}}$ and $\mathcal{T}_{\text{hetero}}$ appropriately (e.g., as special maps that amplify the difference between high-weight and low-weight contributions), we can ensure that $\mathbf{h}_v^{(r)}$ takes two distinct values depending on whether $y_v=+1$ or $y_v=-1$, on all nodes with correct pseudo-labels.

\noindent\textbf{Fusing with the semantic channel.} Recall that the semantic channel alone collapses all author embeddings to the same vector, which we denote by $\mathbf{c}$.
We let the fusion gate put non-trivial weight on the structural channel, e.g., setting $\gamma_v \equiv \gamma$ for some fixed $\gamma \in (0,1]$.
The fused embedding becomes
\[
\mathbf{h}_v
=
(1-\gamma)\,\mathbf{c}
+
\gamma\,\mathbf{h}_v^{(r)}.
\]
Since $\mathbf{h}_v^{(r)}$ already separates the two classes, so do the fused embeddings $\mathbf{h}_v$.
Therefore, there exists a linear classifier on $\mathbf{h}_v$ that achieves zero classification error on all correctly pseudo-labeled nodes.
By design, and because initial pseudo-labels can be arbitrarily accurate given enough labeled nodes, we can construct a configuration where all author pseudo-labels are correct, hence $R(f)=0$.

So far, we can complete the proof of Theorem~\ref{thm:expressiveness_main}.
\noindent\paragraph{Take-home Message.} Theoretically, Theorem~\ref{thm:expressiveness_main} formalizes the intuition behind the semantics--structure decoupling design of HeterSEED. 
On strongly heterophilic heterogeneous graphs where local features and 1-hop neighborhoods are nearly symmetric across classes, any standard HGNN within $\mathcal{F}_s$ inevitably collapses node representations and cannot separate labels. 
By explicitly building a structural channel that leverages metapath-based counts and homophily/heterophily separation, HeterSEED escapes this symmetry barrier and can recover label patterns that are fundamentally invisible to purely feature-based aggregation. 
In other words, the structural channel in HeterSEED is not a cosmetic add-on but provides a genuinely new axis of expressiveness on heterophilic heterogeneous graphs.

\subsection{Proof of Proposition~\ref{prop:bias_main}}
\label{app:proof_bias}

\noindent\textbf{Semantic HGNN as a linear smoother.} For clarity, we specialize to a single-layer linear semantic HGNN with scalar output; the argument can be extended to deeper networks with Lipschitz nonlinearities. Let the prediction of a purely semantic HGNN $f_s$ at node $v$ be written as $f_s(v) = \sum_{u \in \mathcal{N}(v)} a_{v,u}\, y_u$, where the coefficients $a_{v,u}\ge 0$ and $\sum_{u \in \mathcal{N}(v)} a_{v,u} = 1$. This covers the case where $f_s$ aggregates neighbor labels (or proxy signals) linearly with normalized weights, as in many Laplacian smoothing interpretations.

We partition the neighborhood $\mathcal{N}(v)$ into homophilic and heterophilic parts according to the \emph{true} labels:
\[
\mathcal{N}_{\text{homo}}(v) := \{u \in \mathcal{N}(v): y_u = y_v\},\quad
\mathcal{N}_{\text{hetero}}(v) := \{u \in \mathcal{N}(v): y_u \neq y_v\}.
\]
Define the heterophilic mass
\[
q_v := \sum_{u \in \mathcal{N}_{\text{hetero}}(v)} a_{v,u},\quad
1 - q_v = \sum_{u \in \mathcal{N}_{\text{homo}}(v)} a_{v,u}.
\]

\noindent\textbf{Bias computation.} Assume for now that the neighbor labels satisfy $y_u = y_v$ for $u \in \mathcal{N}_{\text{homo}}(v)$ and $y_u = -y_v$ for $u \in \mathcal{N}_{\text{hetero}}(v)$.
Then
\[
f_s(v) 
= \sum_{u \in \mathcal{N}_{\text{homo}}(v)} a_{v,u} y_u
+ \sum_{u \in \mathcal{N}_{\text{hetero}}(v)} a_{v,u} y_u
= (1 - q_v) y_v + q_v (-y_v)
= (1 - 2q_v)y_v.
\]
The conditional expectation of $f_s(v)$ given $y_v$ is thus $\mathbb{E}[f_s(v) \mid y_v] = (1-2q_v)y_v$, and the squared bias is
\[
\operatorname{Bias}_v^2(f_s)
:= \big(\mathbb{E}[f_s(v)\mid y_v] - y_v\big)^2
= \big((1-2q_v)y_v - y_v\big)^2
= ( -2q_v y_v )^2 = 4 q_v^2.
\]

\noindent\textbf{HeterSEED with homophily/heterophily separation.} For HeterSEED, the structure-aware heterophily channel and node-level fusion induce new aggregation coefficients $\tilde{a}_{v,u}$ and an effective heterophilic mass
\[
\tilde{q}_v := \sum_{u \in \mathcal{N}_{\text{hetero}}(v)} \tilde{a}_{v,u},
\quad
1-\tilde{q}_v := \sum_{u \in \mathcal{N}_{\text{homo}}(v)} \tilde{a}_{v,u}.
\]
The prediction at node $v$ can be written as $f(v) = \sum_{u \in \mathcal{N}(v)} \tilde{a}_{v,u}\, y_u$, and, under the same label model, its bias becomes $\operatorname{Bias}_v^2(f)
= 4 \tilde{q}_v^2$.

Thus the difference in squared bias between $f_s$ and $f$ at node $v$ is
\[
\operatorname{Bias}_v^2(f_s) - \operatorname{Bias}_v^2(f)
= 4(q_v^2 - \tilde{q}_v^2)
= 4(q_v - \tilde{q}_v)(q_v + \tilde{q}_v).
\]

\noindent\textbf{Effect of pseudo-label accuracy and Lipschitz gate.} The separation of homophilic and heterophilic neighbors in HeterSEED relies on pseudo-labels $y_v^{\text{pseudo}}$.
By assumption, these pseudo-labels are $\delta$-accurate:
\[
\mathbb{P}(y_v^{\text{pseudo}} \neq y_v) \le \delta.
\]
On nodes where $y_v^{\text{pseudo}} = y_v$ and similarly for neighbors, the homophilic and heterophilic partitions match the true label-based partition.
In this case, the design of the structure-aware heterophily channel ensures that heterophilic neighbors receive strictly smaller weights than in the purely semantic HGNN, i.e.,
\[
\tilde{q}_v \le q_v - \eta,
\]
for some $\eta > 0$ that depends on the structural weights and the fusion mechanism.

On the other hand, when $y_v^{\text{pseudo}} \neq y_v$ (or some neighbor pseudo-label is incorrect), the separation may be imperfect.
However, the impact of such nodes on $\tilde{q}_v$ is controlled by the Lipschitz constant $L$ of the gate $f_{\text{gate}}$ and the boundedness of the embeddings, which implies that the deviation $|q_v - \tilde{q}_v|$ on these nodes is at most $O(L)$.

Combining both cases, we obtain
\[
\mathbb{E}\big[(q_v - \tilde{q}_v)^2\big]
\ge
c_1\,\mathbb{E}[q_v^2] - c_2 \delta,
\]
for some constants $c_1,c_2>0$.
Intuitively, on the $(1-\delta)$ fraction of nodes with correct pseudo-labels, $q_v - \tilde{q}_v$ is guaranteed to be at least $\eta$, while on the $\delta$ fraction of nodes with incorrect pseudo-labels, the error is bounded by a constant depending on $L$.

\noindent\textbf{From bias to risk.} Using the expressions for the squared biases, we have
\[
\mathbb{E}[\operatorname{Bias}_v^2(f_s)]
-
\mathbb{E}[\operatorname{Bias}_v^2(f)]
= 4\,\mathbb{E}[q_v^2 - \tilde{q}_v^2].
\]
Since $q_v^2 - \tilde{q}_v^2 = (q_v - \tilde{q}_v)(q_v + \tilde{q}_v)$ and $q_v + \tilde{q}_v \le 2$, we can lower-bound $q_v^2 - \tilde{q}_v^2
\ge \frac{1}{2} (q_v - \tilde{q}_v)^2$ up to a constant factor. Therefore,
\[
\mathbb{E}[\operatorname{Bias}_v^2(f_s)]
-
\mathbb{E}[\operatorname{Bias}_v^2(f)]
\ge c\,\mathbb{E}[(q_v - \tilde{q}_v)^2] - C\delta,
\]
for suitable constants $c,C>0$.

The total squared-loss risk $R(\cdot)$ can be decomposed into bias and variance contributions.
Under our assumptions (bounded labels, bounded aggregation weights, and Lipschitz architectures), the variance terms of $f_s$ and $f$ are of the same order and do not offset the bias improvement.
Consequently, the same inequality holds for the risks:
\[
\mathbb{E}[R(f_s)] - \mathbb{E}[R(f)]
\ge
c\,\mathbb{E}[(q_v - \tilde{q}_v)^2] - C\delta,
\]
which establishes Proposition~\ref{prop:bias_main}.

\noindent\textbf{Take-home Message.} Proposition~\ref{prop:bias_main} quantifies a second, complementary benefit of HeterSEED: beyond being more expressive, it is also less biased under heterophily. 
In feature-similarity-driven HGNNs, heterophilic neighbors behave like structured noise that systematically pulls predictions away from the true label, and the magnitude of this effect grows with the heterophily ratio $q_v$. 
By explicitly separating homophilic and heterophilic neighbors and re-weighting them through structure-aware aggregation and node-level gating, HeterSEED effectively reduces the effective heterophilic mass from $q_v$ to $\tilde{q}_v$, thereby shrinking the bias term in the risk. 
This result explains why, in our experiments, HeterSEED particularly benefits nodes that sit in highly heterophilic local neighborhoods, where conventional HGNNs tend to suffer the most.

\subsection{Generalization Benefit of Semantics--Structure Decoupling}
\label{app:generalization}
Beyond improving expressiveness and reducing bias under heterophily, the semantics--structure decoupling in HeterSEED also has a regularizing effect on the hypothesis class.
In this subsection, we adopt a margin-based viewpoint to relate the capacity of linear predictors on top of HeterSEED to the covariance of the fused representations, and discuss how the decoupling regularization $\mathcal{L}_{\text{dec}}$ impacts this capacity.

\noindent\textbf{Formulation and basic assumptions.} For ease of exposition, we consider a binary classification setting with labels $y_v \in \{-1,+1\}$ and a linear classifier on the fused embeddings.
Let $\mathbf{h}_v^{(s)}, \mathbf{h}_v^{(r)} \in \mathbb{R}^d$ denote the semantic and structural embeddings of node $v$, and define the fused embedding
\[
\mathbf{h}_v
=
(1-\gamma_v)\,\mathbf{h}_v^{(s)}
+
\gamma_v\,\mathbf{h}_v^{(r)},
\]
where $\gamma_v = \sigma(f_{\text{gate}}([\mathbf{h}_v^{(s)} \Vert \mathbf{h}_v^{(r)}]))$ as in the main text.
A linear classifier with weight vector $\mathbf{w}$ predicts
\[
f(v) = \operatorname{sign}(\mathbf{w}^\top \mathbf{h}_v).
\]

We make the following standard assumptions:
\begin{itemize}
    \item[(A1)] The fused embeddings are bounded: $\|\mathbf{h}_v\|_2 \le B_h$ for all nodes $v$.
    \item[(A2)] The classifier weights are bounded: $\|\mathbf{w}\|_2 \le B_w$.
\end{itemize}
Let $\mathcal{S} = \{v_1,\dots,v_n\}$ be a sample of $n$ labeled nodes, and denote the fused embeddings on this sample by a matrix $\mathbf{H} \in \mathbb{R}^{n \times d}$ with rows $\mathbf{h}_{v_i}^\top$.
The empirical covariance of the fused representations is
\[
\Sigma
=
\frac{1}{n}\,\mathbf{H}^\top \mathbf{H}.
\]

\noindent\textbf{Rademacher complexity and margin bound.} Using standard Rademacher complexity arguments for linear predictors (see, e.g.,~\cite{bartlett2002rademacher}), one obtains the upper bound
\vspace{-2mm}
\[
\mathfrak{R}_n(\mathcal{F})
\;\le\;
\frac{B_w}{\sqrt{n}}\,
\sqrt{\operatorname{Tr}(\Sigma)}.\vspace{-1.5mm}
\]
Together with classical margin-based generalization bounds for linear classifiers~\cite{kakade2008complexity,bartlett2017spectrally}, this implies that for any margin $\gamma>0$ and any confidence parameter $\delta \in (0,1)$, with probability at least $1-\delta$ over the draw of the training sample we have
\vspace{-2mm}

\[
R(f)
\;\le\;
\hat{R}_\gamma(f)
+
\mathcal{O}\!\left(
\frac{B_w}{\gamma}\,
\sqrt{\frac{\operatorname{Tr}(\Sigma)}{n}}
\right)
+
\tilde{\mathcal{O}}\!\left(\sqrt{\frac{\log(1/\delta)}{n}}\right),\vspace{-1mm}
\]
where $\hat{R}_\gamma(f)$ is the empirical $\gamma$-margin error and $R(f)$ is the population risk.

\noindent\textbf{Intuitive Explanation.} In HeterSEED, each fused embedding $\mathbf{h}_v$ is obtained by combining a semantic embedding $\mathbf{h}_v^{(s)}$ and a structural embedding $\mathbf{h}_v^{(r)}$ through a node-level gating mechanism.
The decoupling regularization $\mathcal{L}_{\text{dec}}$ explicitly discourages strong statistical dependence between these two channels (for instance, via a penalty on their empirical cross-covariance), encouraging them to capture complementary rather than redundant directions in the representation space.
From a covariance-based viewpoint, this reduces redundancy in the fused representation matrix $\mathbf{H}$ and tends to concentrate the variance of $\Sigma$ on fewer, more informative directions, i.e., it lowers the effective dimensionality of the fused embeddings.
In light of the above Rademacher complexity bound, this contraction of the effective representation space acts as a regularizer on the hypothesis class of linear predictors on top of HeterSEED, which helps explain the improved generalization behaviour observed in our experiments.
We do not attempt to derive an explicit closed-form bound in terms of $\mathcal{L}_{\text{dec}}$ itself, but this analysis clarifies that semantics--structure decoupling plays a regularizing role, rather than merely adding more parameters.

\section{Algorithm Implementation}
\label{app:algorithm}
\noindent\textbf{Pseudocode for HeterSEED.} The overall procedure of HeterSEED is summarized in \textbf{Algorithm~\ref{alg:main}}.  We first apply the \emph{Label-Guided Masking} mechanism, where node features are augmented with (possibly masked) label embeddings to support label propagation under limited supervision; if masking is disabled, the original features are used.  Then the \emph{Heterogeneous Semantic Aggregation Channel} performs stacked HGNN-based message passing over 1-hop heterogeneous neighborhoods to obtain semantic representations for all nodes.  To explicitly handle heterophily, the \emph{Structure-Aware Weight Computation} module computes metapath-based structural counts and normalizes them into edge weights, after which the \emph{Structure-Aware Heterophily Channel} separates each node’s neighbors into homophilic and heterophilic sets and aggregates them through the homophily–heterophily fusion layer to produce structural embeddings.  Finally, the \emph{node-level adaptive fusion} module combines semantic and structural embeddings via a learned gate to obtain the final node representations, which are optimized using the classification loss and the decoupling regularization for downstream tasks such as node classification.

\begin{algorithm}[!t]
\caption{Implementation for HeterSEED}
\label{alg:main}
\textbf{Input}: Heterogeneous graph $\mathcal{H}=(\mathcal{V},\mathcal{E})$, meta-paths $\mathcal{P}$, node features $\mathbf{X}$, training labels $\mathbf{Y}_{\mathrm{train}}$ \\
\textbf{Parameters}: Number of layers $L$, decoupling coefficient $\alpha$, label masking rate $\beta$ \\
\textbf{Output}: Prediction probabilities $\{\hat{\mathbf{y}}_v\}_{v \in \mathcal{V}_t}$
\begin{algorithmic}[1]

\WHILE{not converged}

\STATE \textbf{Step 1: Label-Guided Masking}
\FOR{each node $i \in \mathcal{V}$}
    \STATE Sample $m_i \sim \text{Bernoulli}(1-\beta)$
    \STATE $\tilde{\mathbf{z}}_i^{(\text{label})} \leftarrow 
    \begin{cases}
        \mathbf{Z}_{\text{label}}[\ell_i], & m_i = 1, \\
        \mathbf{Z}_{\text{label}}[C+1], & m_i = 0
    \end{cases}$
    \STATE $\tilde{\mathbf{x}}_i \leftarrow \mathbf{x}_i + g(\tilde{\mathbf{z}}_i^{(\text{label})})$
\ENDFOR

\STATE \textbf{Step 2: Heterogeneous Semantic Encoding}
\STATE Initialize $\mathbf{H}^{(0)} \leftarrow \tilde{\mathbf{X}}$
\FOR{$l = 0,\dots,L-1$}
    \STATE $\mathbf{h}_v^{(l+1)} \leftarrow 
    \sigma\!\left(
    \text{NeighborAggregation}(\{\mathbf{W}\mathbf{h}_u^{(l)} \mid u \in \mathcal{N}(v)\})
    \right)$
\ENDFOR
\STATE Obtain semantic representations $\mathbf{h}_v^{(s)}$

\STATE \textbf{Step 3: Structure-Aware Weight Computation}
\FOR{each meta-path $p \in \mathcal{P}$}
    \STATE Compute connection counts $C_p(u,v)$
\ENDFOR
\STATE Normalize structure-aware weights $\tilde{\omega}_{u,v}$ \:(see Eq.~\ref{equ:normalize})

\STATE \textbf{Step 4: Heterophilic Structural Encoding}
\FOR{each target node $v \in \mathcal{V}_t$}
    \STATE Identify homophilic neighbors $\mathcal{N}^{v}_{\mathrm{homo}}$ and heterophilic neighbors $\mathcal{N}^{v}_{\mathrm{hetero}}$
    \STATE $\mathbf{h}_{\mathrm{homo}}^v \leftarrow \sum_{u\in\mathcal{N}^{v}_{\mathrm{homo}}} \tilde{\omega}_{u,v}\mathbf{x}_u$
    \STATE $\mathbf{h}_{\mathrm{hetero}}^v \leftarrow \sum_{u\in\mathcal{N}^{v}_{\mathrm{hetero}}} \tilde{\omega}_{u,v}\mathbf{x}_u$
    \STATE $\mathbf{h}_v^{(r)} \leftarrow 
    \textsc{HomoHeteroGate}(\mathbf{h}_{\mathrm{homo}}^v, \mathbf{h}_{\mathrm{hetero}}^v) \:(\mbox{using two-layer MLPs})$  
\ENDFOR

\STATE \textbf{Step 5: Adaptive Fusion}
\FOR{each $v \in \mathcal{V}_t$}
    \STATE $\mathbf{h}_v \leftarrow (1-\gamma_v)\mathbf{h}_v^{(s)} + \gamma_v \mathbf{h}_v^{(r)}$
\ENDFOR

\STATE \textbf{Step 6: Loss Computation}

\STATE $\mathcal{L}_{\mathrm{dec}} \leftarrow \operatorname{DecoupleLoss}(\mathbf{H}^{(s)}, \mathbf{H}^{(r)})$ (see Eq.~\ref{eq:dec})
\STATE $\mathcal{L}_{\mathrm{cls}} \leftarrow \textsc{CrossEntropy}(\{\mathbf{h}_v\}, \mathbf{Y}_{\mathrm{train}})$ (see Eq.~\ref{eq:cls})
\STATE $\mathcal{L} \leftarrow \mathcal{L}_{\mathrm{cls}} + \alpha \mathcal{L}_{\mathrm{dec}}$ (Eq.~\ref{eq:loss})

\STATE Update model parameters by minimizing $\mathcal{L}$

\ENDWHILE

\STATE $\hat{\mathbf{y}}_v \leftarrow \textsc{Softmax}(\mathbf{h}_v), \quad \forall v \in \mathcal{V}_t$
\STATE \textbf{return} $\{\hat{\mathbf{y}}_v\}_{v \in \mathcal{V}_t}$

\end{algorithmic}
\end{algorithm}

\section{Computational Complexity Analysis}
\label{app:computation}
In this section, we analyze the computational complexity of HeterSEED in both
full-batch and mini-batch training regimes, highlighting how the additional
structure-aware channel affects scalability compared with standard HGNNs.

\textbf{Full-batch training.}
In the full-batch setting, HeterSEED processes the entire heterogeneous graph at each iteration.  
The total cost is dominated by three components:  
(i) message passing over graph edges, (ii) node-wise linear transformations, and  
(iii) meta-path–based structural aggregation.

For $L$ layers with hidden dimension $d$, message passing costs
$\mathcal{O}(L|\mathcal{E}|d)$, where $|\mathcal{E}|$ is the number of edges,  
and node-wise transformations cost $\mathcal{O}(L|V|d^{2})$.
Aggregating over $M$ meta-path–induced adjacency structures adds
$\mathcal{O}(M\|\mathcal{A}\|d)$, where $\|\mathcal{A}\|$ is the number of non-zero
entries in all meta-path adjacency matrices.
Thus the per-iteration complexity is
\[
\mathbf{T}_{\text{full}}
= \mathcal{O}\big(L|\mathcal{E}|d + L|V|d^{2} + M\|\mathcal{A}\|d\big).
\]
Since all computations are carried out on the full graph, both time and memory scale linearly with graph size, which is practical for medium-scale datasets (e.g., DBLP, IMDB, ACM).

\textbf{Mini-batch training.}
For large-scale heterogeneous graphs, we adopt mini-batch training.
Given batch size $b$, $L$ layers, and an average of $s$ sampled neighbors per layer,
the receptive field of each target node is $\mathcal{O}(s^{L})$.
Message passing within a mini-batch therefore costs
$\mathcal{O}(b s^{L} d)$, and node-wise transformations cost
$\mathcal{O}(L b s^{L-1} d^{2})$.

Instead of precomputing dense meta-path adjacency matrices, we dynamically construct sparse meta-path connections within each batch using $(\text{row},\text{col},w)$ triplets.
Let $\|\mathcal{A}_{B}\|$ be the number of such connections in one batch; the corresponding cost is
$\mathcal{O}(M\|\mathcal{A}_{B}\|d)$.
Since an epoch involves roughly $|V|/b$ batches, the per-epoch complexity is
\[
\mathbf{T}_{\text{mini}}
= \mathcal{O}\big(|V| s^{L} d + L|V| s^{L-1} d^{2} + \tfrac{|V|}{b} M\|\mathcal{A}_{B}\|d\big).
\]
In practice $\|\mathcal{A}_{B}\|\!\ll\!\|\mathcal{A}\|$, so dynamic sparse meta-path construction substantially reduces both computation and memory compared with full-batch meta-path aggregation, making HeterSEED scalable on million-node, hundred-million-edge heterogeneous graphs.

\section{Further Details for Benchmark Datasets}
\label{app:datasets}
\begin{table}[htbp!]
\centering
\caption{Detailed statistics of the datasets used in our experiments. 
\#Nodes and \#Edges denote the total numbers of nodes and edges, respectively; 
$\mathcal{T}_V$ and $\mathcal{T}_E$ are the numbers of node and edge types; 
\#Class is the number of target classes; $F_{\text{target}}$ is the feature dimension of target nodes; 
$h$ denotes the graph-level homophily ratio.}

\label{tab:appendix-dataset-details}
\small 
\begin{tabular}{lcccccccc}
\toprule
Dataset &  \#Nodes &  \#Edges & $\mathcal{T}_V$ & $\mathcal{T}_E$ & \#Class & $F_{\text{target}}$ & $h$ \\
\midrule
DBLP  & 26,128     & 239,566     & 4 & 6 & 4   & 334   & 0.81 \\
IMDB  & 21,420     & 86,642      & 4 & 6 & 5   & 3066  & 0.16 \\
ACM   & 10,942      & 547,872     & 4 & 8 & 4   & 1902  & 0.59 \\
\rowcolor{gray!10}
\textbf{MAG} & 1,939,743 & 21,111,007 & 4 & 4 & 349 & 128  & 0.21 \\
\rowcolor{gray!10}
\textbf{RCDD} & 13,806,619 & 157,814,864 & 7 & 7 & 2 & 256  & 0.45 \\
\bottomrule
\end{tabular}
\end{table}

We conduct experiments on several widely used heterogeneous graph benchmarks, all of which are publicly available through the HGB\footnote{\url{https://www.biendata.xyz/hgb/}} platform. Below we briefly describe the characteristics of each dataset; detailed statistics, including graph scale, heterogeneity, and homophily ratios, are summarized in Table~\ref{tab:appendix-dataset-details}.
\begin{itemize}
    \item DBLP\footnote{\url{http://web.cs.ucla.edu/~yzsun/data/}} is a bibliographic network in the computer science domain, consisting of four node types: authors, papers, terms, and venues.  
The graph contains six relation types, including paper–term, paper–venue, and paper–author interactions, as well as their reverse directions.  
The task is node classification on author nodes, where each author is assigned to one of four research areas: database, data mining, artificial intelligence, and information retrieval.

\item IMDB\footnote{\url{https://www.kaggle.com/karrrimba/}} is a heterogeneous information network describing the movie industry.  
It contains four node types (movies, directors, actors, and keywords) and six relation types such as movie–director, movie–actor, and movie–keyword, together with their inverse relations.  
Each movie node may belong to multiple genres, and the task is to predict its associated categories, including action, comedy, drama, romance, and thriller.
\item ACM\footnote{\url{http://dl.acm.org/}} is a heterogeneous citation network introduced in~\cite{HAN}, comprising four node types: authors, papers, terms, and subjects.  
Relations include paper citation, authorship, topic assignment, and term association, along with their reverse edges.  
The objective is to classify paper nodes into three research fields: database, wireless communication, and data mining.

\item MAG\footnote{\url{https://ogb.stanford.edu/docs/nodeprop/}} is a large-scale academic heterogeneous network composed of multiple entity types, including papers, fields of study, and authors. It encodes rich semantic relations such as paper–field associations and paper–author collaborations, together with corresponding reverse edges.  
In our experiments, we focus on paper classification, where each paper node is assigned to a research domain.

\item RCDD\footnote{\url{https://zenodo.org/record/8103003}} is a real-world heterogeneous graph constructed for risk commodity detection on Alibaba’s e-commerce platform.  
The dataset contains multiple node types (e.g., items, type-\(f\) objects, type-\(b\) objects, and other auxiliary entities) that capture complex interactions in online transactions, as well as diverse item–object relations and their reverse edges.  
The task is to classify item nodes into risky and non-risky categories. It should be note that, RCDD is particularly challenging. It is large-scale and exhibits severe class imbalance, where negative (“black”) and positive (“white”) samples are distributed at an approximate ratio of 10:1.  
Moreover, the graph structure is noisy: malicious users may deliberately construct seemingly benign relations by spoofing devices, addresses, or other identifiers.  
Consequently, many connected nodes share low or even zero attribute similarity, leading to pronounced attribute heterophily.  
These properties make RCDD a demanding benchmark for heterogeneous graph representation learning under heterophily.
\end{itemize}

In practice, we use short symmetric meta-paths anchored at the target node type, following standard heterogeneous graph benchmark conventions and a simple schema-driven rule: we choose short, semantically meaningful symmetric paths that capture the primary typed interactions around the target nodes while keeping the structural channel efficient and reproducible. For the large-scale MAG and RCDD datasets in particular, we intentionally use a small set of short symmetric meta-paths to balance structural expressiveness and scalability. 

The specific meta-paths selected for each dataset are summarized in Table~\ref{tab:app_metapaths}.

\begin{table}[h]
    \centering
    \caption{Summary of selected symmetric meta-paths for each dataset.}
    \label{tab:app_metapaths}
    \begin{tabular}{lll}
        \toprule
        \textbf{Dataset} & \textbf{Target Node} & \textbf{Selected Symmetric Meta-paths} \\
        \midrule
        DBLP & Author & A-P-A \\
        IMDB & Movie & M-A-M, M-D-M, M-K-M \\
        ACM & Paper & P-A-P, P-S-P, P-T-P \\
        MAG & Paper & P-A-P, P-F-P \\
        RCDD & Item & I-F-I, I-B-I \\
        \bottomrule
    \end{tabular}
\end{table}

\section{Further Details for the Experimental Studies}\label{add_exp}
\subsection{Additional Details for Baseline Methods}
\label{app:baseline}

To evaluate the effectiveness of HeterSEED, we compare it with representative baselines from four categories: vanilla GNN models, heterogeneous GNN models under homophily, homogeneous GNN models tailored for heterophily, and heterophily-aware HGNNs.

\textbf{Vanilla GNN models.}
These methods are designed under the homophily assumption and mainly rely on feature-similarity-based message passing.
They serve as standard references for performance degradation under heterophily.
\begin{itemize}
        \item \textbf{GCN}~\cite{GCN} performs neighborhood aggregation via shared linear transformations over normalized adjacency, which is effective on homophilous graphs but vulnerable to noisy neighbors under heterophily.
    \item \textbf{GAT}~\cite{GAT} extends GCN with attention-based neighbor weighting, but still implicitly assumes that similar neighbors are informative and thus may struggle on disassortative graphs.
\end{itemize}

\textbf{Homogeneous GNN models under heterophily.}
These models are specifically designed to alleviate the limitations of conventional GNNs on graphs with low homophily.
\begin{itemize}
    \item \textbf{LINKX}~\cite{LINKX} abandons message passing and separately transforms node features and adjacency, enabling scalable learning on large non-homophilous graphs.
    \item \textbf{FAGCN}~\cite{FAGCN} introduces a frequency-adaptive graph convolution with a self-gating mechanism to integrate low- and high-frequency signals across different homophily regimes.
    \item \textbf{ACM-GCN}~\cite{ACM-GCN} proposes adaptive channel mixing over aggregation, diversification, and identity channels to handle diverse heterophily patterns.
    \item \textbf{GRAIN}~\cite{zhao2025grain} aggregates multi-view information at different granularities and incorporates implicit signals from distant nodes, fusing local and global information for robust node representations.
\end{itemize}

\textbf{HGNN models under homophily.}
To benchmark performance on heterogeneous graphs, we include several representative HGNNs:
RGCN~\cite{RGCN}, RGAT~\cite{RGAT}, HAN~\cite{HAN}, HGT~\cite{HGT}, SHGN~\cite{SHGN}, HINormer~\cite{HINormer}, and DiffGraph~\cite{li2025diffgraph}.
\begin{itemize}
    \item \textbf{RGCN}~\cite{RGCN} extends GCN to multi-relational graphs via relation-specific transformations.
    \item \textbf{RGAT}~\cite{RGAT} generalizes GAT with relation-aware attention to capture inter- and intra-type dependencies.
    \item \textbf{HAN}~\cite{HAN} uses hierarchical attention over nodes and metapaths to learn semantic-aware node embeddings.
    \item \textbf{HGT}~\cite{HGT} is a heterogeneous graph transformer with type-specific parameters and heterogeneous attention for large-scale graphs.
    \item \textbf{SHGN}~\cite{SHGN} provides a strong, reproducible HGNN baseline via standardized preprocessing and hyperparameters, showing that carefully tuned simple architectures can be highly competitive.
    \item \textbf{HINormer}~\cite{HINormer} adopts a transformer-style architecture with large-range aggregation and dedicated encoders to capture both structural and semantic information.
    \item \textbf{DiffGraph}~\cite{li2025diffgraph} performs latent heterogeneous graph diffusion with cross-view denoising to mitigate noise and model semantic transitions among heterogeneous relations.
\end{itemize}

\textbf{Heterophily-aware HGNN model.} This category explicitly accounts for both heterogeneity and heterophily. 
A representative baseline is \textbf{HETERO$^2$NET}~\cite{Hetero2Net}, which detects heterophily via metapaths and employs masked metapath prediction together with masked label prediction to improve robustness on low-homophily heterogeneous graphs.
\subsection{Experimental Setup and Hyperparameter Settings}
\label{app:best}

All experiments are implemented in PyTorch and run on a single NVIDIA RTX A6000 GPU with 32GB GPU memory. 
For each dataset, hyperparameters are tuned based on validation performance.
The learning rate is selected from 
$\{1,5\} \times \{10^{-4}, 10^{-3}\}$ (i.e., $\{1\mathrm{e}{-4}, 5\mathrm{e}{-4}, 1\mathrm{e}{-3}, 5\mathrm{e}{-3}\}$), 
the hidden dimension from $\{128, 256\}$, and the dropout ratio from $\{0.3, 0.5, 0.7, 0.9\}$.
The number of layers of HeterSEED is fixed to 2 in all experiments.
The number of training epochs is chosen from $\{50, 100\}$ according to validation performance, depending on dataset scale. We use the Adam
optimizer to optimize all the trainable model parameters, which are randomly initialized by the Xavier uniform distribution\cite{glorot2010understanding}.

For the large-scale MAG and RCDD datasets, we adopt mini-batch training with batch size 1024.
The neighbor sampling size is fixed to $[15, 15]$ for efficiency and is not included in the hyperparameter search.
Channel-related hyperparameters are tuned as follows:
the decoupling coefficient $\alpha$ is selected from $\{0.2, 0.3, 0.4, 0.5, 0.6\}$, 
and the masked label rate $\beta$ from $\{0.6, 0.7, 0.8, 0.9, 1.0\}$. For the MAG dataset, we set $\beta=1$ to match the configuration of HETERO$^2$NET for a fair comparison. Under this setting, all observed training labels are fully masked during training, meaning that the model does not directly use ground-truth label embeddings on MAG.
The final hyperparameter configurations for each dataset are summarized in 
Table~\ref{tab:hyperparameters}, facilitating faithful reproduction of our experimental setup.
The neighbor sampling size is fixed to $[15, 15]$ for efficiency considerations and is therefore not included
in the hyperparameter search.
In addition, the channel-related hyperparameters $\alpha$ and $\beta$, which control the relative contributions
of heterogeneous semantic aggregation and structure-aware heterophily modeling, are tuned within
$\alpha \in \{0.2, 0.3, 0.4, 0.5, 0.6\}$ and $\beta \in \{0.6, 0.7, 0.8, 0.9, 1.0\}$.
The final hyperparameter configurations that yield optimal performance for each dataset are reported in
Table~\ref{tab:hyperparameters}, enabling accurate reproduction of the experimental setup and fair comparison
of performance results.\vspace*{-6mm}
\begin{table}[htbp]
    \centering
    \caption{Hyperparameter settings for the five datasets used in the experiments.}
    \label{tab:hyperparameters}
    \small 
        \begin{tabular}{m{1.5cm}|m{4.5cm}|m{4.5cm}}
        \toprule
        \textbf{Dataset} & \multicolumn{2}{c}{\textbf{Hyperparameter Setting}} \\ 
        \midrule
        \textbf{DBLP} & 
        Learning rate: 1e-3 \newline Hidden Size: 128 \newline Dropout ratio: 0.7 \newline Epochs: 50 & 
        Layers: 2 \newline Alpha: 0.2 \newline Beta: 0.7  \\ 
        \midrule
        \textbf{IMDB} & 
        Learning rate: 5e-3 \newline Hidden Size: 128 \newline Dropout ratio: 0.9 \newline Epochs: 50 & 
        Layers: 2 \newline Alpha: 0.2 \newline Beta: 0.6   \\ 
        \midrule
        \textbf{ACM} & 
        Learning rate: 5e-3 \newline Hidden Size: 128 \newline Dropout ratio: 0.9 \newline Epochs: 100 & 
        Layers: 2 \newline Alpha: 0.3 \newline Beta: 0.7  \\ 
        \midrule
        \textbf{MAG} & 
        Learning rate: 5e-3 \newline Hidden Size: 256 \newline Dropout ratio: 0.3 \newline Epochs: 50 \newline Batch size 1024 & 
        Layers: 2 \newline Alpha: 0.4 \newline Beta: 1.0 \newline Num neighbors: [15, 15] \\ 
        \midrule
        \textbf{RCDD} & 
        Learning rate: 5e-3 \newline Hidden Size: 256 \newline Dropout ratio: 0.7 \newline Epochs: 100 \newline Batch size 1024 & 
        Layers: 2 \newline Alpha: 0.2 \newline Beta: 0.7 \newline Num neighbors: [15, 15] \\ 
        \bottomrule
    \end{tabular}
\end{table}

\section {Further Experimental Results}
\subsection{Additional Results for Ablation Study}\label{app:ablation}
Table~\ref{app_tab:ablation} reports the complete ablation results of HeterSEED on all five datasets.
Consistent with the observations in the main text (Section~\ref{main_abl}), the full HeterSEED model always achieves the best Macro-F1 and Micro-F1, while removing any single component leads to a noticeable performance drop.
On relatively homophilic datasets such as DBLP and ACM, discarding the homophilic branch (\textbf{w/o Homo}) causes clear degradation, underscoring the role of intra-class aggregation.
On low-homophily datasets, particularly IMDB, MAG and RCDD, eliminating the heterophilic branch (\textbf{w/o Hetero}) or the pseudo-label–guided structural channel (\textbf{w/o SHC}) results in the largest declines, confirming that explicit modeling of heterophilic neighbors is crucial in these regimes.
The decoupling loss (\textbf{w/o Dec}) and masked-label mechanism (\textbf{w/o Mask}) also provide consistent, though slightly smaller, gains, further supporting the benefit of semantics--structure disentanglement and robust label propagation.

\begin{table*}[t]
\centering
\caption{Ablation study on HeterSEED. }
\label{app_tab:ablation}
\setlength{\tabcolsep}{4pt} 
\resizebox{0.99\textwidth}{!}{
\begin{tabular}{l|cc|cc|cc|cc|cc}
\toprule
\textbf{Dataset} & \multicolumn{2}{c|}{\bfseries DBLP} & \multicolumn{2}{c|}{\bfseries IMDB} & \multicolumn{2}{c|}{\bfseries ACM} & \multicolumn{2}{c|}{\bfseries MAG} & \multicolumn{2}{c}{\bfseries RCDD} \\
\cmidrule(lr){2-3} \cmidrule(lr){4-5} \cmidrule(lr){6-7} \cmidrule(lr){8-9} \cmidrule(lr){10-11}
Model $\backslash$ Metric & Macro-F1 & Micro-F1 & Macro-F1 & Micro-F1 & Macro-F1 & Micro-F1 & Macro-F1 & Micro-F1 & Macro-F1 & Micro-F1 \\
\midrule
w/o SHC & \Data{93.88}{0.41} & \Data{94.30}{0.69} & \Data{65.79}{0.52} & \Data{69.95}{0.85} & \Data{93.86}{0.72} & \Data{93.75}{0.69} & \Data{33.10}{0.33} & \Data{54.34}{0.20} &\Data{92.60}{0.21} & \Data{98.51}{0.05}\\
w/o Dec          & \Data{94.16}{0.75} & \Data{94.59}{0.68} & \Data{66.28}{0.49} & \Data{70.29}{0.66} & \Data{93.96}{0.90} & \Data{93.85}{0.82} & \Data{33.33}{0.54} & \Data{54.30}{0.23} &\Data{92.49}{0.23} & \Data{98.49}{0.05}\\
w/o Homo         & \Data{93.87}{0.54} & \Data{94.23}{0.58} & \Data{66.10}{1.17} & \Data{70.00}{0.82} & \Data{93.98}{1.11} & \Data{94.09}{1.12} & \Data{34.00}{0.21} & \Data{55.83}{0.16} &\Data{92.65}{0.13} & \Data{98.53}{0.02} \\
w/o Hetero       & \Data{93.66}{0.74} & \Data{94.13}{0.74} & \Data{66.46}{0.74} & \Data{70.25}{0.68} & \Data{94.18}{0.94} & \Data{94.09}{0.96} & \Data{33.97}{0.43} & \Data{55.91}{0.21} & \Data{92.50}{0.19} & \Data{98.45}{0.04}\\
w/o Mask         & \Data{93.99}{0.61} & \Data{94.35}{0.60} & \Data{66.11}{0.47} & \Data{69.46}{0.69} & \Data{93.73}{1.12} & \Data{93.62}{1.11} & \Data{28.31}{0.47} & \Data{50.61}{0.46} & \Data{92.61}{0.17} & \Data{98.32}{0.08}\\
\midrule
\rowcolor{spiderblue} 
\textbf{HeterSEED (Ours)} & \BestData{94.48}{0.81} & \BestData{94.87}{0.73} & \BestData{66.71}{0.73} & \BestData{70.60}{0.82} & \BestData{94.34}{0.81} & \BestData{94.21}{0.77} & \BestData{34.99}{0.22} & \BestData{56.25}{0.25} & \BestData{93.09}{0.15} & \BestData{98.60}{0.03}\\
\bottomrule
\end{tabular}
}
\end{table*}

\subsection{Disentangling Label Injection and Structural Modeling}
\label{app:label_structure_ablation}


To clearly attribute the performance gains of HeterSEED, we conduct a series of controlled experiments to explicitly disentangle the contributions of label embedding, label-aware propagation, the masking mechanism, and our heterophily-aware structural modeling. The detailed comparison results are summarized in Table~\ref{tab:label_injection}.

First, \textbf{HGNN-LE} removes both the structural branch and the masking mechanism, retaining only the label embedding module appended to the base heterogeneous semantic channel. The substantial performance gap between this variant and the full model across the datasets---particularly on highly heterophilic graphs such as IMDB---provides strong evidence that simple label embeddings alone are insufficient to account for the observed improvements.
\begin{table*}[h]
\centering
\vspace{-2mm}
\caption{Disentanglement of label injection vs. structural modeling.}
\label{tab:label_injection}
\setlength{\tabcolsep}{6pt}
\footnotesize
\begin{tabular}{l|cc|cc|cc}
\toprule
\textbf{Dataset} & \multicolumn{2}{c|}{\bfseries DBLP} & \multicolumn{2}{c|}{\bfseries IMDB} & \multicolumn{2}{c}{\bfseries ACM} \\
\cmidrule(lr){2-3} \cmidrule(lr){4-5} \cmidrule(lr){6-7}
Model $\backslash$ Metric & Macro-F1 & Micro-F1 & Macro-F1 & Micro-F1 & Macro-F1 & Micro-F1 \\
\midrule
HGNN-LE              & \Data{91.19}{0.78} & \Data{91.84}{0.64} & \Data{47.58}{4.18} & \Data{57.80}{2.65} & \Data{93.67}{0.81} & \Data{93.55}{0.78} \\
SeHGNN               & \Data{93.62}{0.20} & \Data{94.06}{0.20} & \Data{65.67}{0.49} & \Data{68.40}{0.40} & \Data{93.11}{0.53} & \Data{93.20}{0.51} \\
HINormer+            & \Data{93.94}{0.20} & \Data{94.43}{0.26} & \Data{65.52}{1.96} & \Data{68.33}{1.44} & \Data{94.01}{0.76} & \Data{93.97}{0.93} \\
DiffGraph+           & \Data{92.54}{0.93} & \Data{92.79}{0.84} & \Data{59.57}{1.54} & \Data{64.57}{1.14} & \Data{94.18}{0.87} & \Data{94.18}{0.98} \\
HeterSEED-NM & \Data{91.87}{0.47} & \Data{92.43}{0.40} & \Data{48.99}{4.29} & \Data{58.63}{2.79} & \Data{93.91}{1.44} & \Data{93.72}{1.40} \\
\midrule
\rowcolor{spiderblue} 
\textbf{HeterSEED (Full)} & \BestData{94.48}{0.81} & \BestData{94.87}{0.73} & \BestData{66.71}{0.73} & \BestData{70.60}{0.82} & \BestData{94.34}{0.81} & \BestData{94.21}{0.77} \\
\bottomrule
\end{tabular}
\end{table*}

\begin{table}[h]
\centering
\vspace{-2mm}
\caption{Disentanglement of label injection vs. structural modeling on large-scale datasets.}
\label{tab:label_injection_large}
\setlength{\tabcolsep}{6pt} {
\footnotesize
\begin{tabular}{l|cc|cc}
\toprule
\textbf{Dataset} & \multicolumn{2}{c|}{\bfseries MAG} & \multicolumn{2}{c}{\bfseries RCDD} \\
\cmidrule(lr){2-3} \cmidrule(lr){4-5}
Model $\backslash$ Metric & Macro-F1 & Micro-F1 & Macro-F1 & Micro-F1 \\
\midrule
HGNN-LE                 & \Data{9.34}{0.32} & \Data{29.96}{1.02} & \Data{92.02}{0.77} & \Data{98.15}{0.20} \\
HeterSEED-NM    & \Data{9.83}{0.41} & \Data{30.85}{0.97} & \Data{92.42}{0.26} & \Data{98.44}{0.07} \\
\midrule
\rowcolor{spiderblue} 
\textbf{HeterSEED (Full)} & \BestData{34.99}{0.22} & \BestData{56.25}{0.25} & \BestData{93.09}{0.15} & \BestData{98.60}{0.03} \\
\bottomrule
\end{tabular}
}
\end{table}
Second, we benchmark against \textbf{propagation-style heterogeneous baselines}, including SeHGNN, HINormer+, and DiffGraph+. Notably, their ``+'' variants are further equipped with our label-guided masking strategy to ensure a fair comparison. Despite this enhanced supervision interface, these models consistently underperform HeterSEED. This indicates that neither label-aware propagation nor architectural variations on conventional message passing can match the effectiveness of our structural decoupling approach.

Third, \textbf{HeterSEED-NM} disables the masking mechanism while keeping the rest of the model architectures intact. Interestingly, the empirical impact of the masking mechanism varies significantly depending on the structural properties of the datasets. On relatively homophilic or structurally stable graphs (e.g., ACM, DBLP), the performance degradation is moderate. However, on highly heterophilic datasets like IMDB, removing the masking mechanism leads to a drastic drop in performance (e.g., dropping by nearly $17\%$ in Macro-F1). This stark contrast reveals a critical insight: when node features and labels are severely misaligned, naive label injection suffers from severe confirmation bias. In such scenarios, the masking mechanism acts as an indispensable safeguard rather than a mere regularizer. It forces the model to genuinely learn and exploit the decoupled structural patterns, preventing it from taking a ``shortcut'' by simply memorizing the explicitly leaked label information.

Finally, by comparing the full model with the variants in our core ablation study (Table~\ref{app_tab:ablation}) that remove the pseudo-label-guided structural components (e.g.,  w/o Pseudo-label, w/o Dec, w/o Homo, w/o Hetero, w/o Mask), we observe a substantial performance degradation, especially on heterophilic datasets. This confirms that the heterophily-aware structural channel is the cornerstone for capturing informative neighborhood patterns.

Taken together, these controlled experiments provide a clear and definitive attribution of our model's performance: the gains of HeterSEED do not merely stem from injecting or propagating label information. While label embeddings offer auxiliary feature augmentation, it is the pseudo-label-guided structural modeling that plays the decisive role. Specifically, pseudo-labels serve strictly as structural cues to partition neighbors into homophilic and heterophilic groups, thereby enabling effective heterophily-aware aggregation. Importantly, we clarify that these pseudo-labels are not propagated as explicit supervision signals, but are utilized exclusively to guide the structural disambiguation within the structure channel.

\subsection{Performance under Varying Homophily}
\label{app:homophily}
To complement the RQ4 analysis on real data in the main text, we further conduct a robustness stress test by synthetically manipulating homophily in a controlled setting. Concretely, for a target node type (e.g., \emph{author}) and a given meta-path scheme $\mathcal{P}$ (e.g., $A\!\to\!P\!\to\!A$), we apply an \textbf{SBM-based label injection} mechanism to control the Metapath-based Label Homophily (MLH). We introduce a swap ratio $\rho \in [0,1]$ and generate two injection modes: (i) \emph{high-homophily injection}, where with probability $\rho$ all nodes in a meta-path–induced clique share the same (random) label, mimicking strong community coherence; and (ii) \emph{low-homophily injection}, where labels within each clique are shuffled to enforce diversity, simulating adversarial heterophily. By varying $\rho$ and the injection mode, we obtain a family of synthetic benchmarks that span five MLH intervals $(0,0.2], (0.2,0.4], \dots, (0.8,1.0]$.
\vspace*{-2mm}
\begin{figure}[htbp]
    \centering    \includegraphics[width=0.78\textwidth]{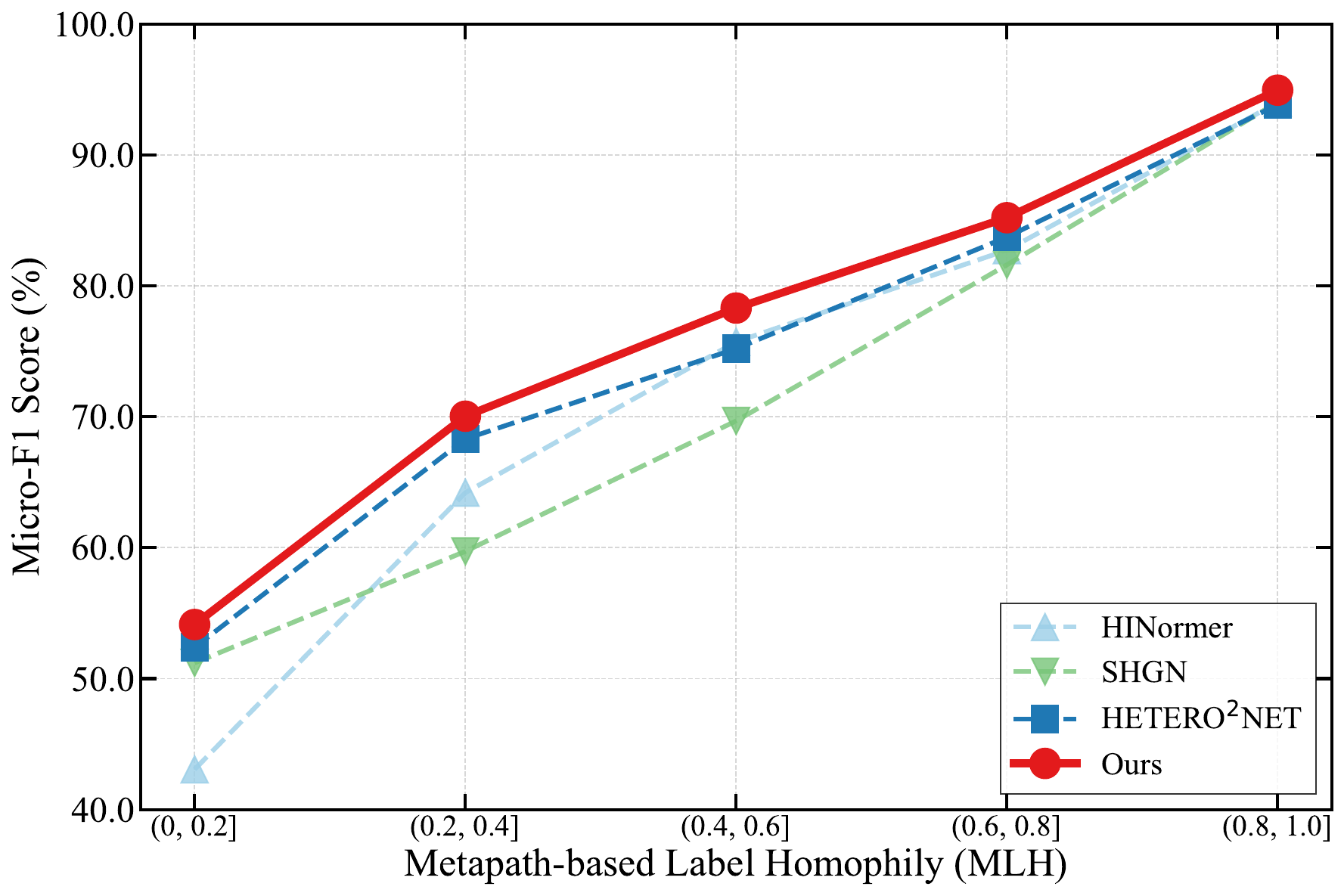} \vspace{-2mm}
    \caption{Micro-F1 of HeterSEED and baselines across Metapath-based Label Homophily (MLH) levels.}    \label{fig:homophily_analysis}\vspace*{-3mm}
\end{figure}

Figure~\ref{fig:homophily_analysis} reports the Micro-F1 scores of HeterSEED and three strong HGNN baselines across these MLH intervals. The performance of all methods generally improves as MLH increases, but HeterSEED consistently achieves the highest accuracy in every interval. In particular, in the low-homophily regime (e.g., $\mbox{MLH} \in (0,0.2]$), where conventional HGNNs are most vulnerable, HeterSEED still maintains a clear margin over the strongest baseline HETERO$^2$NET (54.13\% vs.\ 53.39\%). 

This trend is consistent with our theoretical results in Section~4: Theorem~\ref{thm:expressiveness_main} shows that the semantics–structure decoupling in HeterSEED yields a strictly richer hypothesis class on heterophilic heterogeneous graphs, while Proposition~\ref{prop:bias_main} demonstrates that separating homophilic and heterophilic neighbors reduces the prediction bias induced by heterophilic edges. When MLH is small, these advantages are most pronounced, leading to larger empirical gains; as MLH increases, the bias term diminishes and the performance gap narrows but remains positive. Overall, the synthetic experiments empirically support the theoretical insights that HeterSEED both enhances expressiveness and mitigates heterophily-induced bias.

\subsection{Empirical Validation of Synthetic Benchmarks}
\label{appendix:sanity_check}
To ensure that the synthetic benchmarks used in our robustness analysis behave in a controlled and meaningful way, we perform a sanity check on the DBLP dataset. Figures~\ref{fig:sanity_micro} and~\ref{fig:sanity_macro} report the APA metapath homophily $H$ and the corresponding Micro-F1 / Macro-F1 of HeterSEED under different label perturbation intensities $\rho$.

\textbf{Observation 1 (Controllability of topology).} As the swap ratio $\rho$ increases from $0.0$ to $1.0$, the observed APA homophily $H$ monotonically decreases from about $0.80$ to a stable floor around $0.20$. This confirms that our SBM-based label injection mechanism reliably drives the graph from a homophilous to a heterophilous regime.

\textbf{Observation 2 (Performance sensitivity).} Both Micro-F1 and Macro-F1 degrade smoothly as $H$ decreases, indicating that stronger heterophily indeed makes the prediction task more challenging. Similar trends are observed for baseline HGNNs (omitted for brevity), supporting the view that heterophily is a primary bottleneck for conventional heterogeneous GNNs.

These observations validate that the synthetic MLH-controlled benchmarks used in Section~\ref{rq4_exp} faithfully reflect the impact of varying homophily and provide a sound basis for the interval-based robustness evaluation in the main text.
\begin{figure}[htbp]
    \centering
    \begin{subfigure}[b]{0.48\textwidth}
        \centering
        \includegraphics[width=\linewidth]{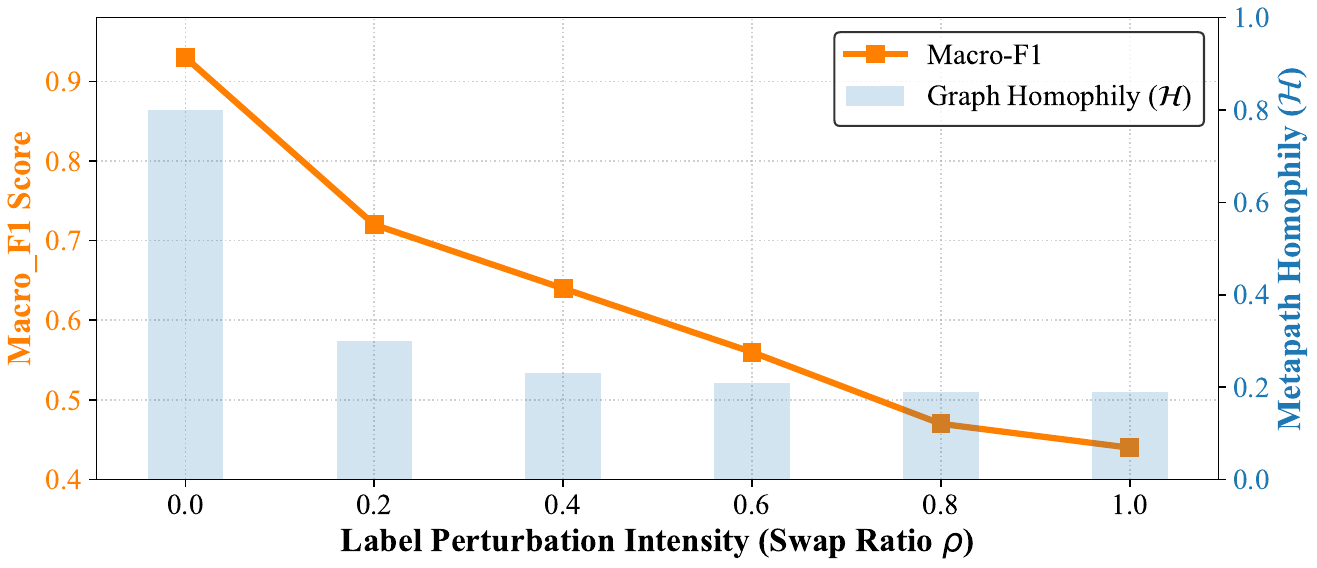}
        \caption{Micro-F1 Performance}
        \label{fig:sanity_micro}
    \end{subfigure}
    \hfill 
    \begin{subfigure}[b]{0.48\textwidth}
        \centering
        \includegraphics[width=\linewidth]{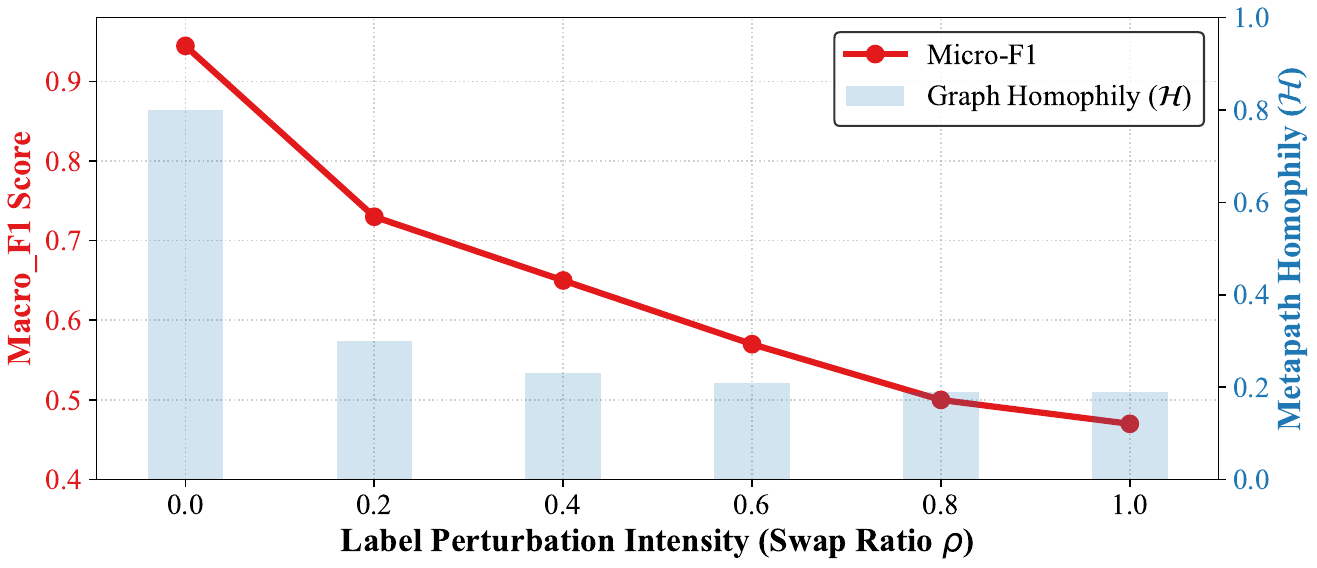}
        \caption{Macro-F1 Performance}
        \label{fig:sanity_macro}
    \end{subfigure}    
    \caption{Effect of label perturbation intensity $\rho$ on APA metapath homophily $H$ and model performance on DBLP. Panels (a) and (b) show Micro-F1 and Macro-F1 of HeterSEED and the corresponding APA homophily $H$, respectively.}
    \label{fig:appendix_sanity_comparison}\vspace{-3mm}
\end{figure}

\subsection{Average Precision on the RCDD Dataset}
\label{app:rcdd_ap}

Figure~\ref{fig:RCDD_AP} reports the Average Precision (AP) of all models on the large-scale RCDD dataset. \textbf{HeterSEED} attains the highest AP and thus ranks first, while HETERO$^{2}$NET is consistently the second-best method; the remaining models (SHGN, HGT, RGAT, HAN, RGCN, ACM-GCN, FAGCN, GAT, GCN) lag further behind. In particular, the margin between HeterSEED and the strongest baseline is noticeable, and HeterSEED also exhibits small variance across runs, indicating stable performance. Together with the F1 results in the main text, this AP comparison suggests that the proposed semantics–structure decoupling and heterophily-aware aggregation enable HeterSEED to cope with the severe heterophily and class imbalance of RCDD more effectively than existing HGNNs and heterophily-aware baselines.
 \vspace{-4mm}
\begin{figure}[htbp!]
    \centering
    \includegraphics[width=0.98\linewidth]{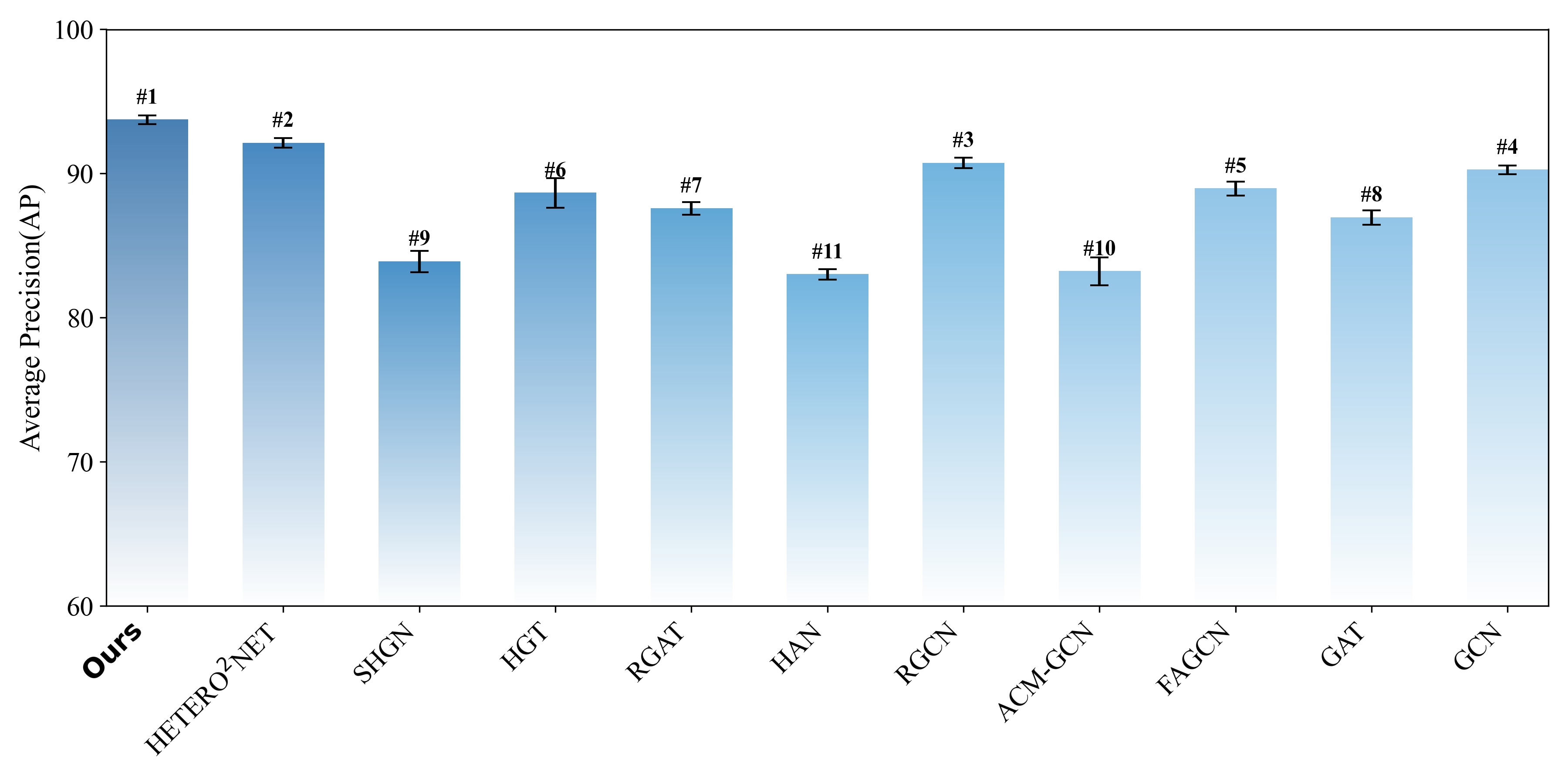} 
    \vspace{-3mm}
    \caption{Average Precision (AP) of different models on the RCDD dataset.}
    \label{fig:RCDD_AP}
\end{figure}

\subsection{Additional Results of Hyperparameter Sensitivity Analysis}
\label{app:add results}
\begin{figure*}[htbp!]
    \centering
    \begin{subfigure}[b]{0.45\textwidth}
        \centering
        \includegraphics[width=\textwidth]{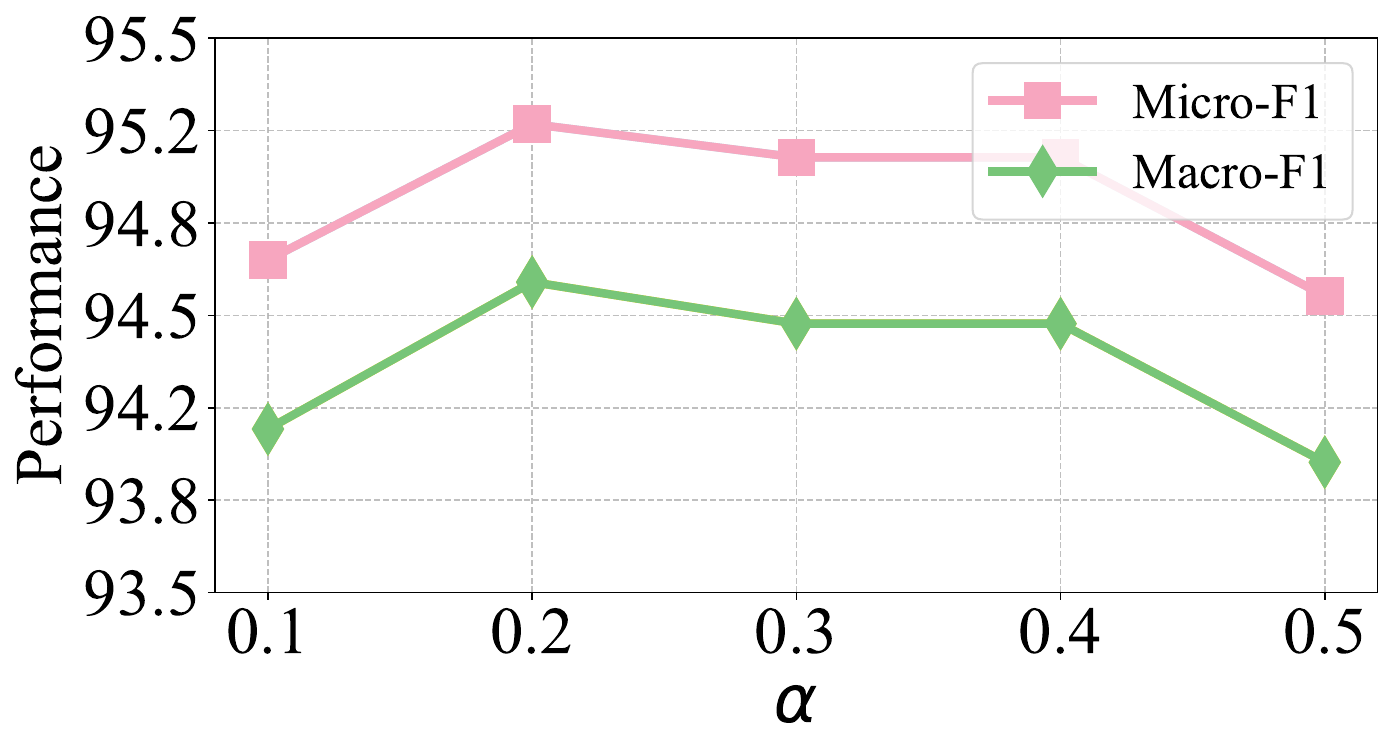}
        \label{fig:dblp_alpha}
    \end{subfigure}
    \hfill
    \begin{subfigure}[b]{0.45\textwidth}
        \centering
        \includegraphics[width=\textwidth]{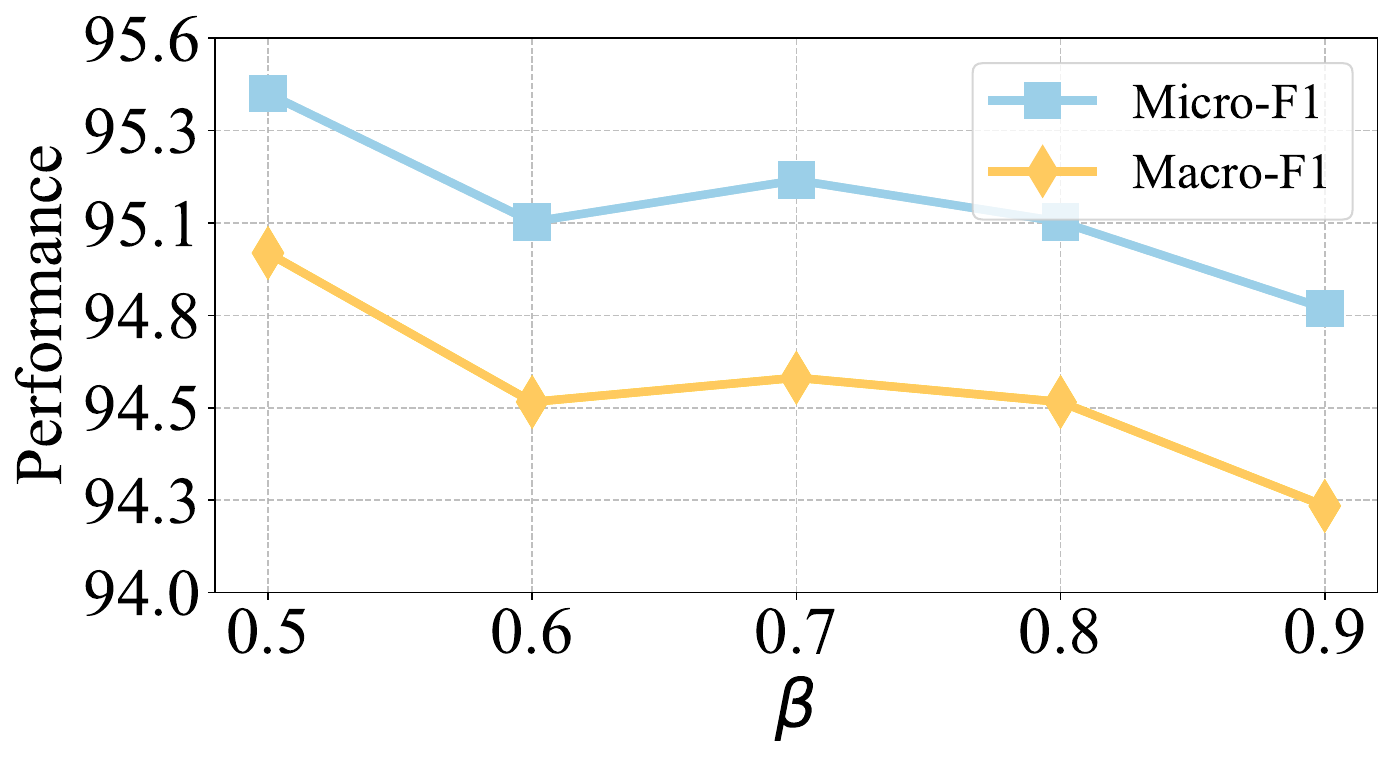}
        \label{fig:dblp_beta}
    \end{subfigure}
    \vspace{2mm} 

    \begin{subfigure}[b]{0.45\textwidth}
        \centering
        \includegraphics[width=\textwidth]{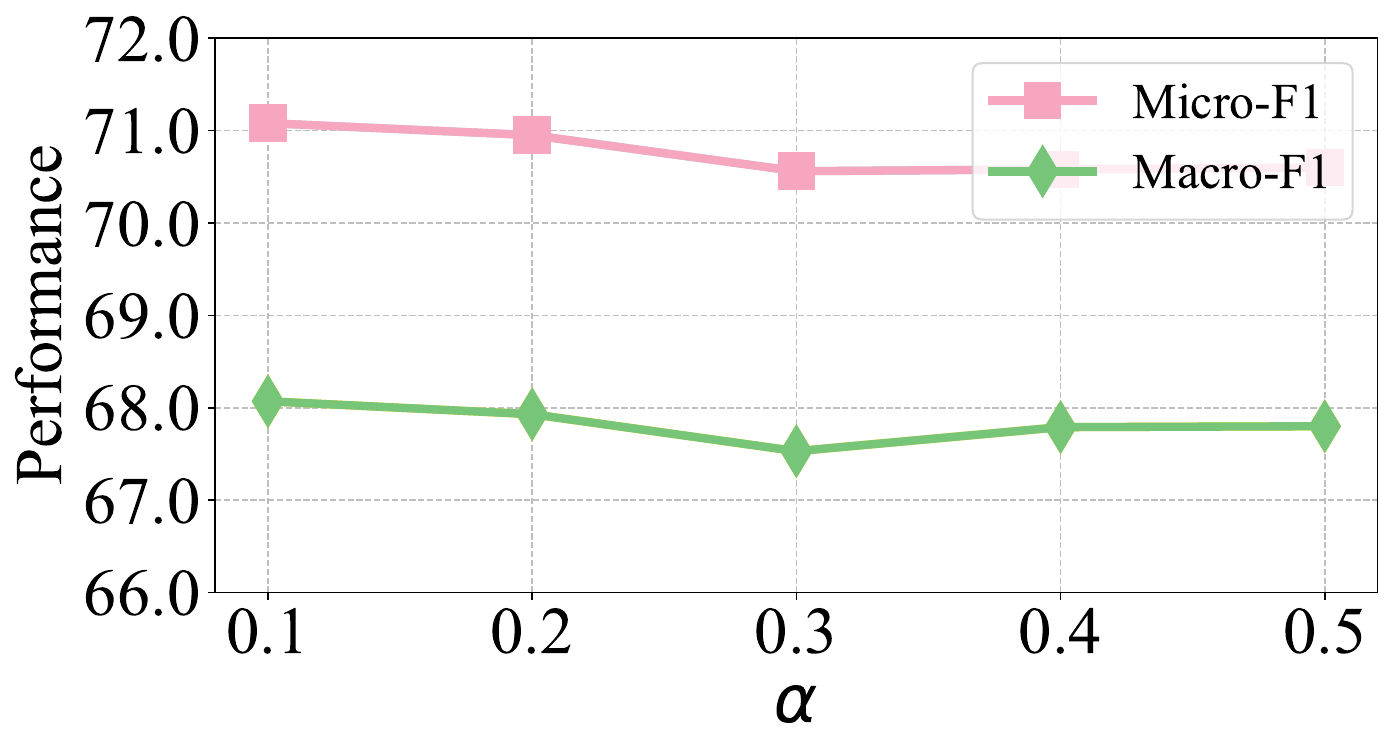}
        \label{fig:imdb_alpha}
    \end{subfigure}
    \hfill
    \begin{subfigure}[b]{0.45\textwidth}
        \centering
        \includegraphics[width=\textwidth]{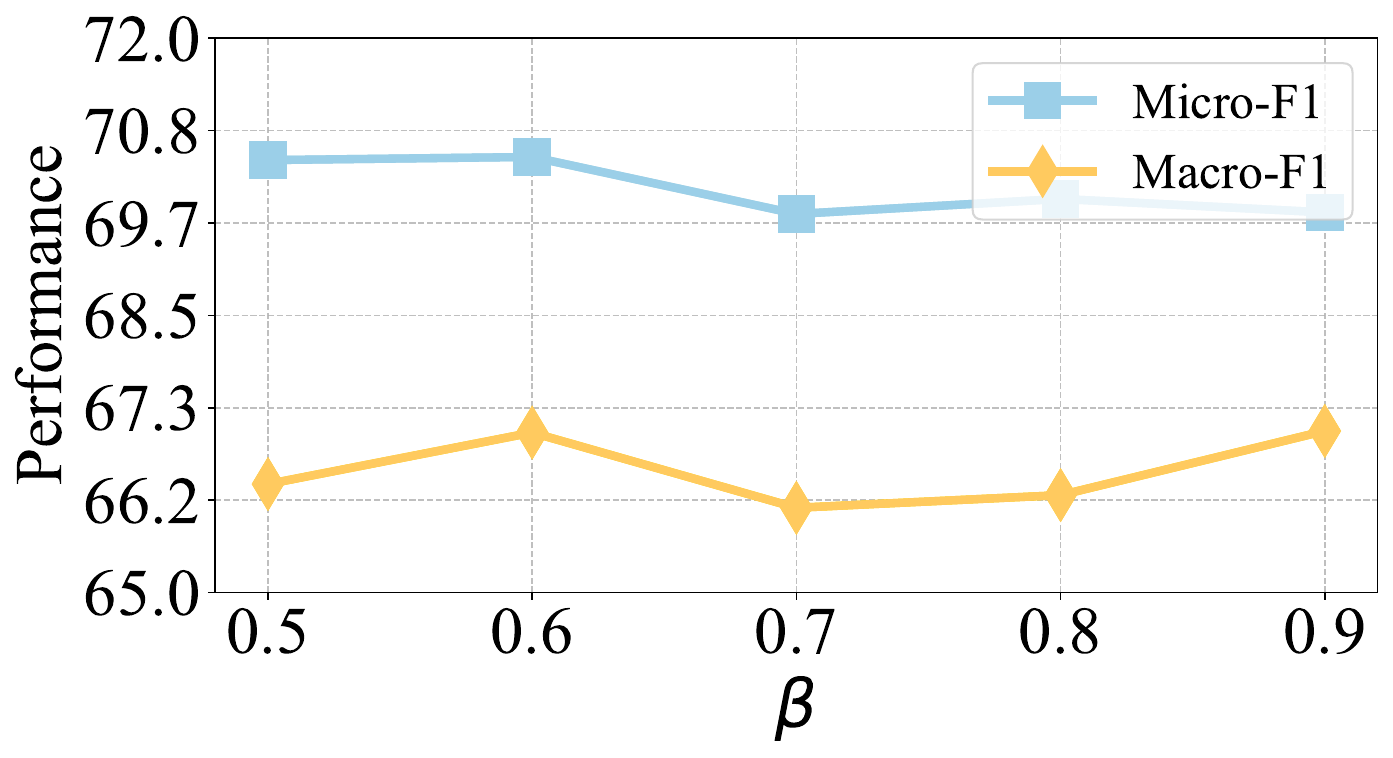}
        \label{fig:imdb_beta}
    \end{subfigure}
    \vspace{2mm}

    \begin{subfigure}[b]{0.45\textwidth}
        \centering
        \includegraphics[width=\textwidth]{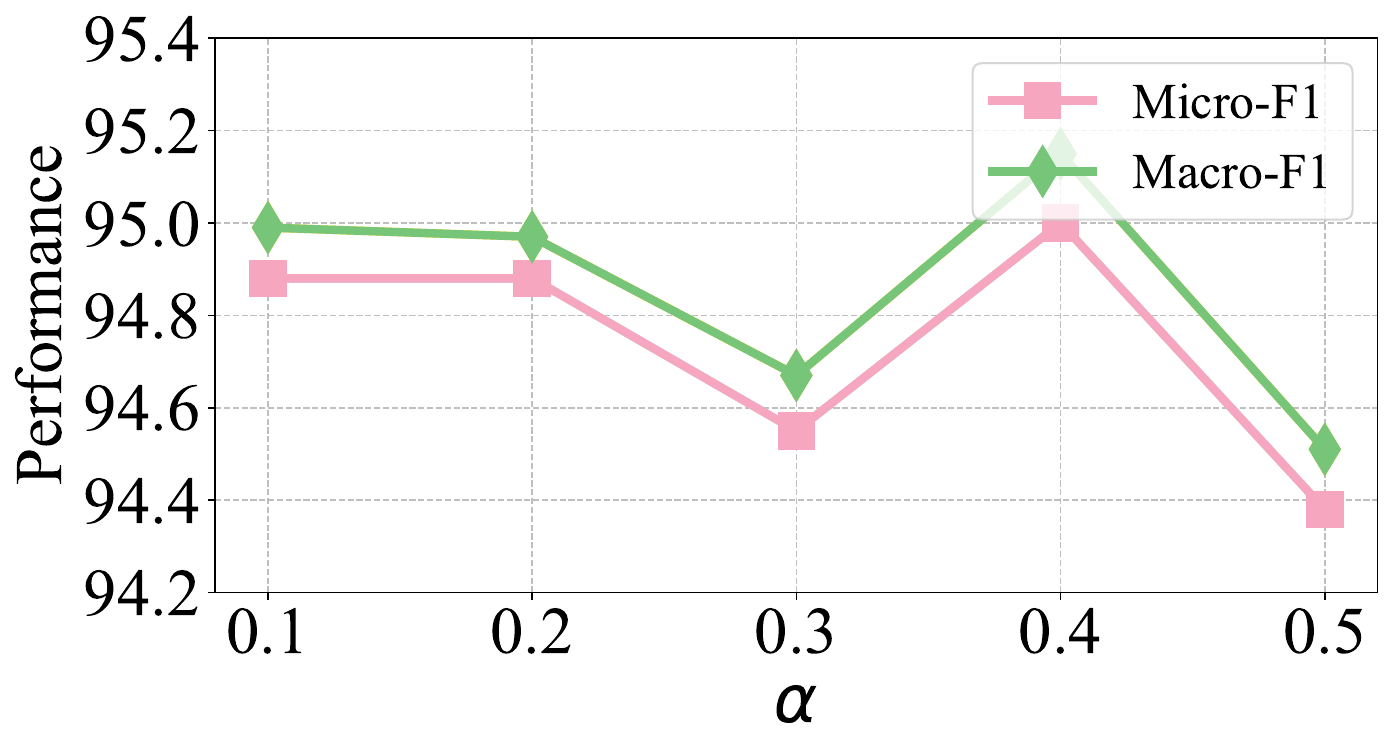}
        \label{fig:acm_alpha}
    \end{subfigure}
    \hfill
    \begin{subfigure}[b]{0.45\textwidth}
        \centering
        \includegraphics[width=\textwidth]{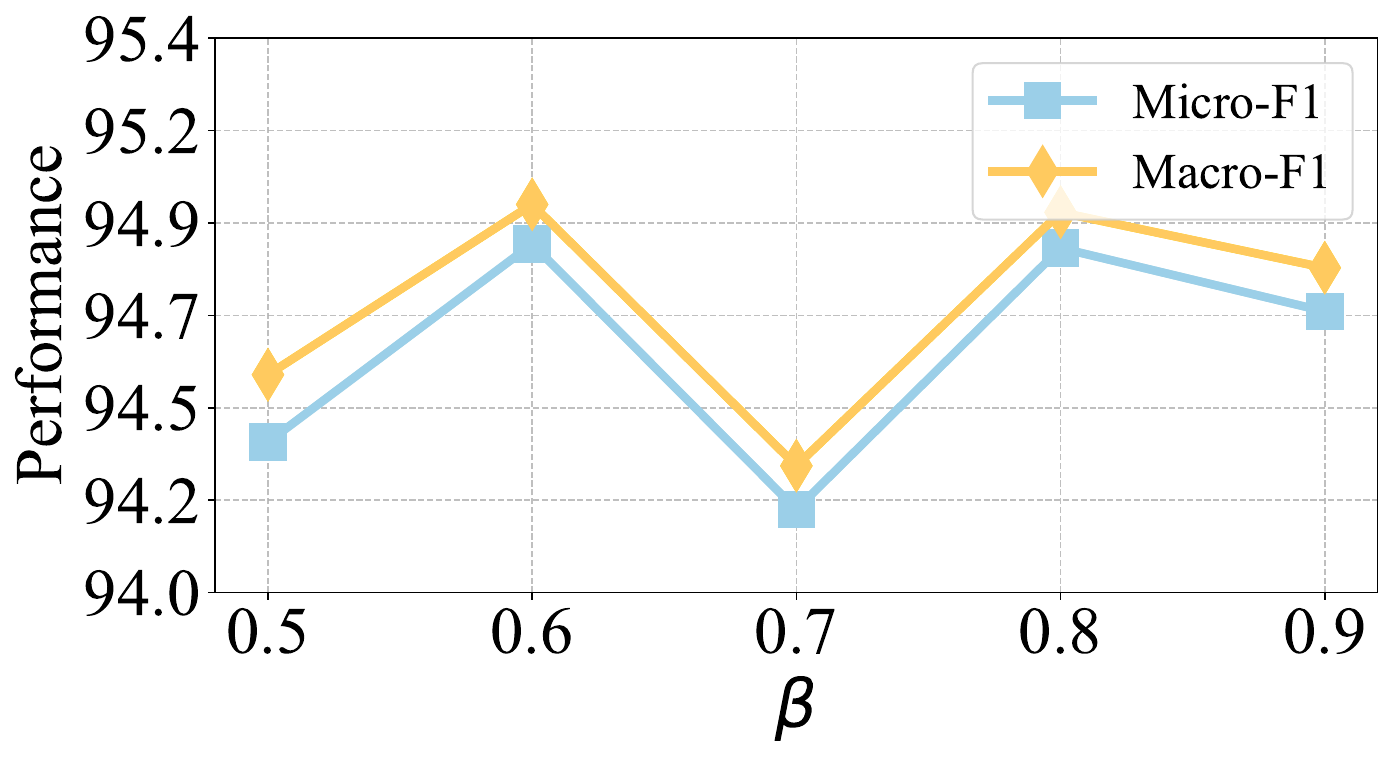}
        \label{fig:acm_beta}
    \end{subfigure}
    
    \caption{Hyperparameter sensitivity of HeterSEED on DBLP (top), IMDB (middle), and ACM (bottom). For each dataset, the left plot shows the performance versus the decoupling coefficient $\alpha$, and the right plot shows the effect of the masked label rate $\beta$.}
    \label{fig:sensitivity_benchmark}
\end{figure*}

\begin{figure*}[htbp!]
    \centering

    \begin{subfigure}[b]{0.45\textwidth}
        \centering
        \includegraphics[width=\textwidth]{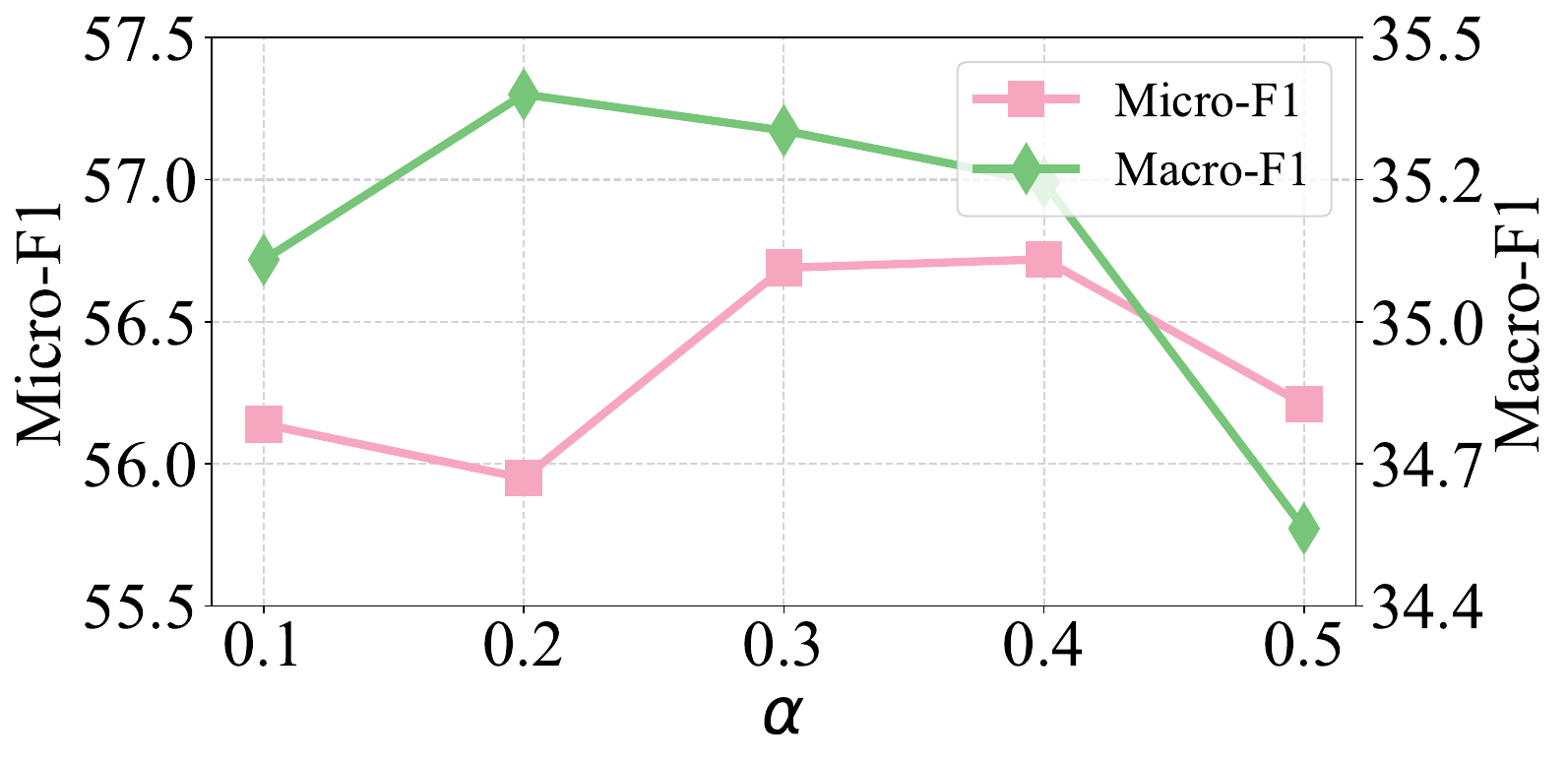}
        \label{fig:mag_alpha}
    \end{subfigure}
    \hspace{0.05\textwidth}
    \begin{subfigure}[b]{0.45\textwidth}
        \centering
        \includegraphics[width=\textwidth]{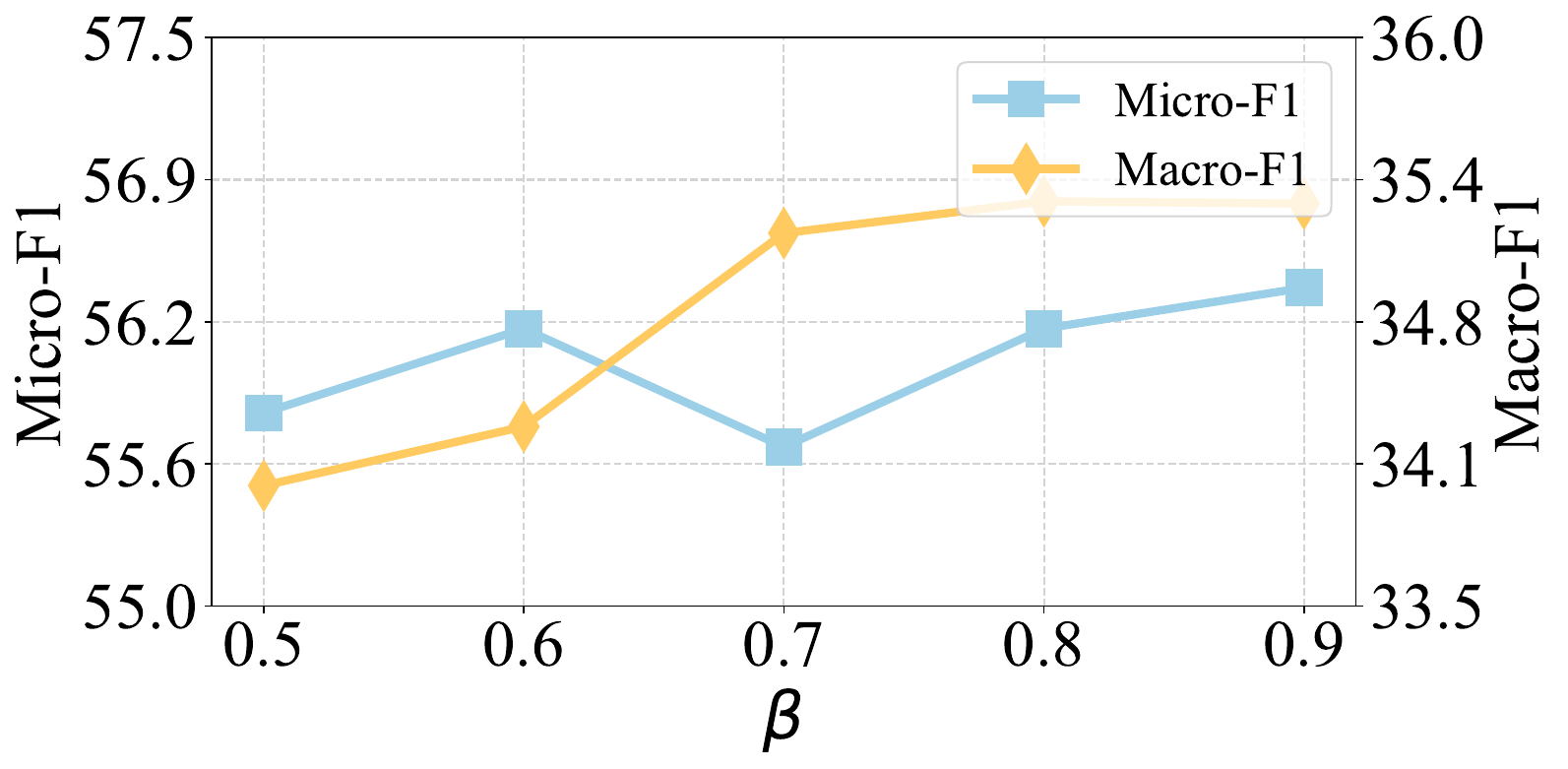}
        \label{fig:mag_beta}
    \end{subfigure}
\vspace{-2mm}
    \begin{subfigure}[b]{0.45\textwidth}
        \centering
        \includegraphics[width=\textwidth]{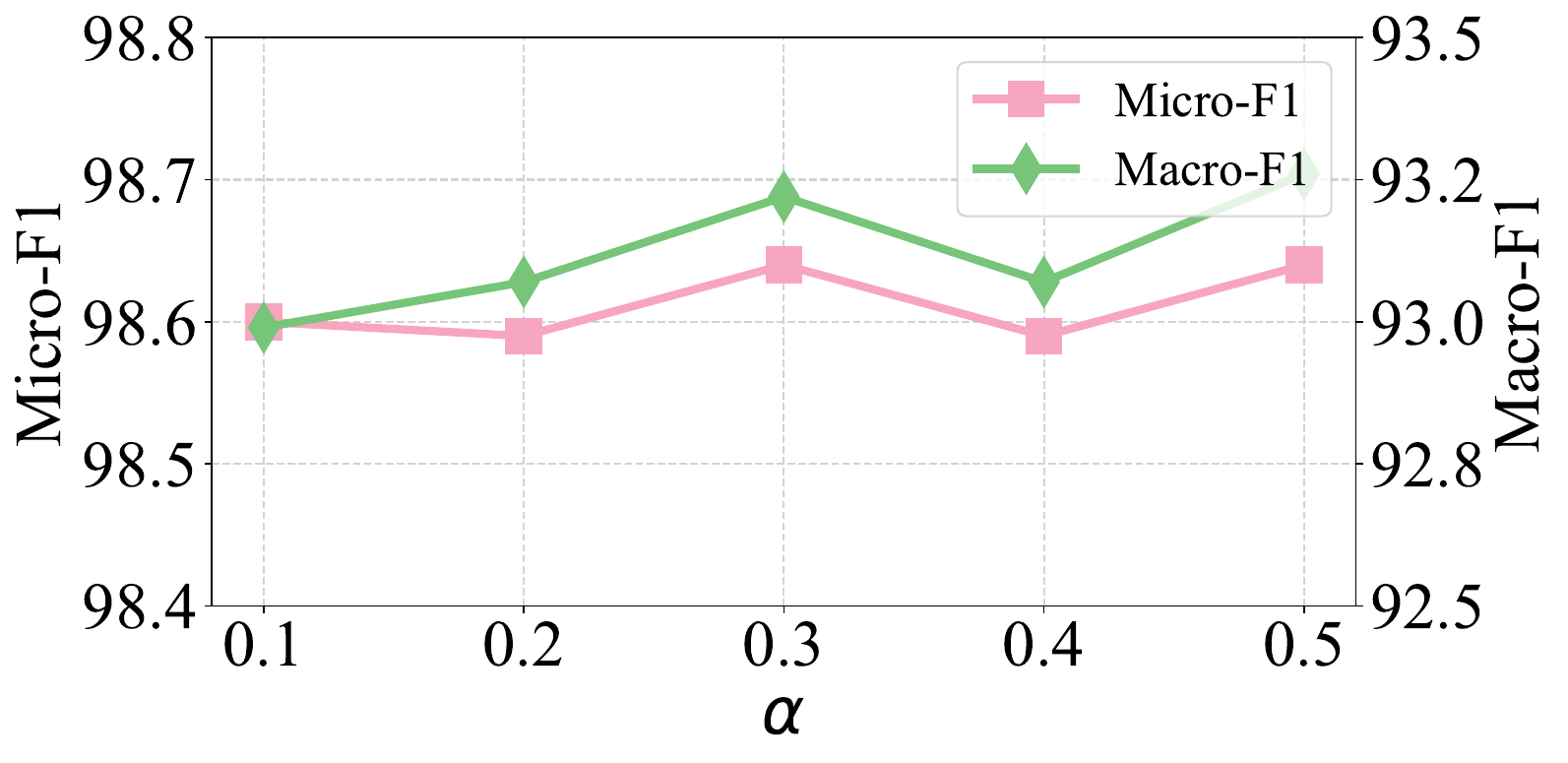}
        \label{fig:rcdd_alpha}
    \end{subfigure}
    \hspace{0.05\textwidth}
    \begin{subfigure}[b]{0.45\textwidth}
        \centering
        \includegraphics[width=\textwidth]{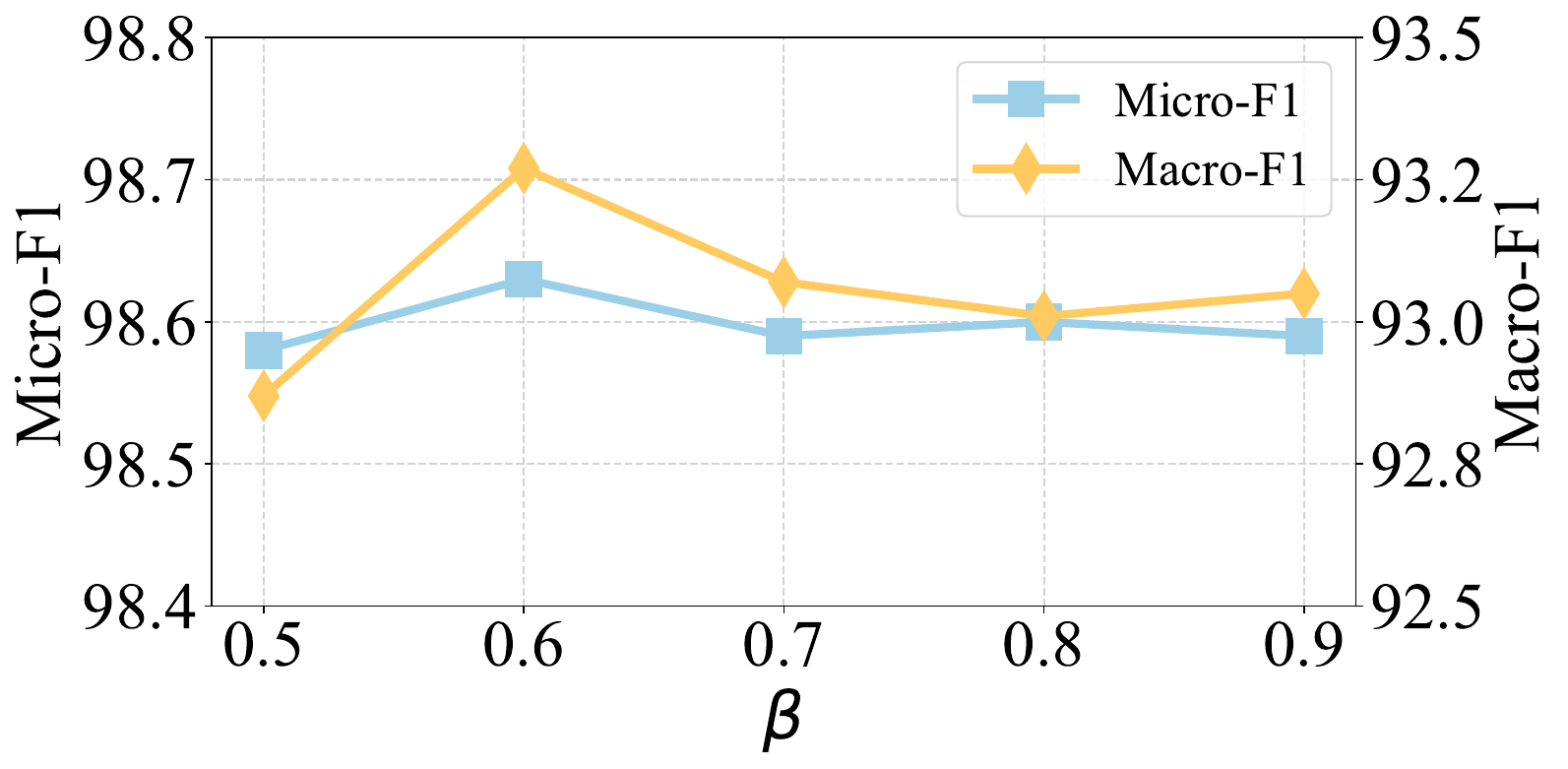}
        \label{fig:rcdd_beta}
    \end{subfigure}
    \vspace{-4mm}
    \caption{Hyperparameter sensitivity of HeterSEED on the large-scale datasets: MAG (top) and RCDD (bottom). The left plot reports Micro-F1 and Macro-F1 versus the decoupling coefficient $\alpha$, and the right plot reports the effect of the masked label rate $\beta$.}
    \label{fig:sensitivity_large}
\end{figure*}

We systematically study hyperparameter sensitivity of HeterSEED on three representative heterogeneous datasets: DBLP, IMDB, and ACM, as shown in Figure~\ref{fig:sensitivity_benchmark}. In addition, we evaluate large-scale datasets in Figure~\ref{fig:sensitivity_large}. It is worth noting that on large-scale datasets, the value ranges of Micro-F1 and Macro-F1 differ significantly; therefore, Figure~\ref{fig:sensitivity_large} adopts dual y-axes to separately present the two metrics. In contrast, since both metrics fall within similar ranges on the benchmark datasets, Figure~\ref{fig:sensitivity_benchmark} uses a single shared y-axis for clarity. 

For each dataset, we analyze the effect of two key hyperparameters: the decoupling coefficient $\alpha$ in the objective function $\mathcal{L}_{\text{cls}} + \alpha \mathcal{L}_{\text{dec}}$, and the masked-label rate $\beta$ in the label-guided masking mechanism. Specifically, the left subfigures report the performance variation with respect to $\alpha$, while the right subfigures show the effect of $\beta$.

Across all datasets, HeterSEED exhibits low sensitivity to both $\alpha$ and $\beta$: performance curves remain smooth, and only minor fluctuations appear under extreme settings. These results are consistent with the observations in the main text and further indicate that HeterSEED is robust and reliable across heterogeneous graphs of different scales and structural characteristics.\vspace{-2mm}

\section{Limitations and Broader Impacts}
\label{app:limitations}

\noindent\textbf{Limitations.} HeterSEED is currently developed for semi-supervised node classification on heterogeneous graphs under heterophily, and its present formulation is most natural when a small set of informative symmetric metapaths is available. In particular, the structure-aware channel uses pseudo-labels to separate homophilic and heterophilic neighborhoods. Although the iterative pseudo-label updating strategy makes the framework reasonably robust in practice, its reliability can further benefit from confidence-aware pseudo-label refinement in extremely low-label or highly noisy settings. In addition, the current implementation assumes manually specified symmetric metapaths, which are standard and effective on the benchmarks considered here, but automatic metapath discovery or selection may further broaden applicability to domains with richer or less well-understood schemas. These considerations primarily delineate the current scope of the method and suggest natural directions for future work, including confidence-aware pseudo-label refinement, adaptive metapath learning, and extensions of semantics--structure decoupling to heterogeneous hypergraphs under heterophily.

\noindent\textbf{Broader Impacts.} The proposed HeterSEED framework advances heterogeneous graph representation learning by improving robustness to heterophily through explicit semantics--structure decoupling. This can benefit a range of applications, such as recommendation, social network analysis, and knowledge graph mining, where heterogeneous relations often involve both homophilic and heterophilic patterns. Beyond predictive performance, the explicit separation of semantic and structural channels, together with node-level adaptive fusion, may also provide a more transparent view of whether a prediction is supported primarily by local semantics or by higher-order structural evidence. As with other general-purpose graph learning methods, deployment in real systems should remain attentive to standard considerations such as data quality, sampling bias, privacy, fairness, and appropriate human oversight in high-impact settings. Overall, we view HeterSEED as a methodological contribution intended to broaden the modeling toolkit for heterogeneous graphs rather than a system tailored to any specific consequential decision-making pipeline.




\newpage

\end{document}